\newcommand{\vertiii}[1]{{\left\vert\kern-0.25ex\left\vert\kern-0.25ex\left\vert #1 
		\right\vert\kern-0.25ex\right\vert\kern-0.25ex\right\vert}}
\newtheorem{theorem}{Theorem}
\newtheorem{definition}{Definition}
\newtheorem{lemma}{Lemma}
\newtheorem{remark}{Remark}
\newtheorem{corollary}{Corollary}
\newtheorem{example}{Example}
\global\long\def\H2{\mathcal{H}_2}
\global\long\def\E1{\mathcal{E}_1}
\begin{document}

\title{\bf Graphical Lasso and Thresholding: Equivalence and Closed-form Solutions}
\author{Salar Fattahi and Somayeh Sojoudi
\thanks{Salar Fattahi is with the Department of Industrial Engineering and Operations Research, University of California, Berkeley. Somayeh Sojoudi is with the Departments of Electrical Engineering and Computer Sciences and Mechanical Engineering as well as the Tsinghua-Berkeley Shenzhen Institute, University of California, Berkeley. This work was supported by the ONR Award N00014-18-1-2526, NSF Award 1808859 and AFSOR Award FA9550-19-1-0055.}}
\date{}
\maketitle

\begin{abstract}
Graphical Lasso (GL) is a popular method for learning the structure of an undirected  graphical model, which is based on an $l_1$ regularization technique. 
{The objective of this paper is to compare the computationally-heavy GL technique  with a numerically-cheap heuristic method that is based on simply thresholding the sample covariance matrix.}
To this end, two notions of sign-consistent and inverse-consistent matrices are developed, and then it is shown that the thresholding and GL methods are equivalent if: (i) the thresholded sample covariance matrix is both sign-consistent and inverse-consistent, and (ii) the gap between the largest thresholded and the smallest un-thresholded entries of the sample covariance matrix is not too small. By building upon this result, it is proved that the GL method---as a conic optimization problem---has an explicit closed-form solution if the thresholded sample covariance matrix has an acyclic structure. This result is then generalized to arbitrary sparse support graphs, where a formula is found to obtain an approximate solution of GL. {Furthermore, it is shown that the approximation error of the derived explicit formula decreases exponentially fast with respect to the length of the minimum-length cycle of the sparsity graph. 
	The developed results are demonstrated on synthetic data, functional MRI data, traffic flows for transportation networks, and massive randomly generated data sets. We show that the proposed method can obtain an accurate approximation of the GL for instances with the sizes as large as $80,000\times 80,000$ (more than 3.2 billion variables) in less than 30 minutes on a standard laptop computer running MATLAB, while other state-of-the-art methods do not converge within 4 hours.}
\end{abstract}

\vspace{2mm}

\section{Introduction}
There  has been a pressing need in developing new and efficient computational methods to analyze and learn the characteristics of high-dimensional data with a structured or randomized nature. Real-world data sets are often overwhelmingly complex, and therefore it is important to obtain a simple description of  the data that can be processed efficiently. In an effort to address this problem, there has been a great deal of interest in {sparsity-promoting} techniques for large-scale optimization problems ~\cite{Coleman90, Bach11, Benson}. These techniques have become essential to the tractability of big-data analyses in many applications, including data mining ~\cite{Garcke01, Muth05, Wu14}, pattern recognition~\cite{Wright10, Qiao10}, human brain functional connectivity~\cite{Sojoudi14}, distributed controller design~\cite{Fardad11, SODC2016}, and compressive sensing~\cite{Candes07, Simon13}. Similar approaches have been used to arrive at a parsimonious estimation of high-dimensional data. However, most of the existing statistical learning techniques in data analytics are contingent upon the availability of a {sufficient} number of samples (compared to the number of parameters), which is difficult to satisfy for many applications~\cite{buhlmann2011statistics, fan2010selective}. To remedy the aforementioned issues, a special attention has been paid to the augmentation of these problems with  sparsity-inducing penalty functions  to obtain sparse and easy-to-analyze solutions. 

Graphical lasso (GL) is one of the most commonly used techniques for estimating the inverse covariance matrix~\cite{friedman2008sparse, banerjee2008model, yuan2007model}.
GL is an optimization problem that shrinks the elements of the inverse covariance matrix towards zero compared to the maximum likelihood estimates, using an $l_1$ regularization.
There is a large body of literature suggesting that the solution of GL is a good estimate for the unknown graphical model, under a suitable choice of the regularization parameter~\cite{friedman2008sparse, banerjee2008model, yuan2007model,liu2010stability, kramer2009regularized, danaher2014joint}. It is known that Graphical Lasso is computationally expensive for large-scale problems. An alternative computationally-cheap heuristic method for estimating graphical models is based on thresholding the sample covariance matrix.


{In this paper, we develop a mathematical framework to analyze the relationship between the GL and thresholding techniques. The paper~\cite{sojoudi2014equivalence111} offers a set of conditions for the equivalence of these two methods, and argues the satisfaction of these conditions in the case where  the regularization coefficient is large or equivalently a sparse graph is sought. 
	Although the conditions derived in~\cite{sojoudi2014equivalence111} shed light on the performance of the GL, they depend on the optimal solution of the GL and cannot be verified without solving the problem. Nonetheless, it is highly desirable to find conditions for the equivalence of the GL and thresholding that are directly  in terms of the sample covariance matrix.} To this end,  two notions of \emph{sign-consistent} and \emph{inverse-consistent} matrices are introduced, and their properties are studied for different types of matrices. It is then shown that the GL and thresholding are equivalent if three conditions are satisfied. The first condition requires a certain matrix formed based on the sample covariance matrix {to have a positive-definite completion}. The second condition requires this matrix to be sign-consistent and inverse-consistent. The third condition needs a separation  between the largest thresholded and the smallest un-thresholded entries of the sample covariance matrix. These conditions can be easily verified for acyclic graphs and are expected to hold for sparse graphs. By building upon  these results,  an explicit closed-form solution is obtained for the GL method in the case where  the thresholded sample covariance matrix has an acyclic support graph. {Furthermore, this result is generalized to sparse support graphs to derive a closed-form formula that can serve either as an approximate solution of the GL or the optimal solution of the GL with a perturbed sample covariance matrix. The approximation error (together with  the corresponding perturbation in the sample covariance matrix) is shown to be related to the lengths of the cycles in the graph. }

The remainder of this paper is organized as follows. The main results are presented in Section~\ref{sec:main}, followed by numerical examples and case studies in Section~\ref{simulations}. Concluding remarks are drawn in Section~\ref{sec:con}. Most of the technical proofs are provided in Appendix.


{\bf Notations:} Lowercase, bold lowercase and uppercase letters are used for scalars, vectors and matrices, respectively (say $x, \mathbf{x}, X$). The symbols $\mathbb{R}^d$, $\mathbb{S}^d$ and $\mathbb{S}^d_+$ are used to denote the sets of $d\times 1$ real vectors, $d\times d$ symmetric matrices and $d\times d$ symmetric  positive-semidefinite matrices, respectively. The notations $\text{trace}(M)$ and $\log\det(M)$ refer to the trace and the logarithm of the determinant of a matrix $M$, respectively. The $(i,j)^{\text{th}}$ entry of the matrix $M$ is denoted by $M_{ij}$. Moreover, $I_d$ denotes the $d\times d$ identity matrix. The sign of a scalar $x$ is shown as $\text{sign}(x)$. The notations $|x|$, $\|M\|_1$ and $\|M\|_F$ denote the absolute value of the scalar $x$, the induced norm-1 and Frobenius norm of the matrix $M$, respectively. The inequalities $M\succeq 0$ and $M\succ 0$ mean that $M$ is positive-semidefinite and positive-definite, respectively. The symbol $\text{sign}(\cdot)$ shows the sign operator. The ceiling function is denoted as $\lceil\cdot\rceil$. The cardinality of a discrete set $\mathcal D$ is denoted as $|\mathcal D|_0$. Given a matrix $M\in\mathbb{S}^d$, define 
\begin{align}\notag
&\|M\|_{1,\mathrm{off}} = \sum_{i=1}^{d}\sum_{j=1}^{d}|M_{ij}|-\sum_{i=1}^{d}|M_{ii}|,\\
&\|M\|_{\max} = \max_{i\not=j}|M_{ij}|.\notag
\end{align}

\vspace{2mm}

\begin{definition}
	Given a symmetric matrix $S\in\mathbb S^d$, the {\bf support graph or sparsity graph} of $S$ is defined as a graph with the vertex set $\mathcal V:=\{1,2,...,d\}$ and the edge set $\mathcal E\subseteq\mathcal V\times \mathcal V$ such that $(i,j)\in\mathcal V$ if and only if $S_{ij}\neq 0$, for every two different vertices $i,j\in\mathcal V$. The support graph of $S$ captures the sparsity pattern of the matrix $S$ and is denoted as $\mathrm{supp}(S)$.
\end{definition}

\vspace{2mm}

\begin{definition}
	Given a graph $\mathcal G$, define $\mathcal G^{(c)}$ as the complement of $\mathcal G$, which is obtained by removing the existing edges of $\mathcal G$ and drawing an edge between every two vertices of $\mathcal G$ that were not originally connected.
\end{definition}

\vspace{2mm}

\begin{definition}
	Given two graphs $\mathcal G_1$ and $\mathcal G_2$ with the same vertex set, $\mathcal G_1$ is called a subgraph of $\mathcal G_2$ if the edge set of $\mathcal G_1$ is a subset of the edge set of $\mathcal G_2$. The notation $\mathcal G_1\subseteq \mathcal G_2$ is used to denote this inclusion.
\end{definition}

{Finally, a symmetric matrix $M$ is said to have a \textbf{positive-definite completion} if there exists a positive-definite $\tilde{M}$ with the same size such that $\tilde{M}_{ij} = {M}_{ij}$ for every $(i,j)\in\mathrm{supp}(M)$.}

\vspace{2mm}


\section{Problem Formulation}

Consider a random vector $\bold x=(x_1,x_2,...,x_d)$ with a  multivariate normal distribution. Let $\Sigma_*\in\mathbb S^d_+$ denote the covariance matrix associated with the vector $\bold x$. The  inverse of the covariance matrix can be used to determine the conditional independence between the  random variables $x_1,x_2,...,x_d$. In particular, if the $(i,j)^{\text{th}}$ entry of $\Sigma_*^{-1}$  is zero for two disparate indices $i$ and $j$, then $x_i$ and $x_j$ are conditionally independent given the rest of the variables. The graph $\mathrm{supp}\big(\Sigma_*^{-1}\big)$ (i.e., the sparsity graph of $\Sigma_*^{-1}$) represents a graphical model  capturing the conditional independence between the elements of $\bold x$. Assume that $\Sigma_*$ is nonsingular and that $\mathrm{supp}\big(\Sigma_*^{-1}\big)$ is a sparse graph. Finding this graph is cumbersome in practice because the exact covariance matrix $\Sigma_*$ is rarely known. More precisely, $\mathrm{supp}\big(\Sigma_*^{-1}\big)$ should be constructed from a given sample covariance matrix (constructed from $n$ samples), as opposed to $\Sigma_*$. Let $\Sigma$ denote an arbitrary $d\times d$ positive-semidefinite matrix, which is provided as an estimate of $\Sigma_*$. Consider the convex optimization problem
\begin{equation}
\label{eq1_o}
\min_{S\in\mathbb S^d_+} -\log\det(S)+\mathrm{trace}(\Sigma S).
\end{equation}
It is easy to verify that the optimal solution of the above problem is equal to $S^{\text{opt}}=\Sigma^{-1}$. However, there are two issues with this solution. First,  since the number of  samples available in many applications is  small or modest compared to the dimension of $\Sigma$, the matrix  $\Sigma$  is ill-conditioned or even singular. Under such circumstances, the equation $S^{\text{opt}}=\Sigma^{-1}$ leads to large or undefined entries for  the optimal solution of \eqref{eq1_o}. Second, although $\Sigma_*^{-1}$ is assumed to be sparse, a small random difference between  $\Sigma_*$ and $\Sigma$  would make $S^{\text{opt}}$ highly dense.
In order to address the aforementioned issues,  consider the   problem
\begin{equation}
\label{eq1}
\min_{S\in\mathbb S^d_+} -\log\det(S)+\mathrm{trace}(\Sigma S)+\lambda \|S\|_{1,\mathrm{off}},
\end{equation}
where $\lambda\in\mathbb R_+$ is a regularization parameter. This problem is referred to as {\it Graphical Lasso} (GL). Intuitively, the  term $\|S\|_{1,\mathrm{off}}$ in the objective function serves as a surrogate for promoting  sparsity among the off-diagonal entries of $S$, while ensuring that the problem is well-defined even with a singular input $\Sigma$. Henceforth, the notation $S^{\text{opt}}$ will be used to denote a solution of the GL instead of the unregularized optimization problem~\eqref{eq1_o}. 

Suppose that it is known {\it a priori} that the true graph $\mathrm{supp}\big(\Sigma_*^{-1}\big)$ has $k$ edges, for some given number $k$. With no loss of generality, assume that all nonzero off-diagonal entries of $\Sigma$ have different magnitudes. Two heuristic methods for finding an estimate of $\mathrm{supp}\big(\Sigma_*^{-1}\big)$ are as follows:
\begin{itemize}
	\item {\bf Graphical Lasso:} { We solve the optimization problem~\eqref{eq1} repeatedly for different values of $\lambda$ until a solution $S^{\text{opt}}$ with exactly $2k$ nonzero off-diagonal entries are found.}
	\item {\bf Thresholding:} Without solving any optimization problem, we simply identify those $2k$ entries of $\Sigma$ that have the largest magnitudes among all off-diagonal entries of $\Sigma$. {We then replace the remaining $d^2-d-2k$ off-diagonal entries of $\Sigma$ with zero and denote the thresholded sample covariance matrix as  $\Sigma_k$}. Note that $\Sigma$ and $\Sigma_k$ have the same diagonal entries.  Finally, we consider the sparsity graph of $\Sigma_k$, namely  $\mathrm{supp}(\Sigma_k)$, as an estimate for $\mathrm{supp}\big(\Sigma_*^{-1}\big)$.
\end{itemize}

\vspace{2mm}

\begin{definition}
	It is said that the {\bf sparsity structures of Graphical Lasso and thresholding are equivalent} if there exists a regularization coefficient $\lambda$ such that $\mathrm{supp}(S^{\text{opt}})=\mathrm{supp}(\Sigma_k)$.
\end{definition}

\vspace{2mm}

Recently, we have verified in several simulations that the GL and thresholding are equivalent for electrical circuits and functional MRI data of 20 subjects, provided  that $k$ is on the order of $n$~\cite{sojoudi2014equivalence111}. This implies that a simple thresholding technique would obtain the same sparsity structure as the computationally-heavy GL technique. {In this paper, it is aimed to understand under what conditions the easy-to-find graph $\mathrm{supp}(\Sigma_k)$ is equal to the hard-to-obtain graph $\mathrm{supp}(S^{\text{opt}})$, without having to solve the GL.} 
Furthermore, we will show that the GL problem has a simple closed-form solution that can be easily derived merely based on the thresholded sample covariance matrix, provided that its underlying graph has an acyclic structure. This result will then be  generalized to obtain an {approximate} solution for the GL in the case where the thresholded sample covariance matrix has  an arbitrary sparsity structure. This closed-form solution converges to the exact solution of the GL as the length of the minimum-length cycle in the support graph of  the thresholded sample covariance matrix grows. 
The derived closed-form solution can be used for two purposes: (1) as a surrogate to the exact solution of the computationally heavy GL problem, and  (2)  as an initial point for common numerical algorithms to numerically solve the GL (see \cite{friedman2008sparse, Hsieh14}).
The above results unveil fundamental properties of the GL in terms of sparsification and  computational complexity. Although conic optimization problems almost never benefit from an exact or inexact explicit formula for their solutions and should be solved numerically, the formula obtained in this paper suggests that sparse GL and related graph-based conic optimization problems may fall into the category of  problems with closed-form solutions  (similar to least squares problems). 

\section{Main Results} \label{sec:main}

In this section, we present the main results of the paper. In order to streamline the presentation, most of the technical proofs are postponed to Appendix. 

\subsection{Equivalence of GL and Thresholding}

{In this subsection, we derive sufficient conditions to guarantee that the GL and thresholding methods result in the same sparsity graph. These conditions are only dependent on $\lambda$ and $\Sigma$, and are expected to hold whenever $\lambda$ is large enough or a sparse graph is sought.}

\vspace{2mm}

\begin{definition} \label{def:dd1}
	
	A  matrix $M\in\mathbb S^d$ is called {\bf inverse-consistent} if there exists a matrix $N\in\mathbb S^d$ with zero diagonal elements such that
	\begin{subequations}
		\begin{align}\notag
		&M+N\succ 0,\\
		&\mathrm{supp}(N)\subseteq\left(\mathrm{supp}(M)\right)^{(c)},\notag\\
		&  \mathrm{supp}\left((M+N)^{-1})\right)\subseteq\mathrm{supp}(M).\notag
		\end{align}
	\end{subequations}
	The matrix $N$ is called {\bf inverse-consistent complement } of $M$ and is denoted as $M^{(c)}$.
	
\end{definition}

\vspace{2mm}

{The next Lemma will shed light on the definition of inverse-consistency by introducing an important class of such matrices that satisfy this property, {namely the \textit{set of matrices with positive-definite completions}.} 
	
	\vspace{2mm}
	
	\begin{lemma} \label{lemma:ll1}
		{Any arbitrary matrix with positive-definite completion} is inverse-consistent and has a unique inverse-consistent complement.
	\end{lemma}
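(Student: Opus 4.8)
The plan is to take the positive-definite completion as a starting point and then optimize it to produce the inverse-consistent complement. Let $M\in\mathbb{S}^d$ have a positive-definite completion, i.e. there exists $\tilde M\succ 0$ with $\tilde M_{ij}=M_{ij}$ for all $(i,j)\in\mathrm{supp}(M)$ (including the diagonal). First I would consider the optimization problem
\[
\max_{\,X\succ 0}\ \log\det(X)\quad\text{subject to}\quad X_{ij}=M_{ij}\ \text{ for all }(i,j)\in\mathrm{supp}(M)\cup\{(i,i):i=1,\dots,d\}.
\]
This problem is feasible (because $\tilde M$ is feasible), the objective is strictly concave on the positive-definite cone, and the feasible set is such that $\log\det$ is coercive (it blows down to $-\infty$ as $X$ approaches the boundary of the PSD cone while the constrained entries stay fixed, and the free entries cannot escape to infinity without violating positive-definiteness). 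Hence a unique maximizer $X^\star$ exists. Define $N:=X^\star-M$. By construction $N$ has zero diagonal and $\mathrm{supp}(N)\subseteq(\mathrm{supp}(M))^{(c)}$, and $M+N=X^\star\succ 0$, so the first two bullets of Definition~\ref{def:dd1} hold.

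The key step is the third bullet: $\mathrm{supp}((M+N)^{-1})\subseteq\mathrm{supp}(M)$. This follows from the first-order optimality (KKT) conditions of the maximization problem. The free variables are exactly the entries $X_{ij}$ with $(i,j)\in(\mathrm{supp}(M))^{(c)}$, $i\neq j$; stationarity with respect to each such free entry forces the corresponding entry of the gradient of $\log\det(X)$ to vanish. Since $\nabla\log\det(X)=X^{-1}$, this says precisely that $(X^\star)^{-1}_{ij}=0$ for every off-diagonal pair not in $\mathrm{supp}(M)$, i.e. $\mathrm{supp}((X^\star)^{-1})\subseteq\mathrm{supp}(M)$. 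This is the classical maximum-entropy / maximum-determinant matrix completion fact, and it is the part I expect to require the most care — one has to argue the constraint qualification is trivially satisfied (the constraints are linear and the feasible set has nonempty interior relative to the affine subspace) and handle the bookkeeping of which entries are fixed versus free.

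For uniqueness of the inverse-consistent complement, suppose $N_1$ and $N_2$ are both inverse-consistent complements of $M$. Then $M+N_1$ and $M+N_2$ are both positive-definite matrices agreeing with $M$ on $\mathrm{supp}(M)$ and having inverses supported on $\mathrm{supp}(M)$. I would show that any such matrix must coincide with the maximizer $X^\star$ above: if $Y\succ 0$ agrees with $M$ on $\mathrm{supp}(M)$ and $Y^{-1}$ is supported on $\mathrm{supp}(M)$, then for any other feasible $X$ of the optimization problem, the difference $X-Y$ is supported off $\mathrm{supp}(M)$ in its off-diagonal part and zero on the diagonal, hence $\mathrm{trace}(Y^{-1}(X-Y))=0$; combined with strict concavity of $\log\det$ this yields $\log\det(X)<\log\det(Y)$ unless $X=Y$, so $Y$ is the unique maximizer. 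Applying this to $Y=M+N_1$ and $Y=M+N_2$ gives $M+N_1=X^\star=M+N_2$, hence $N_1=N_2$. This shows both existence and uniqueness, completing the proof.
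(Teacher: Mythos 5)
Your proposal is correct and takes essentially the same route as the paper: the paper sets up the primal problem $\min_S \mathrm{trace}(MS)-\log\det(S)$ subject to $\mathrm{supp}(S)\subseteq\mathrm{supp}(M)$ together with its dual, which is exactly your maximum-determinant completion problem, and extracts the complement from the dual optimum via the same first-order condition $\nabla\log\det(X)=X^{-1}$. You work directly on the dual (max-det) side and argue existence via coercivity rather than via strong duality, and your uniqueness argument (any complement is a critical point of a strictly concave problem) is a mild rephrasing of the paper's (any complement yields the unique primal solution); both are sound.
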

	
	{\noindent \textit{\bf Proof:} 
		Consider the optimization problem
		\begin{subequations}
			\label{eq_p60}
			\begin{align}
			\label{eq_p60a}
			\min_{S\in\mathbb S^n} \qquad &\mathrm{trace}(MS)-\mathrm{logdet}(S)\\
			\label{eq_p6b}
			\text{subject to}\quad & S_{ij}=0,\qquad \forall (i,j)\in (\mathrm{supp}(M))^{(c)}\\
			& S\succeq 0,
			\end{align}
		\end{subequations}
		and its dual
		\begin{subequations}
			\label{eq_p601}
			\begin{align}
			\max_{\Pi\in\mathbb S^n} \qquad &\text{det}(M+\Pi)\\
			\text{subject to}\quad & M+\Pi\succeq 0 \\
			& \rm{supp}(\Pi)\subseteq (\mathrm{supp}(M))^{(c)}\\
			& \Pi_{ii}=0,\qquad i=1,...,d.
			\end{align}
		\end{subequations}
		Note that $\Pi_{ij}$ is equal to the Lagrange multiplier for~\eqref{eq_p6b} and every $(i,j)\in  (\mathrm{supp}(M))^{(c)}$, and is zero otherwise. 
		{Since the matrix $M$ has a positive-definite completion, the dual problem is strictly feasible.} Moreover, $S=I_d$ is a feasible solution of~\eqref{eq_p60}. Therefore, strong duality holds and the primal solution is attainable. On the other hand, the objective function~\eqref{eq_p60a} is strictly convex, which makes the solution of the primal problem unique. Let $S^{\text{opt}}$ denote the globally optimal solution of~\eqref{eq_p60}. It follows from the first-order optimality conditions that
		\begin{equation}\notag
		S^{\text{opt}}=(M+\Pi^{\text{opt}})^{-1}.
		\end{equation}
		This implies that
		\begin{subequations}
			\begin{align}
			&\mathrm{supp}(\Pi^{\text{opt}})\subseteq(\mathrm{supp}(M))^{(c)}\notag\\
			& \mathrm{supp}((M+\Pi^{\text{opt}})^{-1})\subseteq\mathrm{supp}(M)\notag\\
			&M+\Pi^{\text{opt}}\succ 0.\notag
			\end{align}
		\end{subequations}
		As a result, $M\in\mathbb S^d$  is inverse-consistent and $\Pi^{\text{opt}}$ is its complement. To prove the uniqueness of the inverse-consistent complement of $M$, let $\Pi$ denote an arbitrary complement of $M$. It follows from Definition~\ref{def:dd1} and the first-order optimality conditions that $(M+\Pi)^{-1}$ is a solution of~\eqref{eq_p60}. Since $S^{\text{opt}}$  is the unique solution of~\eqref{eq_p60}, it can be concluded that $\Pi=\Pi^{\text{opt}}$. This implies that $M$ has a unique inverse-consistent complement.~\hfill$\blacksquare$}
	
	\vspace{2mm}
	
	{\begin{remark}
			Two observations can be made based on Lemma~\ref{lemma:ll1}. First, the positive-definiteness of a matrix is sufficient to guarantee that it belongs to the cone of matrices with positive-definite completion. Therefore, positive-definite matrices are inverse-consistent. Second, upon existence, the inverse-consistent complement of a matrix with positive-definite completion is equal to the difference between the matrix and its unique \textbf{maximum determinant completion}. 
	\end{remark}}
	
	\vspace{2mm}
	
	\begin{definition}  \label{def:dd2}
		
		{An inverse-consistent matrix $M$} is called {\bf sign-consistent} if the $(i,j)$ entries of $M$ and $(M+M^{(c)})^{-1}$ are nonzero and have opposite signs for every $(i,j)\in\mathrm{supp}(M)$.
		
	\end{definition}
	
	\vspace{2mm}

	\begin{example}[{\bf An inverse- and sign-consistent matrix}] {\rm To illustrate Definitions~\ref{def:dd1} and \ref{def:dd2}, consider the matrix
			\begin{equation}\notag
			M=\left[\begin{array}{cccc}
			1& 0.3 &0  & 0\\
			0.3 & 1 & -0.4 &0\\
			0 & -0.4 & 1 & 0.2\\
			0 & 0 & 0.2 & 1
			\end{array}\right].
			\end{equation}
			The graph $\mathrm{supp}(M)$ is a path graph with the vertex set $\{1,2,3,4\}$ and the edge set $\{(1,2),(2,3),(3,4)\}$. To show that $M$ is inverse-consistent, let the matrix $M^{(c)}$ be chosen as
			\begin{equation}\notag
			M^{(c)}=\left[\begin{array}{cccc}
			0& 0 &-0.120  & -0.024\\
			0& 0& 0 &-0.080\\
			-0.120 & 0 & 0 & 0\\
			-0.024 & -0.080 & 0 &0
			\end{array}\right].
			\end{equation}
			The inverse matrix $(M+M^{(c)})^{-1}$ is equal to
			\begin{equation}\notag
			\left[\begin{array}{cccc}
			\frac{1}{0.91}& \frac{-0.3}{0.91} &0  & 0\\
			\frac{-0.3}{0.91} & 1+\frac{0.09}{0.91}+\frac{0.16}{0.84} & \frac{0.4}{0.84} &0\\
			0 & \frac{0.4}{0.84} & 1+\frac{0.16}{0.84}+\frac{0.04}{0.96}  & \frac{-0.2}{0.96}\\
			0 & 0 & \frac{-0.2}{0.96} & \frac{1}{0.96}
			\end{array}\right].
			\end{equation}
			Observe that: 
			\begin{itemize}
				\item $M$ and $M+M^{(c)}$ are both positive-definite.
				\item The sparsity graphs of $M$ and $M^{(c)}$ are complements of each other.
				\item The sparsity graphs of $M$ and $(M+M^{(c)})^{-1}$ are identical.
				\item  The nonzero off-diagonal entries of $M$ and $(M+M^{(c)})^{-1}$ have opposite signs. 
			\end{itemize}
			The above  properties imply that $M$ is both  inverse-consistent and sign-consistent, and $M^{(c)}$ is its complement.}
	\end{example}
	
	\begin{definition}
		\label{def:dd3}
		Given a graph $\mathcal G$ and a scalar $\alpha$, define  $\beta(\mathcal G,\alpha)$ as the maximum of $\|M^{(c)}\|_{\max}$ over {all matrices $M$ with positive-definite completions and} with the diagonal entries all equal to 1  such that $\mathrm{supp}(M)=\mathcal G$ and  $\|M\|_{\max}\leq \alpha$. 
	\end{definition}
	
	\vspace{2mm}
	
	{Consider the dual solution $\Pi^{\text{opt}}$ introduced in the proof of Lemma~\ref{lemma:ll1} and note that it is a function of $M$. Roughly speaking, the function $\beta(\mathcal G,\alpha)$ in the above definition provides an upper bound on $\|\Pi^{\text{opt}}\|_{\max}$ {over all matrices $M$ with positive-definite completions and} with the diagonal entries equal to 1  such that $\mathrm{supp}(M)=\mathcal G$ and  $\|M\|_{\max}\leq \alpha$.
		As will be shown later, this function will be used as a \textit{certificate} to verify the optimality conditions for the GL.}
	
	Since $\Sigma_*$ is non-singular and we have a finite number of samples, the elements of the upper triangular part of $\Sigma$ (excluding its diagonal elements) are all nonzero and distinct with probability one. Let $\sigma_1,\sigma_2,...,\sigma_{d(d-1)/2}$ denote the absolute values of those upper-triangular entries such that
	\begin{equation}\notag
	\sigma_1>\sigma_2>...>\sigma_{d(d-1)/2}>0.
	\end{equation}
	
	\begin{definition}
		\label{def:kkka}
		Consider an arbitrary positive regularization parameter $\lambda$ that  does not belong to the discrete set $\{\sigma_1,\sigma_2,...,\sigma_{d(d-1)/2}\}$.  Define the index $k$ associated with $\lambda$ as 
		an integer number satisfying the relation $\lambda \in (\sigma_{k+1},\sigma_{k})$. If $\lambda$ is greater than $\sigma_1$, then $k$ is set to 0.
	\end{definition}
	Throughout this paper, the index $k$  refers to the number introduced in Definition \ref{def:kkka}, which depends on $\lambda$.

	\vspace{2mm}
	
	{\begin{definition}\label{residue}
			Define the {\bf residue of $\Sigma$ relative to $\lambda$ } as a matrix $\Sigma^{\mathrm{res}}(\lambda)\in\mathbb S^d$ such that the $(i,j)$ entry of $\Sigma^{\mathrm{res}}(\lambda)$ is equal to 
			$
			\Sigma_{ij}-\lambda\times \mathrm{sign}(\Sigma_{ij})
			$
			if $i\not=j$ and $|\Sigma_{ij}|>\lambda$, and it is equal to 0 otherwise. Furthermore, define {\bf normalized residue of $\Sigma$ relative to $\lambda$} as
			\begin{equation}\notag
			\tilde{\Sigma}^{\mathrm{res}}(\lambda) = D^{-1/2}\times\Sigma^{\mathrm{res}}(\lambda)\times D^{-1/2},
			\end{equation}
			where $D$ is diagonal matrix with $D_{ii} = \Sigma_{ii}$ for every $i\in\{1,...,d\}$.
	\end{definition}}
	
	\vspace{2mm}
	
	{Notice that $\Sigma^{\mathrm{res}}(\lambda)$ is in fact the soft-thresholded sample covariance matrix with the threshold $\lambda$. }
	For notational simplicity, we will use $\Sigma^{\text{res}}$ or $\tilde{\Sigma}^{\mathrm{res}}$ instead of $\Sigma^{\text{res}}(\lambda)$ or $\tilde{\Sigma}^{\mathrm{res}}(\lambda)$ whenever the equivalence is implied by the context.
	One of the main theorems of this paper is presented below.\vspace{2mm}

	{\begin{theorem} \label{thm:tt1}
			The sparsity structures of the thresholding and GL methods are equivalent if the following conditions are satisfied: 
			\begin{itemize}
				\item {\bf Condition 1-i:} $I_d+\tilde\Sigma^{\mathrm{res}}$ has a positive-definite completion.
				\item {\bf Condition 1-ii:} $I_d+\tilde\Sigma^{\mathrm{res}}$ is sign-consistent.
				\item {\bf Condition 1-iii:} The relation
				\begin{equation}\notag
				\beta\left(\mathrm{supp}(\Sigma^{\mathrm{res}}),\|\tilde{\Sigma}^{\mathrm{res}}\|_{\max}\right)\leq \underset{
					\begin{subarray}{c}
					i\not=j\\
					|\Sigma_{ij}|\leq\lambda
					\end{subarray}
				} {\min}\frac{\lambda-|\Sigma_{ij}|}{\sqrt{\Sigma_{ii}\Sigma_{jj}}}
				\end{equation}
				holds.
			\end{itemize}
	\end{theorem}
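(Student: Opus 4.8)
The plan is to exhibit a single matrix $\hat S$, built from the normalized residue and its inverse‑consistent complement, and to verify that it satisfies the subgradient optimality condition of the GL problem~\eqref{eq1}; since that problem is strictly convex on $\mathbb S^d_+$ its minimizer is unique, so $\hat S=S^{\text{opt}}$, and $\hat S$ will be visibly supported on $\mathrm{supp}(\Sigma_k)$. Concretely, fix a $\lambda\notin\{\sigma_1,\dots,\sigma_{d(d-1)/2}\}$ for which Conditions 1-i--1-iii hold and let $k$ be the associated index, so that $\mathrm{supp}(\Sigma^{\mathrm{res}})=\mathrm{supp}(\Sigma_k)$. Set $M:=I_d+\tilde\Sigma^{\mathrm{res}}$; by Condition 1-i and Lemma~\ref{lemma:ll1}, $M$ is inverse‑consistent with a unique complement $M^{(c)}$, so that $M+M^{(c)}\succ 0$, $\mathrm{supp}(M^{(c)})\subseteq(\mathrm{supp}(M))^{(c)}$, and $\mathrm{supp}((M+M^{(c)})^{-1})\subseteq\mathrm{supp}(M)$. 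Let $D$ be the diagonal matrix with $D_{ii}=\Sigma_{ii}$ and define the candidate $\hat S:=D^{-1/2}(M+M^{(c)})^{-1}D^{-1/2}\succ 0$, whose inverse is $\hat S^{-1}=D^{1/2}(M+M^{(c)})D^{1/2}$.

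First I would record that $\mathrm{supp}(\hat S)=\mathrm{supp}(\Sigma_k)$: the inclusion $\mathrm{supp}(\hat S)\subseteq\mathrm{supp}(M)$ comes from inverse‑consistency, while Condition 1-ii (sign‑consistency) forces every $(i,j)$ entry of $(M+M^{(c)})^{-1}$ with $(i,j)\in\mathrm{supp}(M)$ to be nonzero, giving the reverse inclusion; off the diagonal $\mathrm{supp}(M)=\mathrm{supp}(\tilde\Sigma^{\mathrm{res}})=\mathrm{supp}(\Sigma^{\mathrm{res}})=\mathrm{supp}(\Sigma_k)$. Then I would verify the stationarity relation $\hat S^{-1}=\Sigma+\lambda Z$ for a valid subgradient $Z$ of $\|\cdot\|_{1,\mathrm{off}}$ (so $Z_{ii}=0$; $Z_{ij}=\mathrm{sign}(\hat S_{ij})$ whenever $\hat S_{ij}\neq 0$; $|Z_{ij}|\le 1$ otherwise), entry by entry: (a) on the diagonal, $\hat S^{-1}_{ii}=\Sigma_{ii}$ because $M_{ii}=1$ and $M^{(c)}_{ii}=0$; (b) for $(i,j)\in\mathrm{supp}(\Sigma^{\mathrm{res}})$, $M^{(c)}_{ij}=0$ gives $\hat S^{-1}_{ij}=\sqrt{\Sigma_{ii}\Sigma_{jj}}\,\tilde\Sigma^{\mathrm{res}}_{ij}=\Sigma^{\mathrm{res}}_{ij}=\Sigma_{ij}-\lambda\,\mathrm{sign}(\Sigma_{ij})$, and sign‑consistency gives $\mathrm{sign}(\hat S_{ij})=-\mathrm{sign}(M_{ij})=-\mathrm{sign}(\Sigma_{ij})$, so the identity holds with $Z_{ij}=\mathrm{sign}(\hat S_{ij})$; (c) for an off‑diagonal $(i,j)$ with $|\Sigma_{ij}|<\lambda$ we have $\hat S_{ij}=0$ and $M_{ij}=0$, hence $\hat S^{-1}_{ij}=\sqrt{\Sigma_{ii}\Sigma_{jj}}\,M^{(c)}_{ij}$, and I need $|\hat S^{-1}_{ij}-\Sigma_{ij}|\le\lambda$.

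Part (c) is where Definition~\ref{def:dd3} and Condition 1-iii are used. The matrix $M$ has diagonal entries $1$, support $\mathrm{supp}(\Sigma^{\mathrm{res}})$, a positive‑definite completion, and $\|M\|_{\max}=\|\tilde\Sigma^{\mathrm{res}}\|_{\max}$, so it is admissible in Definition~\ref{def:dd3} and therefore $|M^{(c)}_{ij}|\le\|M^{(c)}\|_{\max}\le\beta\big(\mathrm{supp}(\Sigma^{\mathrm{res}}),\|\tilde\Sigma^{\mathrm{res}}\|_{\max}\big)\le(\lambda-|\Sigma_{ij}|)/\sqrt{\Sigma_{ii}\Sigma_{jj}}$ by Condition 1-iii; the triangle inequality then gives $|\hat S^{-1}_{ij}-\Sigma_{ij}|\le\sqrt{\Sigma_{ii}\Sigma_{jj}}\,|M^{(c)}_{ij}|+|\Sigma_{ij}|\le\lambda$, so $|Z_{ij}|\le 1$. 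Having checked all three cases, $\hat S\succ 0$ satisfies the sufficient first‑order optimality condition of the convex program~\eqref{eq1}; by strict convexity the minimizer is unique, hence $S^{\text{opt}}=\hat S$ and $\mathrm{supp}(S^{\text{opt}})=\mathrm{supp}(\Sigma_k)$, which is the asserted equivalence.

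I expect the main obstacle to be precisely part (c): bounding the "spill‑over" off‑support entries $\hat S^{-1}_{ij}$ — which are generated entirely by $M^{(c)}$ — uniformly against the thresholding margins $\lambda-|\Sigma_{ij}|$ over all non‑edges; this is exactly what the certificate function $\beta(\cdot,\cdot)$ and Condition 1-iii are designed to deliver, and it is where one must be careful that the relevant $M$ really lies in the class over which $\beta$ is a maximum. A secondary point needing care is the sign bookkeeping in part (b), which relies on $\Sigma^{\mathrm{res}}_{ij}$ inheriting the sign of $\Sigma_{ij}$ (valid because $|\Sigma_{ij}|>\lambda$ on the support) together with the opposite‑sign clause of sign‑consistency; the remaining steps — invoking Lemma~\ref{lemma:ll1} and the uniqueness of the GL solution — are routine.
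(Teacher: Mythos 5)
Your proposal is correct and follows essentially the same route as the paper: construct the candidate from $M=I_d+\tilde\Sigma^{\mathrm{res}}$ and its inverse-consistent complement (via Lemma~\ref{lemma:ll1}), then verify the three KKT cases, using sign-consistency on the support and the $\beta$-bound of Condition 1-iii off the support. The only presentational difference is that you conjugate by $D^{1/2}$ directly and check the KKT conditions of the original GL, whereas the paper verifies them for the normalized GL and transfers back via Lemma~\ref{l26}; these are the same computation.
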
}
	
	\vspace{2mm}
	
	{A number of observations can be made based on Theorem~\ref{thm:tt1}. First note that, due to Lemma~\ref{lemma:ll1}, Condition~(1-i) guarantees that $I_d+\tilde\Sigma^{\mathrm{res}}$ is inverse-consistent; in fact it holds when $I_d+\tilde\Sigma^{\mathrm{res}}$ itself is positive-definite. Note that the positive-definiteness of $I_d+\tilde\Sigma^{\mathrm{res}}$ is guaranteed to hold if the eigenvalues of the normalized residue of the matrix  $\Sigma$ relative to $\lambda$ are greater than $-1$.} Recall that $\lambda\in(\sigma_{k+1}, \sigma_{k})$ for some integer $k$ and the off-diagonal entries of $I_d+\tilde\Sigma^{\mathrm{res}}$ are in the range $[-1,1]$.} In the case where the number $k$ is significantly smaller than $d^2$, the residue matrix has many zero entries. Hence, the satisfaction of Condition~(1-i) is expected for a large class of residue matrices; { this will be verified extensively in our case studies on the real-world and synthetically generated data sets. 
	Specifically, this condition is automatically satisfied if $I_d+\tilde\Sigma^{\mathrm{res}}$ is diagonally dominant.}
{Conditions~(1-ii) and~(1-iii) of Theorem~\ref{thm:tt1} are harder to check. These conditions depend on the support graph of the residue matrix $\tilde\Sigma^{\mathrm{res}}$ and/or how small the nonzero entries of $\tilde\Sigma^{\mathrm{res}}$ are. The next two lemmas further analyze these conditions to show that they are expected to be satisfied for large $\lambda$.}

\vspace{2mm}

\begin{lemma} \label{thm:tt4}
	Given an arbitrary graph $\mathcal G$, there is a strictly positive constant number $\zeta(\mathcal G)$  such that
	\begin{equation}
	\label{eq_p5}
	\beta(\mathcal G,\alpha)\leq \zeta(\mathcal G) \alpha^2,\qquad \forall\ \alpha\in(0, 1)
	\end{equation}
	{and therefore, Condition (1-iii) is reduced to 
		\begin{equation}\notag
		\zeta(\mathrm{supp}(\Sigma^{\mathrm{res}}))\times\underset{
			\begin{subarray}{c}
			k\not=l\\
			|\Sigma_{kl}|>\lambda
			\end{subarray}
		} {\max}\left(\frac{|\Sigma_{kl}|-\lambda}{\sqrt{\Sigma_{kk}\Sigma_{ll}}}\right)^2\leq \underset{
			\begin{subarray}{c}
			i\not=j\\
			|\Sigma_{ij}|\leq\lambda
			\end{subarray}
		} {\min}\frac{\lambda-|\Sigma_{ij}|}{\sqrt{\Sigma_{ii}\Sigma_{jj}}}.
		\end{equation}}
\end{lemma}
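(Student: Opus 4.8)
The plan is to establish the bound $\beta(\mathcal G,\alpha)\leq\zeta(\mathcal G)\alpha^2$ by analyzing the map from a matrix $M$ with positive-definite completion to its inverse-consistent complement $M^{(c)}=\Pi^{\text{opt}}$, as characterized in the proof of Lemma \ref{lemma:ll1}. Fix the graph $\mathcal G$ and consider the compact set $\mathcal M_\alpha$ of symmetric matrices $M$ with unit diagonal, $\mathrm{supp}(M)=\mathcal G$ (or a subgraph, allowing the closure), $\|M\|_{\max}\leq\alpha$, and admitting a positive-definite completion. The first step is to argue that $\Pi^{\text{opt}}$ depends smoothly (or at least continuously and differentiably) on $M$ on a neighborhood of $\alpha=0$: at $M=I_d$ the dual optimum is $\Pi^{\text{opt}}=0$, strong duality and strict feasibility hold by Lemma \ref{lemma:ll1}, and the optimality system $S=(M+\Pi)^{-1}$, $S_{ij}=0$ on $\mathcal G^{(c)}$, $\Pi_{ij}=0$ on $\mathcal G$, $\Pi_{ii}=0$, can be viewed as a smooth system in $(M,\Pi)$ whose Jacobian in $\Pi$ at $(I_d,0)$ is invertible (the linearization is essentially the identity on the off-support coordinates since $d/dt\,(I+t\Pi)^{-1}|_{t=0}=-\Pi$). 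By the implicit function theorem, $\Pi^{\text{opt}}=\Phi(M)$ is $C^\infty$ near $M=I_d$ with $\Phi(I_d)=0$.

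The second step is a Taylor expansion of $\Phi$ around $I_d$. Write $M=I_d+E$ where $E$ has zero diagonal, $\mathrm{supp}(E)\subseteq\mathcal G$, and $\|E\|_{\max}\leq\alpha$. I claim the first-order term in $E$ vanishes: expanding $(M+\Pi)^{-1}=(I+E+\Pi)^{-1}\approx I-(E+\Pi)+(E+\Pi)^2-\cdots$ and imposing that the off-$\mathcal G$ entries vanish forces $\Pi$ to be $O(\|E\|^2)$, because at first order the off-support entries of $E$ are already zero (since $\mathrm{supp}(E)\subseteq\mathcal G$) so no first-order correction $\Pi$ is needed to kill them — the obstruction appears only at the quadratic order through the $(E+\Pi)^2$ term. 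Hence $\Phi(M)=O(\|E\|^2)$, and by compactness of the relevant closed bounded set and continuity of the second-derivative bound, there is a constant $\zeta(\mathcal G)$ with $\|\Phi(M)\|_{\max}\leq\zeta(\mathcal G)\|E\|_{\max}^2\leq\zeta(\mathcal G)\alpha^2$ uniformly for $\alpha\in(0,1)$. Taking the supremum over $\mathcal M_\alpha$ gives $\beta(\mathcal G,\alpha)\leq\zeta(\mathcal G)\alpha^2$. The reduction of Condition (1-iii) then follows by substituting this bound into the left-hand side of the inequality in Condition (1-iii), identifying $\alpha=\|\tilde\Sigma^{\mathrm{res}}\|_{\max}=\max_{k\neq l,\,|\Sigma_{kl}|>\lambda}(|\Sigma_{kl}|-\lambda)/\sqrt{\Sigma_{kk}\Sigma_{ll}}$, which lies in $(0,1)$ since the off-diagonal entries of $I_d+\tilde\Sigma^{\mathrm{res}}$ lie in $(-1,1)$.

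I expect the main obstacle to be making the "first-order term vanishes" claim fully rigorous and uniform. One delicate point is that $\beta(\mathcal G,\alpha)$ is a supremum over a set that is not obviously compact (the positive-definite-completion constraint is an open-ish condition, and matrices approaching the boundary could have large complements); the fix is to show that as long as $\|M\|_{\max}\leq\alpha<1$ with unit diagonal, $M$ stays uniformly bounded away from the boundary of the positive-definite-completion cone in the relevant sense — e.g. $I_d+E$ with $\|E\|_{\max}$ small is itself positive-definite, so the completion exists trivially and the optimal completion's entries stay controlled — allowing one to restrict to a compact neighborhood where $\Phi$ is smooth with bounded Hessian. An alternative, more elementary route that avoids the implicit function theorem is to bound $\|\Pi^{\text{opt}}\|_{\max}$ directly from the fixed-point equation $\Pi^{\text{opt}}=$ (off-$\mathcal G$ part of something quadratic in $E$ and $\Pi^{\text{opt}}$) via a contraction-mapping / Neumann-series estimate, which would yield an explicit $\zeta(\mathcal G)$ depending only on the combinatorial structure of $\mathcal G$; I would likely present whichever of these is cleaner, but both hinge on the same quadratic-vanishing phenomenon.
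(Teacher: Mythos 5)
Your proposal follows essentially the same route as the paper: the paper also works with the map $M\mapsto N(M)$ (your $\Phi$), invokes the inverse/implicit function theorem at $M=I_d$ to get a continuously differentiable dependence with $N(I_d)=0$, and exploits exactly the quadratic-vanishing phenomenon you describe --- the paper's Lemma~\ref{lemma:ll2} is precisely your ``alternative, more elementary'' Neumann-series bootstrap, showing that the off-support entries of $(M+M^{(c)})^{-1}$ force $M^{(c)}$ to equal the quadratic-and-higher remainder of the series, whence $\|M^{(c)}\|_1\lesssim\|M-I_d\|_1^2$. The paper actually combines the two ingredients rather than choosing between them: the IFT yields only a first-order Lipschitz bound $\|N(M)\|_1\leq\eta\|M-I_d\|_1$ near the identity, and Lemma~\ref{lemma:ll2} then upgrades this to the quadratic bound, so only $C^1$ regularity is needed rather than the uniform Hessian bound your Taylor-expansion version would require.

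The one genuine loose end in your write-up is the regime $\alpha\in(\alpha_0,1)$: both your compactness argument and your proposed fix (``$I_d+E$ with $\|E\|_{\max}$ small is itself positive-definite'') only operate in a neighborhood of the identity, so as written you obtain \eqref{eq_p5} only for $\alpha\leq\alpha_0$. The paper closes this by setting $\zeta(\mathcal G)$ to be the maximum of the small-$\alpha$ constant and $\sup_{\alpha\in(\alpha_0,1)}\beta(\mathcal G,\alpha)/\alpha^2$; the finiteness of the latter is not a compactness statement but follows from the crude bound $\beta(\mathcal G,\alpha)\leq 1$ valid for \emph{all} $\alpha<1$, since the inverse-consistent complement is the off-support part of the maximum-determinant completion, which is a positive-definite matrix with unit diagonal and hence has all off-diagonal entries of magnitude less than $1$. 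Adding that one observation makes your argument complete and uniform over $(0,1)$.
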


{\begin{lemma}\label{l_sign}
		Consider a { matrix $M$ with a positive-definite completion} and with unit diagonal entries. Define $\alpha = \|M\|_{\max}$ and $\mathcal{G} = \mathrm{supp}(M)$. There exist strictly positive constant numbers $\alpha_0(\mathcal{G})$ and $\gamma(\mathcal{G})$ such that $M$ is sign-consistent if $\alpha\leq\alpha_0(\mathcal{G})$ and the absolute value of the off-diagonal nonzero entries of $M$ is lower bounded by $\gamma(\mathcal{G})\alpha^2$. This implies that  Condition (i-ii) is satisfied if $\|\tilde{\Sigma}^{\mathrm{res}}\|_{\max}\leq\alpha_0(\mathrm{supp}(\Sigma^{\mathrm{res}}))$ and 
		\begin{equation}\label{eq18}
		\gamma(\mathrm{supp}(\Sigma^{\mathrm{res}}))\times\underset{
			\begin{subarray}{c}
			k\not=l\\
			|\Sigma_{kl}|>\lambda
			\end{subarray}
		} {\max}\left(\frac{|\Sigma_{kl}|-\lambda}{\sqrt{\Sigma_{kk}\Sigma_{ll}}}\right)^2\leq \underset{
			\begin{subarray}{c}
			i\not=j\\
			|\Sigma_{ij}|>\lambda
			\end{subarray}
		} {\min}\frac{|\Sigma_{ij}|-\lambda}{\sqrt{\Sigma_{ii}\Sigma_{jj}}}.
		\end{equation}
\end{lemma}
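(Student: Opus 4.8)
The plan is to exhibit $M^{(c)}$ as the dual optimizer $\Pi^{\mathrm{opt}}$ from Lemma~\ref{lemma:ll1}, argue that $M+M^{(c)}$ is a small perturbation of $I_d$, and then read the signs of the off-diagonal entries of $(M+M^{(c)})^{-1}$ off a Neumann series. Since $M$ has a positive-definite completion, Lemma~\ref{lemma:ll1} gives that $M$ is inverse-consistent with a \emph{unique} complement $M^{(c)}$, that $\mathrm{supp}(M^{(c)})\subseteq(\mathrm{supp}(M))^{(c)}$, and that $M+M^{(c)}\succ 0$. By Definition~\ref{def:dd3} and Lemma~\ref{thm:tt4}, $\|M^{(c)}\|_{\max}\le\beta(\mathcal G,\alpha)\le\zeta(\mathcal G)\,\alpha^2$ for every $\alpha\in(0,1)$; we may assume $\alpha>0$, since otherwise $\mathcal G$ has no edges and sign-consistency is vacuous. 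Writing $M+M^{(c)}=I_d+E$, disjointness of the off-diagonal supports of $M$ and $M^{(c)}$ (together with their diagonals being $1$ and $0$) shows that $E$ has zero diagonal, that $E_{ij}=M_{ij}$ for every $(i,j)\in\mathrm{supp}(M)$, and that $\|E\|_{\max}\le\max\{\alpha,\zeta(\mathcal G)\alpha^2\}$, which is $\le\alpha$ once $\alpha\le 1/\zeta(\mathcal G)$.

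Next I would fix $\alpha_0(\mathcal G)$ small enough that $(d-1)\,\alpha_0(\mathcal G)<\tfrac12$ and $\alpha_0(\mathcal G)\le 1/\zeta(\mathcal G)$, so that for $\alpha\le\alpha_0(\mathcal G)$ a walk-counting estimate gives $|(E^m)_{ij}|\le(d-1)^{m-1}\alpha^m$ for all $i,j$ and all $m\ge1$ (an entry of $E^m$ is a sum of at most $(d-1)^{m-1}$ products of $m$ entries of $E$, since the zero diagonal of $E$ forbids immediate repetitions, and each product is at most $\alpha^m$). Hence $(I_d+E)^{-1}=\sum_{m\ge0}(-1)^mE^m$ converges entrywise, and for $(i,j)\in\mathrm{supp}(M)$,
\begin{equation}\notag
\big((M+M^{(c)})^{-1}\big)_{ij}=-M_{ij}+R_{ij},\qquad R_{ij}:=\sum_{m\ge2}(-1)^m(E^m)_{ij},
\end{equation}
with $|R_{ij}|\le\sum_{m\ge2}(d-1)^{m-1}\alpha^m=\dfrac{(d-1)\alpha^2}{1-(d-1)\alpha}\le 2(d-1)\alpha^2$. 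Choosing any constant $\gamma(\mathcal G)>2(d-1)$ (say $\gamma(\mathcal G)=2d$), the hypothesis $|M_{ij}|\ge\gamma(\mathcal G)\alpha^2$ forces $|M_{ij}|>|R_{ij}|$, so $\big((M+M^{(c)})^{-1}\big)_{ij}$ is nonzero and has the sign of $-M_{ij}$, i.e.\ the sign opposite to $M_{ij}$; by Definition~\ref{def:dd2}, $M$ is sign-consistent. Applying this with $M=I_d+\tilde\Sigma^{\mathrm{res}}$---for which $\alpha=\|\tilde\Sigma^{\mathrm{res}}\|_{\max}=\max_{k\ne l,\,|\Sigma_{kl}|>\lambda}(|\Sigma_{kl}|-\lambda)/\sqrt{\Sigma_{kk}\Sigma_{ll}}$ and the nonzero off-diagonal entries are $(|\Sigma_{ij}|-\lambda)/\sqrt{\Sigma_{ii}\Sigma_{jj}}$---turns the two hypotheses into exactly $\|\tilde\Sigma^{\mathrm{res}}\|_{\max}\le\alpha_0(\mathrm{supp}(\Sigma^{\mathrm{res}}))$ and inequality~\eqref{eq18}, which yields Condition~(1-ii).

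The step I expect to be the real obstacle is making the remainder $R_{ij}$ genuinely $O(\alpha^2)$ with a constant depending only on $\mathcal G$ (through $d$ and its edge set) and not on the particular matrix $M$: this simultaneously requires the quadratic-in-$\alpha$ bound on $\|M^{(c)}\|_{\max}$ supplied by Lemma~\ref{thm:tt4} and a uniform tail bound for the Neumann series, which is precisely what forces the quantitative choice of $\alpha_0(\mathcal G)$ guaranteeing convergence. Everything else is bookkeeping: identifying $M^{(c)}$ via Lemma~\ref{lemma:ll1}, and translating the abstract inequalities back to the statement about $\Sigma$ and $\lambda$.
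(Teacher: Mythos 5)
Your proof is correct and follows essentially the same route as the paper's: both identify $M^{(c)}$ via Lemma~\ref{lemma:ll1}, invoke Lemma~\ref{thm:tt4} to get $\|M^{(c)}\|_{\max}\le\zeta(\mathcal{G})\alpha^2$, expand $(M+M^{(c)})^{-1}=(I_d+E)^{-1}$ around the identity so that its $(i,j)$ entry equals $-M_{ij}$ plus an $O(\alpha^2)$ remainder with a graph-dependent constant, and conclude from the lower bound $|M_{ij}|\ge\gamma(\mathcal{G})\alpha^2$. The only difference is bookkeeping for the remainder: you use the full Neumann series with a $(d-1)^{m-1}$ walk count, whereas the paper uses the exact identity $(I_d+E)^{-1}=I_d-E+(I_d+E)^{-1}E^2$ together with a degree-based bound on $\|E^2\|_{\max}$ and the a priori bound $\|(M+M^{(c)})^{-1}\|_{\max}\le 2$.
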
}

\vspace{2mm}
{For simplicity of notation, define $r = \frac{\max_{i}\Sigma_{ii}}{\min_{j}\Sigma_{jj}}$ and $\Sigma_{\max} = \max_{i}\Sigma_{ii}$.
	Assuming that $\|\tilde{\Sigma}^{\mathrm{res}}\|_{\max}\leq\alpha_0(\mathrm{supp}(\Sigma^{\mathrm{res}}))$, Conditions~(1-ii) and (1-iii) of Theorem~\ref{thm:tt1} are guaranteed to be satisfied if 
	\begin{equation}\label{eq19}
	\zeta(\mathrm{supp}(\Sigma^{\mathrm{res}}))\leq\frac{1}{r^2}\cdot \frac{ \frac{\lambda-\sigma_{k+1}}{\Sigma_{\max}}}{\left(\frac{\sigma_1-\lambda}{\Sigma_{\max}}\right)^2},\qquad \gamma(\mathrm{supp}(\Sigma^{\mathrm{res}}))\leq\frac{1}{r^2}\cdot \frac{ \frac{\sigma_{k}-\lambda}{\Sigma_{\max}}}{\left(\frac{\sigma_1-\lambda}{\Sigma_{\max}}\right)^2},
	\end{equation}
	which is equivalent to
	\begin{equation}\notag
	\max\left\{\gamma(\mathrm{supp}(\Sigma^{\mathrm{res}})), \zeta(\mathrm{supp}(\Sigma^{\mathrm{res}}))\right\}\leq \frac{2}{r^2}\cdot\frac{ \frac{\sigma_k-\sigma_{k+1}}{\Sigma_{\max}}}{\left(\frac{2\sigma_1-\sigma_k-\sigma_{k+1}}{\Sigma_{\max}}\right)^2}.
	\end{equation}
	for the choice $\lambda=\frac{\sigma_k+\sigma_{k+1}}{2}$. Consider the set
	\begin{equation}\notag
	\mathcal T= \big\{|\Sigma_{ij}|\ \big|\ i=1,2,...,d-1,\ j=i+1,...,d\big\}.
	\end{equation}
	This set has $\frac{d(d-1)}{2}$ elements. The cardinality of $\{\sigma_1,...,\sigma_{d-1}\}$, as a subset of $\mathcal T$, is smaller than the cardinality of $\mathcal T$ by a factor of $\frac{d}{2}$. Combined with the fact that $|\sigma_i|<\Sigma_{\max}$ for every $i = 1,...,\frac{d(d-1)}{2}$, this implies that the term $\frac{2\sigma_1-\sigma_{d-1}-\sigma_{d}}{\Sigma_{\max}}$ is expected to be small and its square is likely to be much smaller than 1, provided that the  elements of $\mathcal T$ are sufficiently spread. If the number $(2\sigma_1-\sigma_{d-1}-\sigma_{d})$ is relatively smaller than the gap $\sigma_{d-1}-\sigma_{d}$ and $k=O(d)$, then \eqref{eq_p5} and as a result Conditions~(1-ii) and~(1-iii) would be satisfied. The satisfaction of this condition will be studied for acyclic graphs in the next section. }

\subsection{Closed-form Solution: Acyclic Sparsity Graphs}

In the previous subsection, we provided a set of sufficient conditions for the equivalence of the GL and thresholding methods. Although these conditions are merely based on the known parameters of the problem, i.e., the regularization coefficient and sample covariance matrix, their verification is contingent upon knowing the value of $\beta(\mathrm{supp}(\Sigma^{\mathrm{res}}),\|\tilde{\Sigma}^{\mathrm{res}}\|_{\max})$ and whether  $I_d+\tilde\Sigma^{\text{res}}$ is sign-consistent {and has a positive-definite completion}. The objective of this part is to greatly simplify the conditions in the case where the thresholded sample covariance matrix has an acyclic support graph. 
{First, notice that if $I_d+\tilde\Sigma^{\text{res}}$ is positive-definite, it has a trivial positive-definite completion}. Furthermore, we will prove that $\zeta(\text{supp}(\Sigma^{\mathrm{res}}))$  in Lemma~\ref{thm:tt4} is equal to 1 when $\text{supp}(\Sigma^{\mathrm{res}})$ is acyclic. {This reduces Condition~(1-iii) to the simple inequality
	\begin{equation}\notag
	\|\tilde{\Sigma}^{\mathrm{res}}\|_{\max}^2\leq \underset{
		\begin{subarray}{c}
		i\not=j\\
		|\Sigma_{ij}|\leq\lambda
		\end{subarray}
	} {\min}\frac{\lambda-|\Sigma_{ij}|}{\sqrt{\Sigma_{ii}\Sigma_{jj}}},
	\end{equation}
	which can be verified efficiently and is expected to hold in practice (see Section~\ref{simulations}).}
Then, we will show that the sign-consistency of $I_d+\tilde\Sigma^{\text{res}}$ is automatically implied {by the fact that it has a positive-definite completion} if  $\text{supp}(\Sigma^{\text{res}})$ is acyclic.

\vspace{2mm}

\begin{lemma}\label{l_beta}
	Given an arbitrary acyclic graph $\mathcal G$, the relation
	\begin{equation}\label{beta}
	\beta(\mathcal{G},\alpha) \leq \alpha^2
	\end{equation}
	holds for every $0\leq\alpha<1$. Furthermore, strict equality holds for \eqref{beta} if $\mathcal{G}$ includes a path of length at least 2.
\end{lemma}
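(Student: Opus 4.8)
The plan is to leverage the explicit form of the maximum-determinant completion that is available for forests. By Lemma~\ref{lemma:ll1} and the remark following it, every matrix $M$ with a positive-definite completion satisfies $M+M^{(c)}=\widetilde M$, where $\widetilde M$ is the unique maximum-determinant completion of $M$; moreover, by Definition~\ref{def:dd1} the matrix $M^{(c)}$ has zero diagonal and $\mathrm{supp}(M^{(c)})\subseteq(\mathrm{supp}(M))^{(c)}$, so that $\|M^{(c)}\|_{\max}=\max_{(i,j)\notin\mathrm{supp}(M),\,i\neq j}|\widetilde M_{ij}|$. Hence it suffices to control the off-support entries of $\widetilde M$ for every matrix $M$ admissible in Definition~\ref{def:dd3} (unit diagonal, $\mathrm{supp}(M)=\mathcal G$, $\|M\|_{\max}\le\alpha$, positive-definite completion).

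The first step is to establish that when $\mathcal G$ is acyclic and $M$ is admissible, the completion is given by the product formula
\[
\widetilde M_{ij}=\prod_{e\in\mathcal P_{ij}}M_e
\]
whenever $i$ and $j$ lie in the same connected component of $\mathcal G$, where $\mathcal P_{ij}$ is the unique path joining $i$ and $j$ and $M_e$ denotes the corresponding edge entry of $M$, and $\widetilde M_{ij}=0$ when $i$ and $j$ lie in different components. To prove this, let $\widehat M$ be the unit-diagonal matrix defined by this formula: it is the covariance matrix of the Gauss--Markov random field on the forest $\mathcal G$ with unit marginal variances and edge correlations $M_e$. This matrix is positive-definite because $|M_e|\le\alpha<1$ on every edge (the Markov-chain construction writes the field as an invertible linear image of independent standard normals), and its inverse is supported on $\mathcal G$ by the Markov property of the tree model; alternatively, a leaf-peeling induction together with a Schur-complement computation of the inverse yields both facts directly. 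Since $\log\det$ is strictly concave, the maximum-determinant completion is unique, hence $\widehat M=\widetilde M$.

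Granting this, $\beta(\mathcal G,\alpha)\le\alpha^2$ follows at once: for $(i,j)\notin\mathcal G$ with $i\neq j$, either $i$ and $j$ lie in different components, so $\widetilde M_{ij}=0$, or $\mathcal P_{ij}$ has length at least $2$ (length $1$ would force $(i,j)\in\mathcal G$), so $|\widetilde M_{ij}|\le\alpha^{|\mathcal P_{ij}|}\le\alpha^2$ using $|M_e|\le\alpha<1$; maximizing over $(i,j)$ and then over admissible $M$ gives the bound. For the equality claim, assume $\mathcal G$ contains a path $a-b-c$ of length two; acyclicity forces $(a,c)\notin\mathcal G$ and makes $a-b-c$ the unique $\mathcal G$-path between $a$ and $c$. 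Take $M$ with $M_{ab}=M_{bc}=\alpha$, unit diagonal, and arbitrary nonzero entries of modulus at most $\alpha$ on all remaining edges of $\mathcal G$; this $M$ is admissible (positive-definiteness of its forest completion again uses $\alpha<1$), and the product formula gives $M^{(c)}_{ac}=\widetilde M_{ac}=M_{ab}M_{bc}=\alpha^2$, so $\|M^{(c)}\|_{\max}\ge\alpha^2$. Combined with the upper bound, this gives $\beta(\mathcal G,\alpha)=\alpha^2$.

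The main obstacle is the first step: pinning down the product-formula closed form for the maximum-determinant completion of a forest. Although this is classical for Gaussian tree models, a self-contained derivation must verify both that $\widehat M$ is positive-definite (which is where $\alpha<1$ enters) and that $\widehat M^{-1}$ vanishes off $\mathcal G$; the leaf-peeling induction makes both verifications routine, after which uniqueness of the completion closes the argument. Everything else reduces to elementary estimates on products of numbers in $(-1,1)$.
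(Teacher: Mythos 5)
Your proof is correct, and it rests on the same central identity as the paper's: for an acyclic support graph, the inverse-consistent complement of $M$ is given entry-wise by the product of the edge entries of $M$ along the unique path joining the two nodes, from which $\beta(\mathcal G,\alpha)\le\alpha^2$ and the equality case follow by elementary estimates. Where you differ is in how that identity is established. The paper writes down the candidate complement $N$ as in \eqref{N} \emph{and} an explicit candidate inverse $A$ as in \eqref{matA}, then verifies $(M+N)A=I_d$ by direct computation and proves $M+N\succ 0$ via a sum-of-squares decomposition of $x^TAx$ over a rooted orientation of the tree; this is fully self-contained and, as a by-product, produces the closed-form inverse that is reused verbatim in the proof of Theorem~\ref{thm4}. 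You instead invoke the Remark after Lemma~\ref{lemma:ll1} to identify $M+M^{(c)}$ with the maximum-determinant completion, and then recognize the product-formula matrix as the covariance of a unit-variance Gauss--Markov field on the forest, obtaining positive-definiteness from the sequential construction with $|M_e|\le\alpha<1$ and the support of the inverse from the Markov property; uniqueness of the completion closes the loop. Your route is shorter and more conceptual, but it leans on a classical fact about Gaussian tree models that you assert rather than prove — to be self-contained you would need to carry out the leaf-peeling/Schur-complement induction you mention (which, once written out, essentially reproduces the paper's computation), and you would not obtain the explicit inverse \eqref{matA} that the paper needs downstream. One small point in your favor: in the equality case your construction works with arbitrary admissible entries of modulus at most $\alpha$ on the remaining edges, whereas the paper asks for them to be ``sufficiently small''; your version shows this restriction is unnecessary.
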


{\noindent{\bf Sketch of the Proof:}
	In what follows, we will provide a sketch of the main idea behind the proof of Lemma~\ref{l_beta}. The detailed analysis can be found in the Appendix. Without loss of generality, one can assume that $\mathcal{G}$ is connected. Otherwise, the subsequent argument can be made for every connected component of $\mathcal{G}$. Consider a matrix $M$ that satisfies the conditions delineated in Definition~\ref{def:dd3}, i.e. 1) {it has a positive-definite completion} and hence, is inverse-consistent (see Lemma~\ref{lemma:ll1}), 2) it has unit diagonal entries, 3) the absolute value of its off-diagonal elements is upper bounded by $\alpha$, and 4) $\mathrm{supp}(M) = \mathcal{G}$. The key idea behind the proof of Lemma~\ref{l_beta} lies in the fact that, due to the acyclic structure of $\mathcal{G}$, one can explicitly characterize the inverse-consistent complement of $M$. In particular, it can be shown that the inverse-consistent complement of $M$ has the following explicit formula: for every $(i,j)\not\in\mathcal{G}$, $M^{(c)}_{ij}$ is equal to the multiplication of the off-diagonal elements of $M$ corresponding to the edges in the unique path between the nodes $i$ and $j$ in $\mathcal{G}$. This key insight immediately results in the statement of Lemma~\ref{l_beta}: the length of the path between nodes $i$ and $j$ is lower bounded by 2 and therefore, $M^{(c)}_{ij}\leq \alpha^2$. Furthermore, it is easy to see that if $\mathcal{G}$ includes a path of length at least 2, $M$ can be chosen such that for some $(i,j)\not\in\mathcal{G}$, we have $M^{(c)}_{ij}= \alpha^2$.~\hfill$\blacksquare$
	
	Lemma~\ref{l_beta} is at the core of our subsequent arguments. It shows that the function $\beta(\mathcal{G},\alpha)$ has a simple and explicit formula since its inverse-consistent complement can be easily obtained. Furthermore, it will be used to derive \textit{approximate} inverse-consistent complement of the matrices with sparse, but not necessarily acyclic support graphs.}

\vspace{2mm}

\begin{lemma} \label{thm:tt2}
	Condition~(1-ii) of Theorem~\ref{thm:tt1} is implied by its Condition~(1-i) if the graph  $\mathrm{supp}(\Sigma^{\mathrm{res}})$ is acyclic.
\end{lemma}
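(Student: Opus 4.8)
The plan is to combine the explicit description of the inverse-consistent complement of a matrix with an acyclic support graph (the one already exploited in the proof of Lemma~\ref{l_beta}) with an exact formula for the entries of the associated inverse, and then read off the sign condition. First I would reduce to the connected case: if $\mathcal{G}:=\mathrm{supp}(\Sigma^{\mathrm{res}})$ is a forest with several components, the maximum-determinant completion of $M:=I_d+\tilde{\Sigma}^{\mathrm{res}}$ is block diagonal with respect to those components (the complement entry between two vertices in different components corresponds to an empty path-product and hence is $0$), so its inverse is block diagonal too, and it suffices to treat each tree component separately; single-vertex components carry no edge and need no argument.

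So assume $\mathcal{G}$ is a tree. By Condition~(1-i) and Lemma~\ref{lemma:ll1}, $M$ is inverse-consistent with a unique complement $M^{(c)}$; set $W:=M+M^{(c)}$. Then $W\succ 0$, $W$ has unit diagonal, $W_{ij}=M_{ij}$ for every $(i,j)\in\mathcal{G}$, and $\mathrm{supp}(W^{-1})\subseteq\mathcal{G}$. Moreover, invoking the acyclic characterization established in the proof of Lemma~\ref{l_beta}, for every pair $(i,j)\notin\mathcal{G}$ one has
\[
W_{ij}=M^{(c)}_{ij}=\prod_{(k,l)\in\mathrm{path}_{\mathcal{G}}(i,j)}M_{kl},
\]
where $\mathrm{path}_{\mathcal{G}}(i,j)$ denotes the unique $i$--$j$ path in $\mathcal{G}$.

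The main step is to show that $(W^{-1})_{ij}=-M_{ij}/(1-M_{ij}^{2})$ for every edge $(i,j)\in\mathcal{G}$. I would do this by exhibiting the inverse directly: define $K\in\mathbb{S}^d$ by $K_{ij}=-M_{ij}/(1-M_{ij}^{2})$ for $(i,j)\in\mathcal{G}$, $K_{ii}=1+\sum_{l:(i,l)\in\mathcal{G}}M_{il}^{2}/(1-M_{il}^{2})$, and $K_{ij}=0$ otherwise, and then verify $KW=I_d$. The off-diagonal identity $\sum_k K_{ik}W_{kj}=0$ for $i\neq j$ follows by telescoping along the tree: if $m$ is the neighbour of $k$ on $\mathrm{path}_{\mathcal{G}}(k,j)$ then $W_{kj}=M_{km}W_{mj}$, and this path-product relation is exactly what makes the sum collapse (equivalently, $K$ is the precision matrix produced by the Gaussian Markov factorization on the tree $\mathcal{G}$). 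By uniqueness of the matrix inverse this gives $W^{-1}=K$. This verification is the crux of the argument, and acyclicity is indispensable: a cycle destroys the identity $W_{ij}=\prod M_{kl}$ and hence the telescoping, so one genuinely needs the tree structure here rather than just sparsity.

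Finally, since $W\succ 0$ has unit diagonal, each $2\times2$ principal minor yields $|M_{ij}|=|W_{ij}|<1$ for every $(i,j)\in\mathcal{G}$, so $1-M_{ij}^{2}>0$. Therefore, for every $(i,j)\in\mathrm{supp}(M)=\mathcal{G}$ the entry $(W^{-1})_{ij}=-M_{ij}/(1-M_{ij}^{2})$ is nonzero and has the sign opposite to $M_{ij}$. Since $W=M+M^{(c)}$, this is precisely the statement of Definition~\ref{def:dd2}, i.e.\ $M=I_d+\tilde{\Sigma}^{\mathrm{res}}$ is sign-consistent, which is Condition~(1-ii) of Theorem~\ref{thm:tt1}. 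As noted, I expect the $KW=I_d$ computation to be the only non-routine part; everything else is bookkeeping with the path-product formula and the positive-definiteness of $W$.
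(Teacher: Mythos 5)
Your proof is correct, but it takes a genuinely different route from the paper's. The paper proves sign-consistency by a reflection argument: for a fixed edge $(i,j)$ it removes that edge from the acyclic graph, splits the vertices into the two resulting groups, builds a diagonal $\pm 1$ matrix $\Phi$ accordingly, and compares the objective of the determinant-maximization problem at the unique optimum $(M+M^{(c)})^{-1}$ with that at the feasible point $\Phi\,(M+M^{(c)})^{-1}\,\Phi$; since conjugation by $\Phi$ flips exactly the $(i,j)$ entry (the only support edge crossing the cut), the strict optimality gap reduces to a positive multiple of $-M_{ij}\bigl((M+M^{(c)})^{-1}\bigr)_{ij}$, forcing opposite signs. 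You instead compute $\bigl((M+M^{(c)})^{-1}\bigr)_{ij}$ exactly, as $-M_{ij}/(1-M_{ij}^{2})$, by importing the path-product complement and the explicit inverse $K$ from the proof of Lemma~\ref{l_beta} and verifying $KW=I_d$ by telescoping on the tree; your reduction to connected components and the $2\times 2$ minor argument giving $|M_{ij}|<1$ are both sound. The trade-off: the paper's cut-and-reflect argument is softer and isolates exactly where acyclicity enters (edge removal disconnects the graph), without needing the full inverse formula; your computation is heavier but yields strictly more -- the exact value of the entry, not just its sign -- and it delivers the nonvanishing of $\bigl((M+M^{(c)})^{-1}\bigr)_{ij}$ directly (the paper's strict inequality implicitly presupposes $\Phi S\Phi\neq S$, i.e.\ that this entry is nonzero, which your formula establishes outright). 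Since the machinery you invoke is developed in the paper anyway for Lemma~\ref{l_beta} and Theorem~\ref{thm4}, your route adds no real overhead in context.
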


{\noindent{\bf Proof:}
	Consider an { arbitrary matrix $M\in\mathbb S^d$ with a positive-definite completion}. It suffices to show that if $\mathrm{supp}(M)$ is acyclic, then $M$ is  sign-consistent. To this end, 
	consider the matrix $\Pi^{\text{opt}}$ introduced in the proof of Lemma~\ref{lemma:ll1}, which is indeed the unique inverse-consistent complement of $M$. For an arbitrary pair $(i,j)\in\mathrm{supp}(M)$, define a diagonal matrix $\Phi\in\mathbb S^n$ as follows:
	\begin{itemize}
		\item Consider the graph $\mathrm{supp}(M)\backslash\{(i,j)\}$, which is obtained from the acyclic graph $\mathrm{supp}(M)$ by removing its edge $(i,j)$. The resulting graph is disconnected because there is no path between nodes $i$ and $j$. 
		\item Divide the disconnected graph $\mathrm{supp}(M)\backslash\{(i,j)\}$ into two groups 1 and 2 such that group 1 contains node $i$ and group 2 includes node 2.
		\item For every $l\in\{1,...,n\}$, define $\Phi_{ll}$ as 1 if $l$ is in group~1, and as -1 otherwise.   
	\end{itemize}
	In light of Lemma~\ref{lemma:ll1}, $(M+\Pi)^{-1}$  is the unique solution of~\eqref{eq_p60}. Similarly, $\Phi(M+\Pi)^{-1}\Phi$  is a feasible point for~\eqref{eq_p60}. As a result, the following inequality must hold
	\begin{equation}
	\begin{aligned}\notag
	&\bigg\{\mathrm{trace}(M(M+\Pi^{\text{opt}})^{-1})-\mathrm{logdet}((M+\Pi^{\text{opt}})^{-1})\bigg\}\\
	& - \bigg\{\mathrm{trace}(M\Phi(M+\Pi^{\text{opt}})^{-1}\Phi)-\mathrm{logdet}(\Phi(M+\Pi^{\text{opt}})^{-1}\Phi)\bigg\}<0.
	\end{aligned}
	\end{equation}
	It is easy to verify that the left side of the above inequality is equal to twice the product of the $(i,j)$ entries of $M$ and $(M+\Pi)^{-1}$. This implies that the $(i,j)$ entries of $M$ and $(M+\Pi)^{-1}$ have opposite signs. As a result, $M$ is sign-consistent.~\hfill$\blacksquare$}

\vspace{2mm}

\begin{definition}\label{T}
	Define  $T(\lambda)$ as a $d\times d$ symmetric matrix whose $(i,j)^{\text{th}}$ entry is equal to $\Sigma_{ij}+\lambda\times \text{sign}(S^{\text{opt}}_{ij})$ for every $(i,j)\in\text{supp}(S^{\text{opt}}),$ and it is equal to zero otherwise.
\end{definition}

The next result of this paper is a consequence of Lemmas \ref{l_beta} and \ref{thm:tt2} and Theorem~\ref{thm:tt1}. 

%

{\begin{theorem}\label{thm4}
		Assume that the graph $\text{supp}(S^{\text{opt}})$ is acyclic and the matrix $D+T(\lambda)$ is positive-definite. Then, the relation $\mathcal{E}^{\text{opt}}\subseteq \mathcal{E}^{\mathrm{res}}$ holds and the optimal solution $S^{\text{opt}}$ of the GL can be computed via the explicit formula
		\begin{equation}\label{S_opt2}
		S^{\text{opt}}_{ij} = \left\{
		\begin{array}{ll}
		\frac{1}{\Sigma_{ii}}\left(1+\underset{
			(i,m)\in\mathcal{E}^{\text{opt}}
		}{\sum}\frac{({\Sigma^{\mathrm{res}}_{im}})^2}{{\Sigma_{ii}\Sigma_{mm}}-({\Sigma^{\mathrm{res}}_{im}})^2}\right) & \text{if}\quad i=j,\\
		\frac{-\Sigma^{\mathrm{res}}_{ij}}{{\Sigma_{ii}\Sigma_{jj}}-({\Sigma^{\mathrm{res}}_{ij}})^2} & \text{if}\quad(i,j)\in \mathcal{E}^{\text{opt}}\vspace{1mm},\\
		
		0 & \text{otherwise},
		\end{array} 
		\right.
		\end{equation}
		where $\mathcal{E}^{\text{opt}}$ and $\mathcal{E}^{\mathrm{res}}$ denote the edge sets of $\mathrm{supp}(S^{\text{opt}})$ and $\mathrm{supp}(\Sigma^{\mathrm{res}})$, respectively.
\end{theorem}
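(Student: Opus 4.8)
The plan is to read the closed form off the first-order (KKT) optimality conditions of~\eqref{eq1}, exploiting the acyclic structure of $\mathrm{supp}(S^{\mathrm{opt}})$ to invert explicitly the matrix implied by those conditions. Since the objective of~\eqref{eq1} is strictly convex, $S^{\mathrm{opt}}$ is its unique minimizer and $S^{\mathrm{opt}}\succ0$ (otherwise the $-\log\det$ term is infinite). Put $W:=(S^{\mathrm{opt}})^{-1}\succ0$. The subgradient optimality condition $W=\Sigma+\lambda Z$, where $Z$ is a subgradient of $\|\cdot\|_{1,\mathrm{off}}$ at $S^{\mathrm{opt}}$, gives $W_{ii}=\Sigma_{ii}=D_{ii}$ for every $i$; $W_{ij}=\Sigma_{ij}+\lambda\,\mathrm{sign}(S^{\mathrm{opt}}_{ij})=T(\lambda)_{ij}$ for $(i,j)\in\mathcal{E}^{\mathrm{opt}}$; and $|W_{ij}-\Sigma_{ij}|\le\lambda$ for the remaining off-diagonal pairs. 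Hence $W$ agrees with $D+T(\lambda)$ on $\mathrm{supp}(S^{\mathrm{opt}})$ and on the diagonal, while $\mathrm{supp}(W^{-1})=\mathrm{supp}(S^{\mathrm{opt}})$ is acyclic.

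The next step uses that $W\succ0$ with $\mathrm{supp}(W^{-1})$ a forest: $W$ is then the maximum-determinant completion of the partial matrix given by its diagonal entries $W_{ii}$ and its edge entries $W_{ij}$, $(i,j)\in\mathcal{E}^{\mathrm{opt}}$, and the inverse of such a completion is known in closed form (cf. the proof of Lemma~\ref{l_beta} and the remark after Lemma~\ref{lemma:ll1}: the off-support entries of $W$ are path-products of its edge entries, and $W^{-1}$ is supported on the forest). Reading off the entries, $S^{\mathrm{opt}}_{ij}=-W_{ij}/(W_{ii}W_{jj}-W_{ij}^{2})$ for $(i,j)\in\mathcal{E}^{\mathrm{opt}}$, $S^{\mathrm{opt}}_{ij}=0$ for the other off-diagonal pairs, and $S^{\mathrm{opt}}_{ii}=\frac{1}{W_{ii}}\bigl(1+\sum_{(i,m)\in\mathcal{E}^{\mathrm{opt}}}W_{im}^{2}/(W_{ii}W_{mm}-W_{im}^{2})\bigr)$ after collecting the diagonal contributions. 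All denominators are positive because $W\succ0$ forces $\det W_{\{i,j\}}>0$; this is also exactly the positivity encoded by the hypothesis $D+T(\lambda)\succ0$.

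To finish I would identify $W_{ij}$ with $\Sigma^{\mathrm{res}}_{ij}$ on $\mathcal{E}^{\mathrm{opt}}$ and show $\mathcal{E}^{\mathrm{opt}}\subseteq\mathcal{E}^{\mathrm{res}}$, by combining the two descriptions of $W_{ij}$. The formula above forces $\mathrm{sign}(S^{\mathrm{opt}}_{ij})=-\mathrm{sign}(W_{ij})$ and $W_{ij}\neq0$ for $(i,j)\in\mathcal{E}^{\mathrm{opt}}$, while KKT gives $W_{ij}=\Sigma_{ij}-\lambda\,\mathrm{sign}(W_{ij})$. With $s:=\mathrm{sign}(W_{ij})\in\{\pm1\}$, the relation $sW_{ij}>0$ reads $s\,\Sigma_{ij}>\lambda>0$, which forces $\mathrm{sign}(\Sigma_{ij})=s$ and $|\Sigma_{ij}|>\lambda$; thus $(i,j)\in\mathcal{E}^{\mathrm{res}}$ and $W_{ij}=\Sigma_{ij}-\lambda\,\mathrm{sign}(\Sigma_{ij})=\Sigma^{\mathrm{res}}_{ij}$. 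Substituting $W_{ii}=\Sigma_{ii}$ and $W_{ij}=\Sigma^{\mathrm{res}}_{ij}$ into the formulas of the previous paragraph yields exactly~\eqref{S_opt2}.

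I expect the main obstacle to be making the inversion in the second paragraph rigorous for an arbitrary forest. I would handle it by induction on $|\mathcal{V}|$: peel off a leaf $v$ of $\mathrm{supp}(S^{\mathrm{opt}})$ with neighbor $u$; for the Gaussian model with (forest-structured) concentration matrix $W^{-1}=S^{\mathrm{opt}}$, vertex $v$ is conditionally independent of all other vertices given $u$, so $W_{vm}=W_{vu}W_{um}/W_{uu}$ for every $m\neq u$ (the path-product rule underlying Lemma~\ref{l_beta}), and the block/Schur-complement identities between $W$ and $W^{-1}$ let the recursion and the formula propagate from $\mathcal{V}\setminus\{v\}$ to $\mathcal{V}$. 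An essentially equivalent, but slightly more delicate, route is constructive: verify directly that $\bigl((D+T(\lambda))+(D+T(\lambda))^{(c)}\bigr)^{-1}$ — well defined and positive-definite precisely because $D+T(\lambda)\succ0$, using Lemma~\ref{lemma:ll1} and Lemma~\ref{thm:tt2} — satisfies the KKT system of~\eqref{eq1}, and then invoke uniqueness of $S^{\mathrm{opt}}$; this requires separately controlling the off-support inequality $|W_{ij}-\Sigma_{ij}|\le\lambda$, which is why I would prefer the direct argument.
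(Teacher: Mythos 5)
Your proposal is correct and follows essentially the same route as the paper's proof: read the diagonal and on-support entries of $(S^{\text{opt}})^{-1}$ off the KKT conditions, recognize that the acyclic support forces the off-support entries to be the path-product (maximum-determinant) completion whose inverse is given by the explicit forest formula of Lemma~\ref{l_beta}, and then use the resulting sign identity $\mathrm{sign}(S^{\text{opt}}_{ij})=-\mathrm{sign}(W_{ij})$ to conclude $W_{ij}=\Sigma^{\mathrm{res}}_{ij}$ and $\mathcal{E}^{\text{opt}}\subseteq\mathcal{E}^{\mathrm{res}}$. The only cosmetic differences are that the paper passes through the normalized GL (Lemma~\ref{l26}) and invokes the sign-consistency Lemma~\ref{thm:tt2} where you derive the sign relation directly from the explicit inverse, neither of which changes the substance of the argument.
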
}


When the regularization parameter $\lambda$ is large, the graph $\text{supp}(S^{\text{opt}})$ is expected to be sparse and possibly acyclic. In this case, the matrix  $T(\lambda)$ is sparse with small nonzero entries. If $D+T(\lambda)$ is positive-definite and $\text{supp}(S^{\text{opt}})$ is acyclic, Theorem \ref{thm4} reveals two important properties of the solution of the GL: 1) its support graph is contained in  the sparsity graph of the thresholded sample covariance matrix, and 2) the entries of this matrix  can be found using the explicit formula  \eqref{S_opt2}. However, this formula requires to know  the locations of the nonzero elements of  $S^{\text{opt}}$. 
In what follows, we will replace the assumptions of the above theorem with easily verifiable rules that are independent from the optimal solution $S^{\text{opt}}$ or the locations of its nonzero entries. Furthermore, it will be shown that these conditions are expected to hold when $\lambda$ is large enough, i.e., if a  sparse matrix $S^{\text{opt}}$ is sought.
{\begin{theorem}\label{th2}
		Assume that the following conditions are satisfied:
		\begin{itemize}
			\vspace{-2mm}
			\item {\bf Condition 2-i.} The graph $\text{supp}(\Sigma^{\mathrm{res}})$ is acyclic.
			\vspace{-2mm}
			\item {\bf Condition 2-ii.} $I_d+\tilde\Sigma^{\mathrm{res}}$ is positive-definite.\label{C2}
			\vspace{-2mm}
			\item {\bf Condition 2-iii.} $\|\tilde{\Sigma}^{\mathrm{res}}\|_{\max}^2\leq \underset{
				\begin{subarray}{c}
				i\not=j\\
				|\Sigma_{ij}|\leq\lambda
				\end{subarray}
			} {\min}\frac{\lambda-|\Sigma_{ij}|}{\sqrt{\Sigma_{ii}\Sigma_{jj}}}$\nonumber.\label{C3}
		\end{itemize}
		Then, the sparsity pattern of the optimal solution $S^{\text{opt}}$ corresponds to the sparsity pattern of $\Sigma^{\mathrm{res}}$ and, in addition,  $S^{\text{opt}}$ can be obtained via the explicit formula  \eqref{S_opt2}.
\end{theorem}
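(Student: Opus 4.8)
The plan is to show that Conditions~2-i, 2-ii and 2-iii imply Conditions~(1-i), (1-ii) and (1-iii) of Theorem~\ref{thm:tt1}, invoke that theorem to identify $\mathrm{supp}(S^{\text{opt}})$, and then verify the two hypotheses of Theorem~\ref{thm4} so that the closed-form expression~\eqref{S_opt2} follows. First I would carry out the reduction of the conditions. Condition~2-ii states that $I_d+\tilde\Sigma^{\mathrm{res}}$ is positive-definite, hence it is a positive-definite completion of itself and Condition~(1-i) holds. Under Condition~2-i, i.e. $\mathrm{supp}(\Sigma^{\mathrm{res}})$ acyclic, Lemma~\ref{thm:tt2} then gives Condition~(1-ii) for free. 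For Condition~(1-iii) I would first note that each off-diagonal entry of $\tilde\Sigma^{\mathrm{res}}$ has magnitude $\frac{|\Sigma_{ij}|-\lambda}{\sqrt{\Sigma_{ii}\Sigma_{jj}}}<\frac{|\Sigma_{ij}|}{\sqrt{\Sigma_{ii}\Sigma_{jj}}}\le 1$ (the last inequality because $\Sigma\succeq 0$), so $\alpha:=\|\tilde\Sigma^{\mathrm{res}}\|_{\max}\in[0,1)$; then Lemma~\ref{l_beta} applied to the acyclic graph $\mathrm{supp}(\Sigma^{\mathrm{res}})$ yields $\beta(\mathrm{supp}(\Sigma^{\mathrm{res}}),\alpha)\le\alpha^2$, and chaining this with Condition~2-iii gives exactly the inequality demanded by Condition~(1-iii).

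Next I would apply Theorem~\ref{thm:tt1} to conclude that the sparsity structures of thresholding and GL are equivalent, i.e. $\mathrm{supp}(S^{\text{opt}})=\mathrm{supp}(\Sigma_k)=\mathrm{supp}(\Sigma^{\mathrm{res}})$; the last equality holds because, for $\lambda\in(\sigma_{k+1},\sigma_k)$, both hard- and soft-thresholding retain precisely the $2k$ off-diagonal entries with $|\Sigma_{ij}|>\lambda$. In particular $\mathrm{supp}(S^{\text{opt}})$ is acyclic by Condition~2-i, which is the first hypothesis of Theorem~\ref{thm4}, and it also already proves the ``sparsity pattern'' half of the claimed statement.

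The step I expect to be the main obstacle is checking the second hypothesis of Theorem~\ref{thm4}, namely $D+T(\lambda)\succ 0$. The key observation is that $T(\lambda)=\Sigma^{\mathrm{res}}$. To see this one must reach into the primal construction underlying the proof of Theorem~\ref{thm:tt1}: there the unique GL solution is exhibited as $S^{\text{opt}}=\big(D+\Sigma^{\mathrm{res}}+(D+\Sigma^{\mathrm{res}})^{(c)}\big)^{-1}$, and combining this with the sign-consistency of $I_d+\tilde\Sigma^{\mathrm{res}}$ (established above) gives $\mathrm{sign}(S^{\text{opt}}_{ij})=-\mathrm{sign}(\Sigma^{\mathrm{res}}_{ij})=-\mathrm{sign}(\Sigma_{ij})$ for every $(i,j)\in\mathrm{supp}(\Sigma^{\mathrm{res}})$. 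Hence, on that common support, $T(\lambda)_{ij}=\Sigma_{ij}+\lambda\,\mathrm{sign}(S^{\text{opt}}_{ij})=\Sigma_{ij}-\lambda\,\mathrm{sign}(\Sigma_{ij})=\Sigma^{\mathrm{res}}_{ij}$, while both matrices vanish elsewhere, so $T(\lambda)=\Sigma^{\mathrm{res}}$. Consequently $D+T(\lambda)=D+\Sigma^{\mathrm{res}}=D^{1/2}(I_d+\tilde\Sigma^{\mathrm{res}})D^{1/2}\succ 0$ by Condition~2-ii.

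Finally, with $\mathrm{supp}(S^{\text{opt}})$ acyclic and $D+T(\lambda)\succ 0$ verified, Theorem~\ref{thm4} applies directly and delivers $\mathcal{E}^{\text{opt}}\subseteq\mathcal{E}^{\mathrm{res}}$ together with the explicit formula~\eqref{S_opt2}; combined with the equality $\mathrm{supp}(S^{\text{opt}})=\mathrm{supp}(\Sigma^{\mathrm{res}})$ from the second step, this completes the proof. In summary, the argument is essentially a chain of reductions to already-proved statements (Theorem~\ref{thm:tt1}, Lemma~\ref{thm:tt2}, Lemma~\ref{l_beta}, Theorem~\ref{thm4}), and the only genuinely delicate point is pinning down the sign pattern of $S^{\text{opt}}$ so as to identify $T(\lambda)$ with $\Sigma^{\mathrm{res}}$.
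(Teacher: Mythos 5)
Your proposal is correct, and its first half coincides with the paper's argument: the paper likewise reduces Conditions~(1-i)--(1-iii) of Theorem~\ref{thm:tt1} to Conditions~2-i, 2-ii, 2-iii via Lemma~\ref{thm:tt2} (sign-consistency for free on acyclic supports) and Lemma~\ref{l_beta} (the bound $\beta(\mathcal G,\alpha)\le\alpha^2$), obtaining $\mathcal{E}^{\mathrm{opt}}=\mathcal{E}^{\mathrm{res}}$. Where you diverge is the derivation of the formula \eqref{S_opt2}: the paper does not route through Theorem~\ref{thm4} at all, but instead sets $M=I_d+\tilde\Sigma^{\mathrm{res}}$, builds the explicit complement $N$ from \eqref{N} and the candidate inverse $A$ from \eqref{matA}, and verifies directly (as in the proof of Theorem~\ref{thm:tt1}) that \eqref{S_normal} satisfies the KKT conditions of the normalized GL, then descales via Lemma~\ref{l26}. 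Your route instead checks the two hypotheses of Theorem~\ref{thm4}, the nontrivial one being $D+T(\lambda)\succ 0$, which you obtain by pinning down $\mathrm{sign}(S^{\text{opt}}_{ij})=-\mathrm{sign}(\Sigma_{ij})$ from the sign-consistency of $I_d+\tilde\Sigma^{\mathrm{res}}$ and the representation $\tilde S=(M+M^{(c)})^{-1}$ inside the proof of Theorem~\ref{thm:tt1}, whence $T(\lambda)=\Sigma^{\mathrm{res}}$ and $D+T(\lambda)=D^{1/2}(I_d+\tilde\Sigma^{\mathrm{res}})D^{1/2}\succ 0$ by Condition~2-ii. This identification is sound and non-circular, and it is arguably the cleaner way to realize the paper's stated intent that Theorem~\ref{th2} ``replaces the assumptions'' of Theorem~\ref{thm4} with verifiable ones; the trade-off is that you must reach into the proof (not just the statement) of Theorem~\ref{thm:tt1} to extract the sign pattern, whereas the paper's direct KKT verification avoids any reference to the a-priori-unknown signs of $S^{\text{opt}}$ at the cost of repeating the construction.
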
}
The above theorem states that if a sparse graph is sought, then as long as some easy-to-verify conditions are met, there is an  explicit formula for the optimal solution.  
It will later be shown  that Condition (2-i) is exactly or approximately satisfied if the regularization coefficient is sufficiently large. Condition (2-ii) implies that the eigenvalues of the normalized residue of $\Sigma$ with respect to $\lambda$ should be greater than  -1. This condition is expected to be automatically satisfied since most  of the elements of $\tilde\Sigma^{\text{res}}$ are equal to zero {and the nonzero elements have small magnitude}. In particular, this condition is satisfied if $I_d+\tilde\Sigma^{\text{res}}$ is diagonally dominant. 
{Finally, using~\eqref{eq18}, it can be verified that  Condition (2-iii) is satisfied if
	\begin{equation}\label{eq27}
	\frac{\left(\frac{2\sigma_1-\sigma_k-\sigma_{k+1}}{\Sigma_{\max}}\right)^2}{\frac{\sigma_k-\sigma_{k+1}}{\Sigma_{\max}}}\leq\frac{2}{r^2}.
	\end{equation}
	Similar to the arguments made in the previous subsection,~\eqref{eq27} shows that  Condition (2-iii) is satisfied if $\frac{2\sigma_1-\sigma_k-\sigma_{k+1}}{\Sigma_{\max}}$ is small. This is expected to hold  in practice since the choice of $\lambda$ entails that $2\sigma_1-\sigma_k-\sigma_{k+1}$ is much smaller than $\Sigma_{\max}$. Under such circumstances, one can use Theorem \ref{th2} to obtain the solution of the GL without having to solve \eqref{eq1} numerically.}

Having computed the  sample covariance matrix, we will next show that checking the conditions in Theorem \ref{th2} and finding $S^{\text{opt}}$ using \eqref{S_opt2} can all be carried out efficiently.
\begin{corollary}\label{cor2}
	Given $\Sigma$ and $\lambda$, the total time complexity of checking the conditions in Theorem \ref{th2} and finding $S^{\text{opt}}$ using \eqref{S_opt2} is $\mathcal{O}(d^2)$.
\end{corollary}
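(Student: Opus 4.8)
The plan is to walk through each task required by Theorem~\ref{th2}---computing $\Sigma^{\mathrm{res}}$ and $\tilde\Sigma^{\mathrm{res}}$, checking Conditions (2-i), (2-ii), (2-iii), and evaluating the formula~\eqref{S_opt2}---and to argue that each one costs at most $\mathcal{O}(d^2)$ operations, so that the total is $\mathcal{O}(d^2)$. Note first that $\Sigma$ itself has $d^2$ entries, so $\mathcal{O}(d^2)$ is the best one could hope for once the sample covariance matrix is available; the content of the corollary is that none of the steps blows this up.

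First I would address the construction of the residue matrices. Forming $\Sigma^{\mathrm{res}}(\lambda)$ requires a single pass over the $\frac{d(d-1)}{2}$ off-diagonal entries of $\Sigma$, applying soft-thresholding entrywise; this is $\mathcal{O}(d^2)$. The normalized residue $\tilde\Sigma^{\mathrm{res}} = D^{-1/2}\Sigma^{\mathrm{res}}D^{-1/2}$ is again an entrywise rescaling, so $\mathcal{O}(d^2)$. Condition (2-iii) involves $\|\tilde\Sigma^{\mathrm{res}}\|_{\max}$, the maximum over off-diagonal entries, and a minimum of $\frac{\lambda-|\Sigma_{ij}|}{\sqrt{\Sigma_{ii}\Sigma_{jj}}}$ over the thresholded pairs; both are single passes over $\mathcal{O}(d^2)$ numbers. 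Condition (2-i), acyclicity of $\mathrm{supp}(\Sigma^{\mathrm{res}})$, is checked by running a depth-first or breadth-first search / union-find on the support graph; since this graph has $d$ vertices and some number $m$ of edges with $m \le \frac{d(d-1)}{2} = \mathcal{O}(d^2)$, the traversal costs $\mathcal{O}(d+m)=\mathcal{O}(d^2)$. (If the edge list is extracted during the thresholding pass, this is immediate.)

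The two steps that need slightly more care are Condition (2-ii)---positive-definiteness of $I_d+\tilde\Sigma^{\mathrm{res}}$---and the evaluation of~\eqref{S_opt2}, because a naive reading suggests an $\mathcal{O}(d^3)$ Cholesky factorization and, for~\eqref{S_opt2}, that one must know $\mathrm{supp}(S^{\mathrm{opt}})$. For positive-definiteness, the key observation is that under Condition (2-i) the matrix $I_d+\tilde\Sigma^{\mathrm{res}}$ has an acyclic (forest) sparsity pattern, and a symmetric matrix whose support graph is a forest can be tested for positive-definiteness in $\mathcal{O}(d)$ time after ordering: one does a leaf-elimination (equivalently, a sparse Cholesky with a perfect elimination ordering of the forest), where eliminating each leaf costs $\mathcal{O}(1)$ and produces no fill, so the whole test is $\mathcal{O}(d)\subseteq\mathcal{O}(d^2)$; the ordering itself comes for free from the DFS used in checking (2-i). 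For~\eqref{S_opt2}, the point is that once Conditions (2-i)--(2-iii) hold, Theorem~\ref{th2} guarantees $\mathrm{supp}(S^{\mathrm{opt}}) = \mathrm{supp}(\Sigma^{\mathrm{res}})=\mathcal{E}^{\mathrm{res}}$, so $\mathcal{E}^{\mathrm{opt}}$ is already known---no solve is needed. The off-diagonal formula is then a single $\mathcal{O}(1)$ expression per edge, i.e.\ $\mathcal{O}(|\mathcal{E}^{\mathrm{res}}|)=\mathcal{O}(d^2)$ total, and each diagonal entry is a sum over the edges incident to vertex $i$; summing over all $i$, each edge is counted twice, so the diagonal entries cost $\mathcal{O}(|\mathcal{E}^{\mathrm{res}}|)=\mathcal{O}(d^2)$ as well (in fact, since the support is a forest, $|\mathcal{E}^{\mathrm{res}}| \le d-1$, so these last two costs are even $\mathcal{O}(d)$).

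The main obstacle is purely expository rather than mathematical: one must be careful to invoke the acyclic structure so that the positive-definiteness check avoids a dense $\mathcal{O}(d^3)$ factorization, and to invoke Theorem~\ref{th2} itself so that $\mathcal{E}^{\mathrm{opt}}$ is identified with $\mathcal{E}^{\mathrm{res}}$ and~\eqref{S_opt2} becomes a direct evaluation. Summing the bounds over the finitely many steps gives the claimed $\mathcal{O}(d^2)$ total complexity. $\hfill\blacksquare$
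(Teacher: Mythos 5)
Your proposal is correct and follows essentially the same route as the paper's own proof: soft-thresholding in $\mathcal{O}(d^2)$, DFS for acyclicity in $\mathcal{O}(d^2)$, a sparse Cholesky factorization exploiting the forest structure for positive-definiteness in $\mathcal{O}(d)$, single passes for Condition (2-iii), and $\mathcal{O}(d)$ evaluation of the closed-form formula since the support has at most $d-1$ edges. The extra details you supply (the perfect elimination ordering for the forest, and the identification $\mathcal{E}^{\mathrm{opt}}=\mathcal{E}^{\mathrm{res}}$ via Theorem~\ref{th2}) are consistent with, and slightly more explicit than, what the paper states.
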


Another line of work has been devoted to studying the connectivity structure of the optimal solution of the GL. In particular, \cite{Mazumdar12} and \cite{Witten11} have shown that the connected components induced by thresholding the covariance matrix and those in the support graph of the optimal solution of the  GL lead to the same vertex partitioning. Although this result does not require any particular condition, it cannot provide any information about the edge structure of the support graph and one needs to  solve \eqref{eq1} for each connected component using an iterative algorithm, which may take up to $\mathcal{O}(d^3)$ per iteration~\cite{friedman2008sparse, banerjee2008model, Mazumdar12}. Corollary~\ref{cor2} states that this complexity could be reduced significantly for sparse graphs.

{\begin{remark}
		The results introduced in Theorem~\ref{thm:tt1} can indeed be categorized as a set of ``safe rules'' that correctly determine sparsity pattern of the optimal solution of the GL. These rules are subsequently reduced to a set of easily verifiable conditions in Theorem~\ref{th2} to safely obtain the correct sparsity pattern of the acyclic components in the optimal solution. On the other hand, there is a large body of literature on simple and cheap safe rules to pre-screen and simplify the sparse learning and estimation problems, including Lasso, logistic regression, support vector machine, group Lasso, etc~\cite{ghaoui2010safe, tibshirani2012strong, fercoq2015mind, ndiaye2015gap}. Roughly speaking, these methods are based on constructing a sequence of \textit{safe regions} that encompass the optimal solution for the dual of the problem at hand. These safe regions, together with the Karush–-Kuhn–-Tucker (KKT) conditions, give rise to a set of rules that facilitate inferring the sparsity pattern of the optimal solution. Our results are similar to these methods since we also analyze the special structure of the KKT conditions and resort to the dual of the GL to obtain the correct sparsity structure of the optimal solution. However, according to the seminal work~\cite{ndiaye2015gap}, most of the developed results on safe screening rules rely on strong Lipschitz assumptions on the objective function; an assumption that is violated in the GL. This calls for a new machinery to derive theoretically correct rules for this problem; a goal that is at the core of Theorems~\ref{thm:tt1} and~\ref{th2}. 
\end{remark}}

\subsection{Approximate Closed-form Solution:  Sparse Graphs}

In the preceding subsection, it was shown that, under some mild assumptions, the GL has an explicit closed-form solution if the support graph of the thresholded sample covariance matrix is acyclic. In this part, a similar approach will be taken to find {approximate} solutions of the GL with an arbitrary underlying sparsity graph. {In particular, by closely examining the hard-to-check conditions of Theorem~\ref{thm:tt1}, a set of simple and easy-to-verify surrogates will be introduced which give rise to an approximate closed-form solution for the general sparse GL.} Furthermore, we will derive a strong upper bound on the approximation error and show that it decreases exponentially fast with respect to the length of the minimum-length cycle in the support graph of the thresholded sample covariance matrix. Indeed, the formula obtained earlier  for acyclic graphs could be regarded as a by-product of this generalization since the length of the minimum-length cycle can be considered as infinity for such graphs. The significance of this result is twofold:

\begin{itemize}
	\item Recall that the support graph corresponding to the optimal solution of the GL is sparse (but not necessarily acyclic) for a large regularization coefficient. In this case, the approximate error is provably small and the derived closed-form solution can serve as a good approximation for the exact solution of the GL. This will later be demonstrated  in different simulations. 
	
	\item The performance and  runtime of numerical (iterative)  algorithms for solving the GL heavily depend on their initializations.  It is known that if the initial point is chosen close enough to the optimal solution, these algorithms converge to the optimal solution in just a few iterations~\cite{friedman2008sparse, Hsieh14, Richard17}. {The approximate closed-form solution designed in this paper can be used as an initial point for the existing numerical algorithms to significantly improve their runtime.} 
\end{itemize}

The proposed approximate solution for the GL with an arbitrary support graph has the following form:
{\begin{equation}\label{S_opt3}
	A_{ij} = \left\{
	\begin{array}{ll}
	\frac{1}{\Sigma_{ii}}\left(1+\underset{
		(i,m)\in\mathcal{E}^{\text{opt}}
	}{\sum}\frac{({\Sigma^{\mathrm{res}}_{im}})^2}{{\Sigma_{ii}\Sigma_{mm}}-({\Sigma^{\mathrm{res}}_{im}})^2}\right) & \text{if}\quad i=j,\\
	\frac{-\Sigma^{\mathrm{res}}_{ij}}{{\Sigma_{ii}\Sigma_{jj}}-({\Sigma^{\mathrm{res}}_{ij}})^2} & \text{if}\quad(i,j)\in \mathcal{E}^{\text{res}},\vspace{1mm}\\
	
	0 & \text{otherwise}.
	\end{array} 
	\right.
	\end{equation}}
The definition of this matrix does not make any assumption on the structure of the graph $\mathcal{E}^{\text{res}}$. {Recall that $\Sigma^{\mathrm{res}}$ in the above formula  is the shorthand notation for $\Sigma^{\mathrm{res}}(\lambda)$. As a result, the matrix $A$ is a function of $\lambda$. To prove that the above matrix is an approximate solution of the GL, a few steps need to be taken. First, recall that---according to the proof of Lemma~\ref{l_beta}---it is possible  to explicitly build the inverse-consistent complement of the thresholded sample covariance matrix if its sparsity graph is acyclic.} This matrix serves as a \textit{certificate} to confirm that the explicit solution \eqref{S_opt3} indeed satisfies the KKT conditions for the GL. By adopting a similar approach, it will then be proved that if the support graph of the thresholded sample covariance matrix is sparse, but not necessarily acyclic, one can find an approximate inverse-consistent complement of the proposed closed-form solution to approximately satisfy the KKT conditions.

\begin{definition}
	Given a number $\epsilon\geq 0$, a $d\times d$ matrix $B$ is called an \textbf{$\epsilon$-relaxed inverse} of matrix $A$ if $A\times B = I_d + E$ such that $|E_{ij}|\leq \epsilon$ for every $(i,j)\in \{1,2,...,d\}^2$.
\end{definition}

The next lemma offers optimality (KKT) conditions for the unique solution of the GL.
\begin{lemma}[\cite{sojoudi2014equivalence111}]\label{expsol} 
	A matrix $S^{\text{opt}}$ is the optimal solution of the GL if and only if it satisfies the following conditions for every $i,j\in\{1,2,...,d\}$
	\begin{subequations}\label{optcon}
		\begin{align}
		& (S^{\text{opt}})^{-1}_{ij} = \Sigma_{ij}&& \text{if}\quad i=j,\\
		& (S^{\text{opt}})^{-1}_{ij} = \Sigma_{ij}+\lambda\times \text{\rm sign}(S^{\text{opt}}_{ij})&& \text{if}\quad S^{\text{opt}}_{ij}\not=0,\\
		& \Sigma_{ij}-\lambda\leq (S^{\text{opt}})^{-1}_{ij} \leq \Sigma_{ij}+\lambda&& \text{if}\quad S^{\text{opt}}_{ij}=0,
		\end{align}
	\end{subequations}
	where $(S^{\text{opt}})^{-1}_{ij}$ denotes the $(i,j)^{\text{th}}$ entry of $(S^{\text{opt}})^{-1}$.
\end{lemma}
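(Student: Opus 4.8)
The plan is to derive the stated conditions directly from first-order optimality of the convex program~\eqref{eq1}, exploiting the fact that the objective is strictly convex and coercive on the positive-definite cone, so that the minimizer $S^{\text{opt}}$ exists, is unique, and is characterized by the subgradient inclusion $0 \in \partial f(S^{\text{opt}})$. First I would write $f(S) = -\log\det(S) + \mathrm{trace}(\Sigma S) + \lambda\|S\|_{1,\mathrm{off}}$ and recall the standard facts that $\nabla(-\log\det(S)) = -S^{-1}$ and $\nabla\,\mathrm{trace}(\Sigma S) = \Sigma$ on the open set $S \succ 0$. The only nonsmooth piece is $\lambda\|S\|_{1,\mathrm{off}} = \lambda\sum_{i\neq j}|S_{ij}|$, whose subdifferential at $S$ is the set of symmetric matrices $G$ with $G_{ii}=0$, $G_{ij} = \lambda\,\mathrm{sign}(S_{ij})$ whenever $S_{ij}\neq 0$, and $G_{ij}\in[-\lambda,\lambda]$ whenever $S_{ij}=0$ (for $i\neq j$).

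Next I would assemble the optimality condition: $S^{\text{opt}}\succ 0$ is optimal if and only if there exists $G \in \partial(\lambda\|\cdot\|_{1,\mathrm{off}})(S^{\text{opt}})$ with $-(S^{\text{opt}})^{-1} + \Sigma + G = 0$, i.e. $(S^{\text{opt}})^{-1} = \Sigma + G$. Reading this entrywise and substituting the three cases in the description of $G$ above gives exactly the three displayed relations: on the diagonal $G_{ii}=0$ yields $(S^{\text{opt}})^{-1}_{ii} = \Sigma_{ii}$; on an off-diagonal entry with $S^{\text{opt}}_{ij}\neq 0$ we get $(S^{\text{opt}})^{-1}_{ij} = \Sigma_{ij} + \lambda\,\mathrm{sign}(S^{\text{opt}}_{ij})$; and on an off-diagonal entry with $S^{\text{opt}}_{ij}=0$ the constraint $G_{ij}\in[-\lambda,\lambda]$ becomes $\Sigma_{ij}-\lambda \le (S^{\text{opt}})^{-1}_{ij} \le \Sigma_{ij}+\lambda$. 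For the "only if" direction one invokes that $f$ is convex and differentiable up to the separable nonsmooth term, so the subgradient optimality condition is both necessary and sufficient; uniqueness of $S^{\text{opt}}$ follows from strict convexity of $-\log\det$ on $\mathbb{S}^d_+$, which also licenses speaking of \emph{the} optimal solution in the statement.

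I expect the main obstacle — really the only point requiring care rather than bookkeeping — to be the justification that the minimizer is attained in the \emph{open} cone $S\succ 0$ (so that $-\log\det$ is differentiable there and no boundary multipliers for $S\succeq 0$ intrude) and that no separate treatment of the constraint $S\in\mathbb{S}^d_+$ is needed. This is handled by a coercivity/compactness argument: $f(S)\to+\infty$ as $S$ approaches the boundary of the PSD cone (because $-\log\det(S)\to+\infty$) and as $\|S\|\to\infty$ (because $\mathrm{trace}(\Sigma S)+\lambda\|S\|_{1,\mathrm{off}}$ dominates, using $\Sigma\succeq 0$ together with the regularizer to control all entries), so the sublevel sets are compact subsets of $\{S\succ 0\}$ and the infimum is a minimum lying in the interior. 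Since this lemma is quoted from~\cite{sojoudi2014equivalence111}, I would keep the write-up brief, citing that reference for the detailed attainment argument and presenting the subdifferential computation as the substance of the proof.
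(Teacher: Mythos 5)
The paper does not prove this lemma at all: it is imported verbatim from \cite{sojoudi2014equivalence111}, so there is no in-paper argument to compare against. Your derivation is the standard and correct one --- subgradient optimality $0\in\partial f(S^{\text{opt}})$ for the convex objective, with $\nabla(-\log\det S)=-S^{-1}$ and the entrywise description of $\partial\bigl(\lambda\|\cdot\|_{1,\mathrm{off}}\bigr)$, read off coordinate by coordinate --- and the ``if and only if'' follows exactly as you say because the subgradient condition is both necessary and sufficient for a convex function on the open cone. The one place where your sketch is looser than it should be is the coercivity step: the claim that $\mathrm{trace}(\Sigma S)+\lambda\|S\|_{1,\mathrm{off}}$ ``dominates'' and ``controls all entries'' is not quite right as stated, since the regularizer does not touch the diagonal and $\mathrm{trace}(\Sigma S)$ alone does not control $S_{ii}$ when $\Sigma_{ii}=0$ (indeed, if some $\Sigma_{ii}=0$ the objective is unbounded below along $S=I+t\,e_ie_i^{\top}$). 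Attainment in the interior requires $\Sigma_{ii}>0$ for all $i$ --- which does hold throughout this paper, where $D=\mathrm{diag}(\Sigma_{11},\dots,\Sigma_{dd})$ is inverted --- and the usual clean route is via the dual $\max\{\log\det W:\ |W_{ij}-\Sigma_{ij}|\le\lambda,\ i\ne j,\ W_{ii}=\Sigma_{ii}\}$ as in Banerjee et al.; since you explicitly defer this point to the cited reference, the gap is one of precision rather than substance.
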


The following definition introduces a relaxed version of the first-order optimality conditions given in~\eqref{optcon}.

\begin{definition}
	Given a number $\epsilon\geq 0$, it is said that the $d\times d$ matrix $A$ satisfies the \textbf{$\epsilon$-relaxed KKT conditions} for the GL problem if there exists a $d\times d$ matrix $B$ such that
	\begin{itemize}
		\item $B$ is an {$\epsilon$-relaxed inverse} of the matrix $A$.
		\item The pair $(A, B)$ satisfies the conditions
		\begin{subequations}\label{optcon2}
			\begin{align}
			& B_{ij} = \Sigma_{ij}\hspace{3cm} &&\text{if}\quad i=j,\label{eps1}\\
			& |B_{ij} - \left(\Sigma_{ij}+\lambda\times \mathrm{sign}(A_{ij})\right)|\leq \epsilon&& \text{if}\quad A_{ij}\not=0,\label{eps2}\\
			& |B_{ij}-\Sigma_{ij}| \leq \lambda+\epsilon&& \text{if}\quad A_{ij}=0.\label{eps3}
			\end{align}
		\end{subequations}
	\end{itemize}
\end{definition}
By leveraging the above definition, the objective is to prove that the explicit solution introduced in \eqref{S_opt3} satisfies the $\epsilon$-relaxed KKT conditions for some number $\epsilon$ to be defined later. 

\begin{definition}
	{Given a graph $\mathcal{G}$}, define the function $c(\mathcal{G})$ as the length of the minimum-length cycle of $\mathcal{G}$ (the number $c(\mathcal{G})$ is set to $+\infty$ if $\mathcal{G}$ is acyclic). Let $\text{\rm deg}(\mathcal{G})$ refer to the maximum degree of $\mathcal{G}$.
	Furthermore, define $\mathcal{P}_{ij}(\mathcal{G})$ as the set of all simple paths between nodes $i$ and $j$ in $\mathcal{G}$, and denote the maximum of $|\mathcal{P}_{ij}(\mathcal{G})|_0$ over all pairs $(i,j)$  as $P_{\max}(\mathcal{G})$.
\end{definition}

{Define $\Sigma_{\max}$ and $\Sigma_{\min}$ as the maximum and minimum diagonal elements of $\Sigma$, respectively.
	
	\begin{theorem}\label{thm:approx}
		Under the assumption $\lambda<\sigma_1$, the explicit solution \eqref{S_opt3} satisfies the $\epsilon$-relaxed KKT conditions for the GL with $\epsilon$ chosen as
		\begin{align}\label{epsilon}
		\epsilon =   \max\left\{\Sigma_{\max},\sqrt{\frac{\Sigma_{\max}}{\Sigma_{\min}}}\right\}\cdot\delta\cdot (P_{\max}(\text{\rm supp}(\Sigma^{\mathrm{res}}))-1)\cdot \left(\|\tilde{\Sigma}^{\mathrm{res}}\|_{\max}\right)^{\left\lceil\frac{c(\text{\rm supp}(\Sigma^{\mathrm{res}}))}{2}\right\rceil},
		\end{align}
		where 
		\begin{equation}\label{delta}
		\delta = 1+\frac{\mathrm{deg}(\text{\rm supp}(\Sigma^{\mathrm{res}}))\cdot\|\tilde{\Sigma}^{\mathrm{res}}\|_{\max}^2}{1-\|\tilde{\Sigma}^{\mathrm{res}}\|_{\max}^2}+\frac{\left(\mathrm{deg}(\text{\rm supp}(\Sigma^{\text{res}}))-1\right)}{1-\|\tilde{\Sigma}^{\mathrm{res}}\|_{\max}^2},
		\end{equation}
		if  the following conditions are satisfied:
		\begin{itemize}
			\vspace{-2mm}
			\item {\bf Condition 3-i.} $I_d+\tilde\Sigma^{\mathrm{res}}$ is positive-definite.\label{C22}
			\vspace{-2mm}
			\item {\bf Condition 3-ii.} $\|\tilde{\Sigma}^{\mathrm{res}}\|_{\max}^2\leq \underset{
				\begin{subarray}{c}
				i\not=j\\
				(i,j)\not\in\mathrm{supp}(\Sigma^{\mathrm{res}})
				\end{subarray}
			} {\min}\frac{\lambda-|\Sigma_{ij}|}{\sqrt{\Sigma_{ii}\Sigma_{jj}}}$.\label{C32}
		\end{itemize}
\end{theorem}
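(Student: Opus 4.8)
The plan is to exhibit an explicit $\epsilon$-relaxed inverse $B$ of the candidate matrix $A$ defined in \eqref{S_opt3} and to verify that the pair $(A,B)$ satisfies the three relaxed-KKT conditions \eqref{eps1}–\eqref{eps3}. The guiding intuition, borrowed from the proof of Lemma~\ref{l_beta}, is that on an acyclic graph the inverse-consistent complement $M^{(c)}_{ij}$ equals the product of the edge weights along the unique $i$–$j$ path. When $\mathrm{supp}(\Sigma^{\mathrm{res}})$ has cycles, there is no longer a unique path, so the natural surrogate is to \emph{sum} over all simple paths $\mathcal{P}_{ij}(\mathrm{supp}(\Sigma^{\mathrm{res}}))$ the product of the normalized residue entries along each path, and to truncate or approximate appropriately. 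Concretely, I would first construct a matrix $\hat{M}$ (an ``approximate inverse-consistent complement'' of $I_d+\tilde\Sigma^{\mathrm{res}}$) whose off-support entries are given by such path-sums, set $B = D^{1/2}\big((I_d+\tilde\Sigma^{\mathrm{res}}) + \hat{M}\big)D^{1/2}$ after the appropriate diagonal rescaling by $D$, and then show $A\cdot B = I_d + E$ entrywise.

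The core of the argument is the entrywise bound on the error matrix $E = AB - I_d$. Here I would use a Neumann-type expansion: since Condition~3-i gives $I_d+\tilde\Sigma^{\mathrm{res}}\succ 0$ and $\|\tilde\Sigma^{\mathrm{res}}\|_{\max}<1$, the true inverse $(I_d+\tilde\Sigma^{\mathrm{res}})^{-1}$ admits a convergent series $\sum_{t\ge 0}(-\tilde\Sigma^{\mathrm{res}})^t$, and the $(i,j)$ entry of the $t$-th power is a signed sum of products of residue entries over walks of length $t$ from $i$ to $j$. The matrix $A$ (up to the diagonal $D$-scaling) keeps only the contributions from walks that stay within the support edges incident to a single vertex or a single edge — i.e., the ``tree-like'' part — while the discarded terms are exactly those walks that must traverse a cycle, hence have length at least $c(\mathrm{supp}(\Sigma^{\mathrm{res}}))$. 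This is where the exponent $\lceil c/2\rceil$ enters: a closed detour around the shortest cycle contributes a factor of at least $\|\tilde\Sigma^{\mathrm{res}}\|_{\max}^{c}$, but the symmetric bookkeeping in the path-sum construction allows one to charge only $\|\tilde\Sigma^{\mathrm{res}}\|_{\max}^{\lceil c/2\rceil}$ to each side. Counting the number of surviving paths gives the factor $P_{\max}-1$, the maximum degree bounds how the error propagates through one matrix multiplication (producing the $\delta$ factor via the geometric series $\sum_k \mathrm{deg}\cdot\|\tilde\Sigma^{\mathrm{res}}\|_{\max}^{2k}$), and the diagonal rescaling by $D^{1/2}$ and $D^{-1/2}$ contributes the prefactor $\max\{\Sigma_{\max},\sqrt{\Sigma_{\max}/\Sigma_{\min}}\}$. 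Assembling these yields $|E_{ij}|\le\epsilon$ with $\epsilon$ as in \eqref{epsilon}.

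Once the $\epsilon$-relaxed-inverse property is in hand, conditions \eqref{eps1} and \eqref{eps2} should be nearly automatic from the construction: by design $B_{ij}=\Sigma_{ij}$ on the diagonal (the diagonal of $\hat M$ is zero and $D_{ii}=\Sigma_{ii}$), and for $(i,j)\in\mathcal{E}^{\mathrm{res}}$ one checks that $B_{ij}$ reproduces $\Sigma_{ij}+\lambda\,\mathrm{sign}(A_{ij})$ up to the same path-sum error $\epsilon$ — this uses that $A_{ij}$ has the sign of $-\Sigma^{\mathrm{res}}_{ij}$, matching the sign-consistency phenomenon of Lemma~\ref{thm:tt2}. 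Condition \eqref{eps3}, for entries $(i,j)\notin\mathrm{supp}(\Sigma^{\mathrm{res}})$, is where Condition~3-ii is needed: there $|B_{ij}-\Sigma_{ij}|$ is dominated by the path-sum term, which is at most $\|\tilde\Sigma^{\mathrm{res}}\|_{\max}^2\sqrt{\Sigma_{ii}\Sigma_{jj}}$ times a combinatorial factor, and Condition~3-ii is precisely the hypothesis that $\|\tilde\Sigma^{\mathrm{res}}\|_{\max}^2\le(\lambda-|\Sigma_{ij}|)/\sqrt{\Sigma_{ii}\Sigma_{jj}}$, so the bound closes at $\lambda+\epsilon$.

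The main obstacle I anticipate is the bookkeeping in the second step: correctly defining the truncated path-sum matrix $\hat M$ so that (a) it exactly cancels the tree-like part of the Neumann series for $A B$, leaving only genuine cycle-traversing walks, and (b) the residual admits the clean bound with exponent $\lceil c/2\rceil$ rather than merely $c$ or some intermediate value. Getting the $\lceil c/2\rceil$ sharp — as opposed to a weaker $c-1$ or $c/2$ without the ceiling — requires a careful split of each offending walk at its ``midpoint'' relative to the cycle and a matching of contributions from $A$ and from $B=\hat M$-augmented, which is the delicate combinatorial heart of the argument. Everything else (positive-definiteness giving convergence, degree bounds giving $\delta$, diagonal scaling giving the prefactor) is routine estimation once that construction is pinned down.
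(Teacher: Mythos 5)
Your proposal follows essentially the same route as the paper's proof: the paper also builds the approximate inverse-consistent complement $N$ of $M=I_d+\tilde\Sigma^{\mathrm{res}}$ by summing, over all simple paths in $\mathcal{P}_{ij}$, the products of the corresponding entries of $M$ (its Eq.~\eqref{N_new}), directly verifies entrywise that $\tilde A(M+N)=I_d+E$ with $|E_{ij}|\le\tilde\epsilon$, checks the three relaxed KKT conditions using Condition~3-ii exactly where you place it, and recovers the prefactor $\max\{\Sigma_{\max},\sqrt{\Sigma_{\max}/\Sigma_{\min}}\}$ by the $D^{1/2}(\cdot)D^{1/2}$ rescaling from the normalized problem. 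The only cosmetic difference is that you motivate the error bound via a Neumann/walk expansion, whereas the paper works purely with simple paths and obtains the exponent $\lceil c/2\rceil$ from the observation (its Lemma~\ref{lemma:cycle}) that any simple $i$--$j$ path other than a shortest one, unioned with the shortest path, contains a cycle of length at least $c$ and hence itself has length at least $\lceil c/2\rceil$ --- which is precisely the ``delicate combinatorial heart'' you flagged.
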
}


The number $\epsilon$ given in Theorem~\ref{thm:approx} is comprised of different parts:

{\begin{itemize}
		\item $\|\tilde{\Sigma}^{\mathrm{res}}\|_{\max}$: Notice that $\|\tilde{\Sigma}^{\mathrm{res}}\|_{\max}$ is strictly less than 1 and $\lambda$ is large when a sparse graph is sought. Therefore, $\|\tilde{\Sigma}^{\mathrm{res}}\|_{\max}$ is expected to be small for sparse graphs. Under this assumption, we have $0\leq \|\tilde{\Sigma}^{\mathrm{res}}\|_{\max}\ll 1$.
		\item $c(\text{\rm supp}(\Sigma^{\text{res}}))$: It is straightforward to verify that $c(\text{\rm supp}(\Sigma^{\text{res}}))$ is a non-decreasing function of $\lambda$. This is due to the fact that as $\lambda$ increases, $\Sigma^{\text{res}}(\lambda)$ becomes sparser and this results in a support graph with fewer edges. In particular,  if $d\geq 3$, then $c(\text{\rm supp}(\Sigma^{\text{res}}))=3$ for $\lambda = 0$ and $c(\text{\rm supp}(\Sigma^{\text{res}}))= +\infty$ for $\lambda = \sigma_1$ almost surely.
		\item $P_{\max}(\text{\rm supp}(\Sigma^{\text{res}}))$ and $\text{\rm deg}(\text{\rm supp}(\Sigma^{\text{res}}))$: These two parameters are also non-decreasing functions of $\lambda$ and  likely to be small for large $\lambda$. For a small $\lambda$, the numbers  $P_{\max}(\text{\rm supp}(\Sigma^{\text{res}}))$ and $\text{\rm deg}(\text{\rm supp}(\Sigma^{\text{res}}))$ could be on the order of $\mathcal{O}(d!)$ and $\mathcal{{O}}(d)$, respectively. However, these values are expected to be small for sparse graphs. In particular, it is easy to verify that for nonempty and acyclic graphs, $P_{\max}(\text{\rm supp}(\Sigma^{\mathrm{res}}))=1$.
\end{itemize}}
The above observations imply that if $\lambda$ is large enough and the support graph of $\Sigma^{\text{res}}$ is sparse, \eqref{S_opt3} serves as a good approximation of the optimal solution of the GL. In other words, it results from \eqref{epsilon}  that if $\text{supp}(\Sigma^{\text{res}})$ has a structure that is close to an acyclic graph, i.e., it has only a few cycles with moderate lengths, we have $\epsilon\approx 0$. In Section~\ref{simulations}, we will present  illustrative examples to show the accuracy of the closed-form approximate solution with respect to the size of the cycles in the sparsity graph. 

Consider the matrix  $A$ given in \eqref{S_opt3}, {and let  $\mu_{\min}(A)$ and $\mu_{\max}(A)$ denote its minimum and maximum eigenvalues, respectively. If $\lambda = \sigma_1$, then $A = D^{-1}$ (recall that $D$ collects the diagonal entries of $\Sigma$) and subsequently $\mu_{\min}(A) >0$.} Since $\mu_{\min}(\cdot)$ is a continuous function of $\lambda$, there exists a number $\lambda_0$   in the interval $(0,1)$ such that the matrix $A$ (implicitly defined based on $\lambda$) is positive-definite for every $\lambda\geq \lambda_0$.
The following theorem further elaborates on the connection between the closed-form formula and the optimal solution of the GL.

{\begin{theorem}\label{cor:perturb}
		There exists an strictly positive number $\lambda_0$ such that, for every $\lambda\geq \lambda_0$, the matrix $A$ given in \eqref{S_opt3} is the optimal solution of the GL problem after replacing $\Sigma$ with some perturbed matrix $\hat{\Sigma}$ that satisfies the inequality
		\begin{equation}\label{eq:perturb}
		\|\Sigma-\hat{\Sigma}\|_{2}\leq d_{\max}(A)\left(\frac{1}{\mu_{\min}(A)}+1\right) \epsilon,
		\end{equation}
		where $d_{\max}(A)$ is the maximum vertex cardinality of the connected components in the graph $\mathrm{supp}(A)$ and  $\epsilon$ is given in \eqref{epsilon}. Furthermore,~\ref{eq:perturb} implies that
		\begin{equation}\label{optgap}
		f(A)-f^*\leq \left(\mu_{\max}(A)+\mu_{\max}(S^{\mathrm{opt}})\right)d_{\max}(A)\left(\frac{1}{\mu_{\min}(A)}+1\right)\epsilon,
		\end{equation}
		where $f(A)$ and $f^*$ are the objective functions of the GL evaluated at $A$ and the optimal solution, respectively.  
\end{theorem}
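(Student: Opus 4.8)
The plan is to leverage Theorem~\ref{thm:approx}, which guarantees that for $\lambda \geq \lambda_0$ the matrix $A$ in \eqref{S_opt3} is positive-definite and satisfies the $\epsilon$-relaxed KKT conditions with the $\epsilon$ of \eqref{epsilon}. The strategy is to turn the $\epsilon$-relaxation into an exact perturbation: since $A$ only approximately satisfies the KKT conditions for $\Sigma$, I will construct a nearby $\hat\Sigma$ for which $A$ satisfies them \emph{exactly}, and then invoke Lemma~\ref{expsol} to conclude that $A$ is the true GL optimum for $\hat\Sigma$. The exact perturbation bound \eqref{eq:perturb} then follows by controlling the norm of $\Sigma - \hat\Sigma$, and the objective-gap bound \eqref{optgap} follows from a Lipschitz-type estimate on the GL objective as a function of the data matrix.

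First I would fix $\lambda \geq \lambda_0$ so that $A \succ 0$, and let $B$ be the $\epsilon$-relaxed inverse of $A$ furnished by Theorem~\ref{thm:approx}, so that $AB = I_d + E$ with $\|E\|_{\max}\leq\epsilon$ and $(A,B)$ satisfies \eqref{optcon2}. The natural candidate is $\hat\Sigma := A^{-1}$, restricted appropriately: observe that the KKT conditions in Lemma~\ref{expsol} only constrain $(S^{\text{opt}})^{-1}$ on $\mathrm{supp}(S^{\text{opt}})$ and (via inequalities) off it, so I want $\hat\Sigma_{ij} = (A^{-1})_{ij}$ for $(i,j)\in\mathrm{supp}(A)\cup\{\text{diagonal}\}$ and I need the off-support entries of $A^{-1}$ to lie within $\lambda$ of $\hat\Sigma_{ij}$. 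Since $B$ already satisfies \eqref{eps1}–\eqref{eps3} and $B$ is within $\epsilon$ (entrywise) of $A^{-1}$ on the support of $A$ — this is where the $\epsilon$-relaxed inverse property $AB - I_d = E$ is converted to $\|B - A^{-1}\|$ via $A^{-1} - B = A^{-1}(AB - I_d)(\cdot)$-type manipulations, giving $\|A^{-1}-B\| \leq \|A^{-1}\|\,\|E\| = \epsilon/\mu_{\min}(A)$ in operator norm — the exact equalities hold for $\hat\Sigma = A^{-1}$ with an error absorbed into the perturbation. Combining $\|\hat\Sigma - \Sigma\| \leq \|\hat\Sigma - B\| + \|B - \Sigma\|$ and bounding each piece by $\epsilon/\mu_{\min}(A)$ and $\epsilon$ respectively, then accounting for the block-diagonal structure of $\mathrm{supp}(A)$ (which lets one replace a global spectral bound by one scaled by $d_{\max}(A)$, since the entrywise-$\epsilon$ errors only propagate within connected components of size at most $d_{\max}(A)$), yields \eqref{eq:perturb}.

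For the objective-gap inequality \eqref{optgap}, I would write $f(A) - f^* = f_{\hat\Sigma}(A) + \langle \Sigma - \hat\Sigma, A\rangle - f^*$ and use that $A$ is optimal for $f_{\hat\Sigma}$, so $f_{\hat\Sigma}(A) \leq f_{\hat\Sigma}(S^{\mathrm{opt}})$; then $f(A) - f(S^{\mathrm{opt}}) \leq \langle \Sigma - \hat\Sigma, A - S^{\mathrm{opt}}\rangle$ (the $\|\cdot\|_{1,\mathrm{off}}$ terms and $\log\det$ terms cancel or telescope appropriately since the regularizer does not involve $\Sigma$), and Cauchy–Schwarz together with $\|A - S^{\mathrm{opt}}\|_2 \leq \mu_{\max}(A) + \mu_{\max}(S^{\mathrm{opt}})$ and \eqref{eq:perturb} gives \eqref{optgap}. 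The main obstacle I anticipate is the bookkeeping around \textbf{which} perturbed entries matter and getting the $d_{\max}(A)$ factor right: the $\epsilon$-relaxed KKT conditions are stated entrywise while the desired bound is in spectral norm, so one must carefully argue that the mismatch between $B$ and $A^{-1}$ confined to each connected component of $\mathrm{supp}(A)$ contributes a spectral-norm error of at most (component size)$\times\epsilon$ rather than $d\times\epsilon$, and that the off-support entries of $A^{-1}$ genuinely satisfy the slack inequality \eqref{optcon} — this last point needs Condition~3-ii and the path-counting estimates already used in the proof of Theorem~\ref{thm:approx}, so it should go through, but it is the delicate step.
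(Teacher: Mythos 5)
Your proposal follows essentially the same route as the paper: Theorem~\ref{thm:approx} supplies the $\epsilon$-relaxed KKT pair, the relaxation is absorbed into a perturbed $\hat{\Sigma}$ for which $A$ satisfies the exact conditions of Lemma~\ref{expsol}, the bound \eqref{eq:perturb} comes from the two pieces $A^{-1}E$ and the entrywise-$\epsilon$ slack each scaled by $d_{\max}(A)$ via the connected-component structure, and \eqref{optgap} follows from the optimality of $A$ for the perturbed problem exactly as you describe. The only cosmetic slip is writing $\|B-\Sigma\|\leq\epsilon$: on $\mathrm{supp}(A)$ that difference is of order $\lambda$, and the $\epsilon$-bounded quantity is $B_{ij}-\bigl(\Sigma_{ij}+\lambda\,\mathrm{sign}(A_{ij})\bigr)$, but since your $\hat{\Sigma}$ carries the same $\lambda\,\mathrm{sign}$ shift this cancels in the telescoping and the final bound is unaffected.
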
}

{As mentioned before, if a sparse solution is sought for the GL, the regularization coefficient would be large and this helps with the satisfaction of the inequality $\lambda\geq \lambda_0$. In fact, it will be shown through different simulations that $\lambda_0$ is small in practice and hence, this condition is not restrictive. Under this circumstance, Theorem \ref{cor:perturb} states that the easy-to-construct matrix $A$ is 1) the exact optimal solution of the GL  problem with a perturbed sample covariance matrix, and 2) it is the approximate solution of the GL with the original sample covariance matrix. The magnitudes of this perturbation and approximation error are a function of $d_{\max}(A)$, $\mu_{\min}(A)$, $\mu_{\max}(A)$, $\mu_{\max}(S^{\mathrm{opt}})$, and $\epsilon$. Furthermore, it should be clear that $A$ and $\epsilon$ are functions of $\lambda$ and $\Sigma$ (we dropped this dependency for simplicity of notation). Recall that the disjoint components (or the vertex partitions) of $\text{supp}(A)$ satisfy a nested property: given $1\geq\lambda_1>\lambda_2\geq0$, the components of $\text{supp}(A)$ for $\lambda=\lambda_1$ are nested within the components of $\text{supp}(A)$ for $\lambda=\lambda_2$ (see {\cite{Mazumdar12}} for a simple proof of this statement). This implies that $d_{\max}(A)$ is a decreasing function of $\lambda$. In particular,  it can be observed that $d_{\max}(A) = d$ if $\lambda = 0$ and $d_{\max}(A) = 1$ if $\lambda = \sigma_1$. Now, consider $\mu_{\min}(A)$, $\mu_{\max}(A)$, and $\mu_{\max}(S^{\mathrm{opt}})$. First, note that if $\lambda = \sigma_1$,  then $A = S^{\mathrm{opt}} = D^{-1}$. Furthermore, it is easy to verify that both $A$ and $S^{\mathrm{opt}}$ are continuous functions of $\lambda$. Therefore, for large values of $\lambda$, $\mu_{\min}(A)$, $\mu_{\max}(A)$, and $\mu_{\max}(S^{\mathrm{opt}})$ are expected to be close to $1/\Sigma_{\max}$, $1/\Sigma_{\min}$, and $1/\Sigma_{\min}$, respectively. In addition, as discussed earlier, $\epsilon$ is a decreasing function of $\lambda$ and vanishes when $\lambda$ is large enough. Based on these observations, it can be concluded that the upper bound presented in \eqref{eq:perturb} is small if $\lambda$ is chosen to be large. }

Notice that although the aforementioned value of $\epsilon$ in \eqref{epsilon} and the upper bound in \eqref{eq:perturb} were essential in the study of the effect of the sparsity of the support graph on the accuracy of the presented closed-form solution, they are conservative in practice. These numbers may be tightened significantly for  specific sample covariance matrices. We will further discuss the approximation error of the closed-form solution in Section~\ref{simulations}.

\paragraph{Warm-start algorithm}
{As delineated before, one of the main strengths of the proposed closed-form solution is that it can be used as an initial point (warm-start) for the numerical algorithms specialized for solving the GL. To this goal, the following warm-start procedure is proposed.}

\vspace{2mm}
{\begin{algorithm}[H]
		\SetAlgoLined
		\KwData{data samples ($\mathbf{x}$), and regularization coefficient ($\lambda$)}
		\KwResult{Solution of the GL ($S^{\mathrm{opt}}$)}
		Obtain the residue matrix $\Sigma^{\mathrm{res}}$ based on Definition~\ref{residue} and the closed-form solution $A$ from~\eqref{S_opt3}\;
		\For{each component $i$ in $\mathrm{supp}(\Sigma^{\mathrm{res}})$}{
			\eIf{Conditions 2-i, 2-ii, 2-iii are satisfied}
			{$S^{\mathrm{opt}}[i]\gets A[i]$\;}{
				Find $S^{\mathrm{opt}}[i]$ by numerically solving the GL for component $i$ with initial point $A[i]$\;
			}
		}
		\caption{Warm-start algorithm}\label{algo1}
	\end{algorithm}
	\noindent In the above algorithm, $S^{\mathrm{opt}}[i]$ and $A[i]$ are the submatrices of $S^{\mathrm{opt}}$ and $A$ corresponding to the $i^{\text{th}}$ component of $\mathrm{supp}(\Sigma_{\mathrm{res}})$. The warm-start algorithm is based on the key fact that the GL decomposes over the disjoint components of $\mathrm{supp}(\Sigma_{\mathrm{res}})$~\cite{Mazumdar12, Witten11}. In particular, in the first step, the warm-start algorithm obtains the residue matrix according to Definition~\ref{residue}. Next, for every disjoint component of the residue matrix, if its support graph is acyclic and the conditions of Theorem~\ref{th2} are satisfied, then the corresponding component in $S^{\mathrm{opt}}$ is found using the closed-form solution~\eqref{S_opt2}. Otherwise, this closed-form solution is provided as an initial point to a numerical algorithm, such as GLASSO and QUIC~\cite{friedman2008sparse, Hsieh14}, in order to boost the runtime of solving the GL for the considered component. The results of the warm-start algorithm will be evaluated in the next section.}

{\begin{remark}
		The statistical analysis of the GL entails that $\lambda$ should converge to zero as the number of samples grows to infinity. It is worthwhile to mention that our results may not be applicable in the high sampling regime, where $\lambda$ is close to zero and consequently the thresholded sample covariance matrix is dense. However, notice that the main strength of the GL lies in the high dimensional-low sampling regime where $n$ is much smaller than $d$ and is in the order of $\log d$. Under such circumstances, the proposed explicit formula results in highly accurate solutions for the GL. In fact, it will be shown through massive-scale simulations that in practice, the required conditions on $\lambda$---such as the positive-definiteness of $I_d+\tilde\Sigma^{\mathrm{res}}$---for the validity of the presented results are much more relaxed than the known conditions on $\lambda$ to guarantee the statistical consistency of the GL.
\end{remark}}

\section{Numerical Results}\label{simulations}

{In this section, we will demonstrate the effectiveness of the proposed methods on synthetically generated data, as well as on real data collected from the brain networks and transportation systems.}

{\subsection{Case Study on Synthetic Data}}

%

Given a nonnegative number $\omega$, consider an arbitrary  sample covariance matrix $\Sigma$ with the following properties:
\begin{itemize}
	\item Its diagonal elements are normalized to 1.
	\vspace{-2mm}
	\item The entries corresponding to an arbitrary spanning tree of $\text{supp}(\Sigma)$ belong to the union of the intervals $[-0.85,-0.95]$ and $[0.85,0.95]$.
	\vspace{-2mm}
	\item The off-diagonal entries that do not belong to the spanning tree are in the interval $[-0.85+\omega, 0.85-\omega]$.
\end{itemize}
The goal is to find conditions on $\lambda$, $\omega$ and the size of the covariance matrix such that Theorem \ref{th2} can be used to obtain a closed-form solution for the GL problem. One can choose the value of $\lambda$ to be greater than $\sigma_{d}$ to ensure that the graph $\text{supp}(\Sigma^{\text{res}})$ is acyclic. In particular, if we pick $\lambda$ in the interval $(\sigma_{d},\sigma_{d-1})$, the graph $\text{supp}(\Sigma^{\text{res}})$ becomes a spanning tree. 

{Select $\lambda$ as $0.85-\epsilon$ for a sufficiently small number $\epsilon$ and consider Condition (2-ii) in Theorem \ref{th2}. One can easily verify that $I_d+\Sigma^{\text{res}}$ is positive-definite if the inequality $
	\frac{1}{\text{deg}(v)}>(\sigma_1-\lambda)^2
	$ holds
	for every node $v$ in $\text{supp}(\Sigma^{\text{res}})$, where $\text{deg}(v)$ is the degree of node $v$. This condition is guaranteed to be satisfied for all possible acyclic graphs if $(\mathrm{deg}(v))(0.95-0.85)^2<1$ or equivalently $\mathrm{deg}(v)\leq 100$ for every node $v$. Regarding Condition (2-iii), it can be observed that the relation $(\sigma_1-\lambda)^2\leq \lambda-\sigma_{k+1}$ holds if $(0.95-0.85)^2<0.85-(0.85-\omega)$. This implies that the inequality $\omega>0.01$ guarantees the satisfaction of Condition (2-iii) for every acyclic graph $\text{supp}(\Sigma^{\text{res}})$. In other words, one can find the optimal solution of the GL problem  using the explicit formula in Theorem \ref{th2} as long as:  1) a spanning tree structure for the optimal solution of the GL problem is sought, 2) the degree of each node in the spanning tree is not greater than 100, and (3) the difference between $\sigma_{d-1}$ and $\sigma_d$ is greater than $0.01$. Note that Condition (2)  is conservative and can be dropped for certain types of graphs (e.g., path graphs). In practice, the positive-definiteness of  $I_d+\Sigma^{\text{res}}$ is not restrictive; we have verified that this matrix is positive-definite for randomly generated instances with the sizes up to $d = 200,000$ even when $\mathrm{deg}(v)> 100$. }

\begin{figure*}
	\centering
	\includegraphics[width=.55\columnwidth]{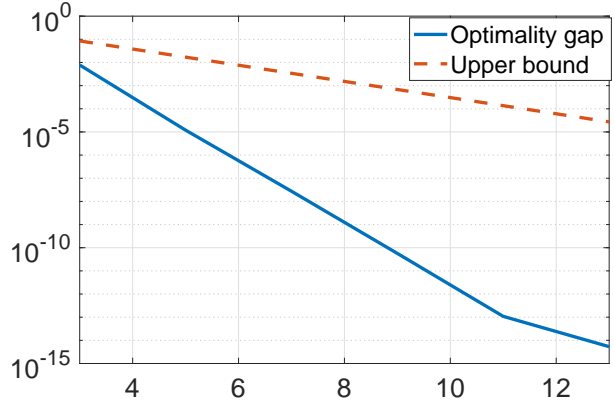}
	\hspace{0.1cm}
	\caption{ \footnotesize {The optimality gap between the closed-form and optimal solutions for the GL}}
	\label{Error_cycle}
\end{figure*}

{Now, consider the following modifications in the experiment:
	\begin{itemize}
		\item The elements of $\Sigma$ corresponding to a cycle of length $d$ are randomly set to $-0.8$ or $0.8$ with equal probability.
		\vspace{-2mm}
		\item The off-diagonal entries that do not correspond to the above cycle are in the interval $[-0.7, 0.7]$.
	\end{itemize}
	If $\lambda$ is chosen as $0.75$, then the graph $\text{supp}(\Sigma^{\text{res}})$ coincides with a cycle of length $d$. Furthermore, $I_d+\Sigma^{\text{res}}$ is diagonally dominant and hence positive-definite for every $d$. Figure~\ref{Error_cycle} shows the optimality gap of the proposed closed-form solution and its derived theoretical upper bound (i.e. the left and right hand sides of ~\eqref{optgap}, respectively) with respect to the length of the cycle $d$ in log-linear scale. (note that  $\text{deg}(\text{\rm supp}(\Sigma^{\text{res}}))$ and $P_{\max}(\text{\rm supp}(\Sigma^{\text{res}}))$ in~\eqref{optgap} are both equal to 2).
	Two important observations can be made based on this figure.
	\begin{itemize}
		\item In practice, the performance of the derived closed-form solution is significantly better than its theoretical upper bounds. In fact, this error is less than $10^{-6}$ when the length of the minimum-length cycle is at least 6. The high accuracy of the closed-form solution will become more evident in the subsequent case studies on large-scale problems.
		\item It can be seen that the logarithm of the optimality gap is approximately a linear function of the cycle length. This matches the behavior of the theoretical bounds introduced in Theorems~\ref{thm:approx} and~\ref{cor:perturb}: the approximation error is exponentially decreasing with respect to the length of the minimum-length cycle.
\end{itemize}}

\subsection{Case Study on Brain Networks}

\begin{figure*}\label{key}
	\centering
	\subfloat[Number of edges]{\label{Exact_edge_fMRI_combine}
		\includegraphics[width=.5\columnwidth]{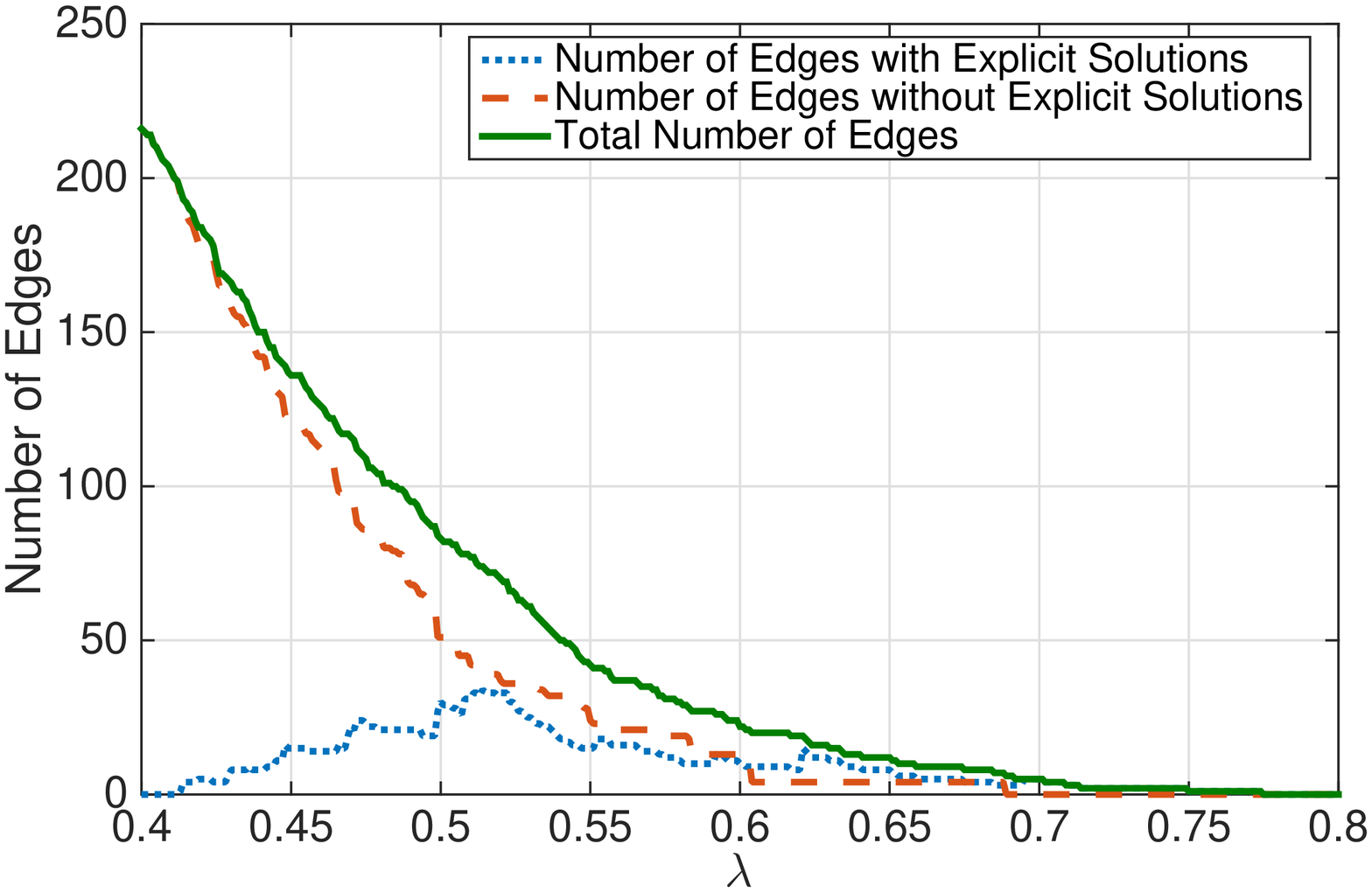}}
	\hspace{-0.6cm}
	\subfloat[Number of nodes]{\label{Exact_node_fMRI_combine}
		\includegraphics[width=.5\columnwidth]{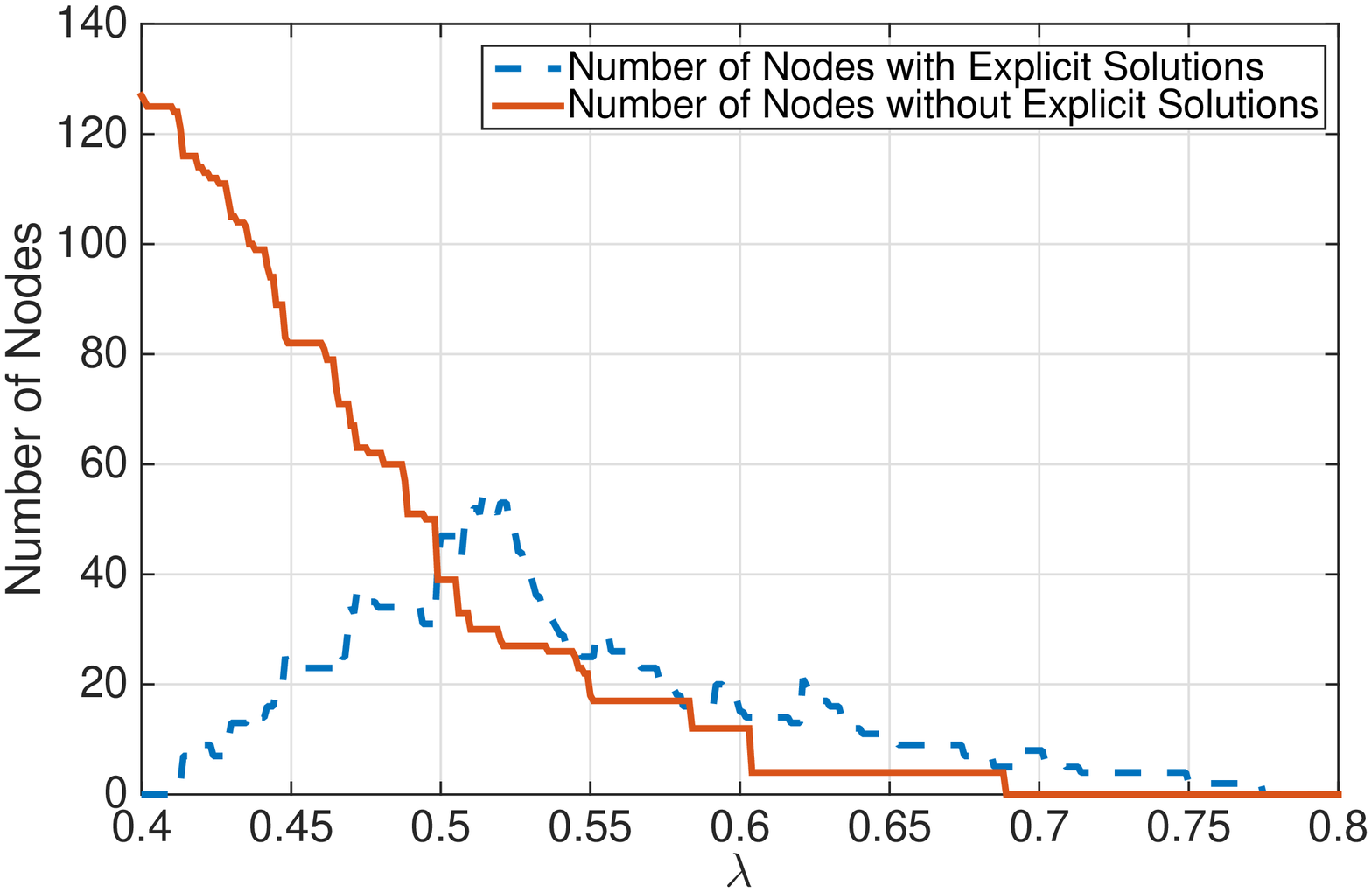}}
	
	\subfloat[Number of nonzeros and mismatches]{\label{Nonzero_fMRI_combine}
		\includegraphics[width=.5\columnwidth]{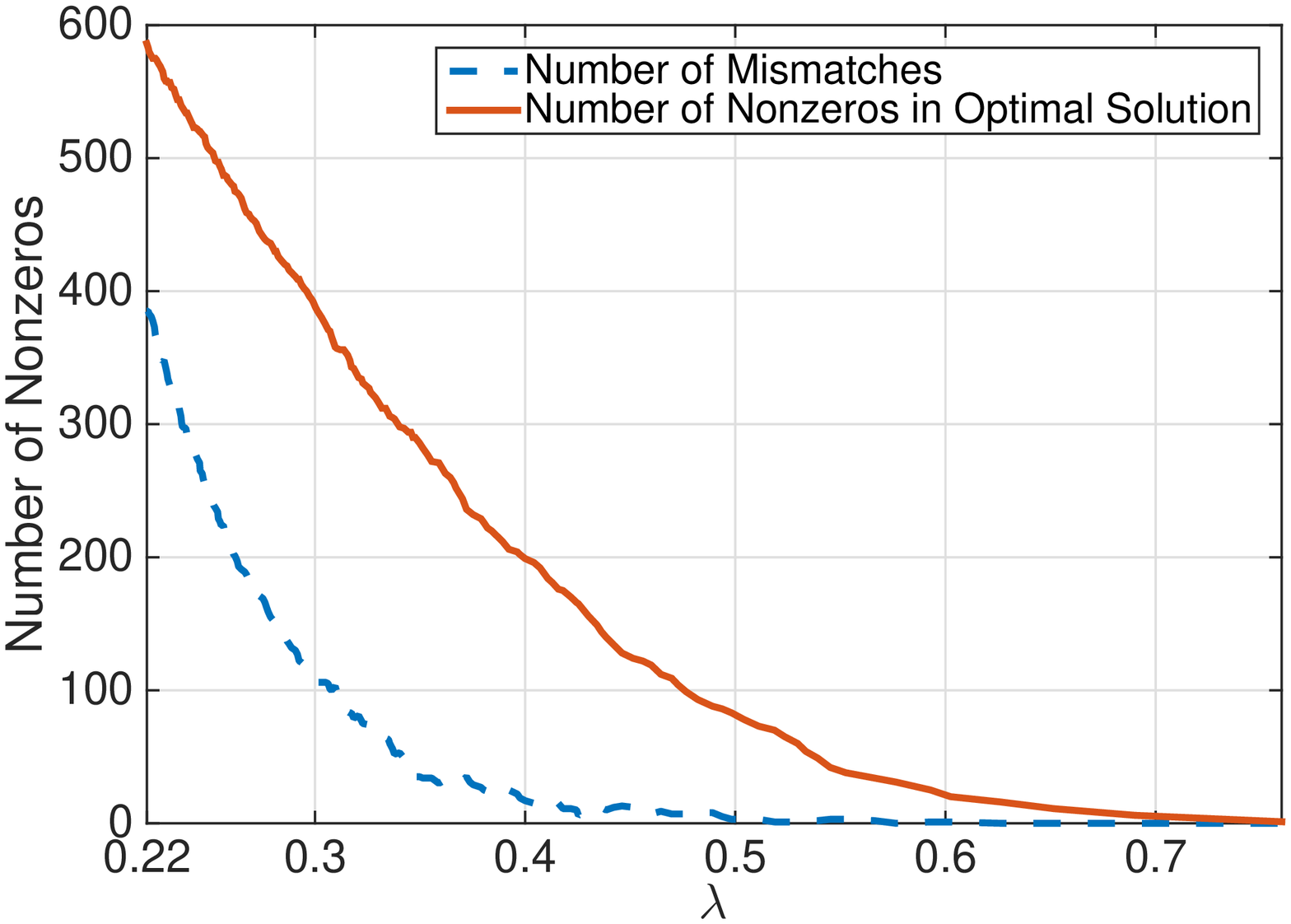}}
	\hspace{-0.6cm}
	\subfloat[Minimum eigenvalue]{\label{MinEig_fMRI_combine}
		\includegraphics[width=.5\columnwidth]{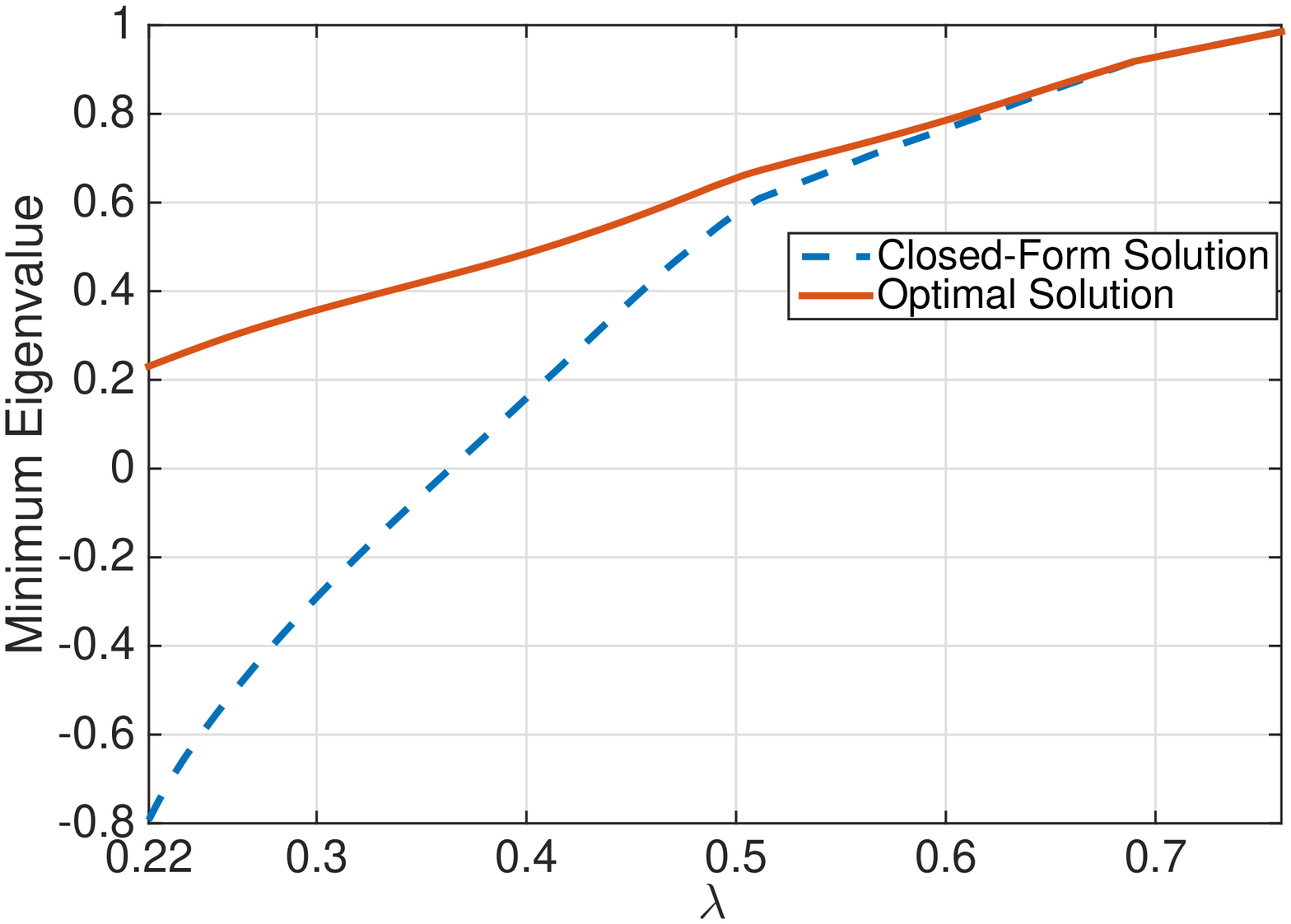}}
	
	\subfloat[2-norm of the approximation error]{\label{Normdiff_fMRI_combine}
		\includegraphics[width=.5\columnwidth]{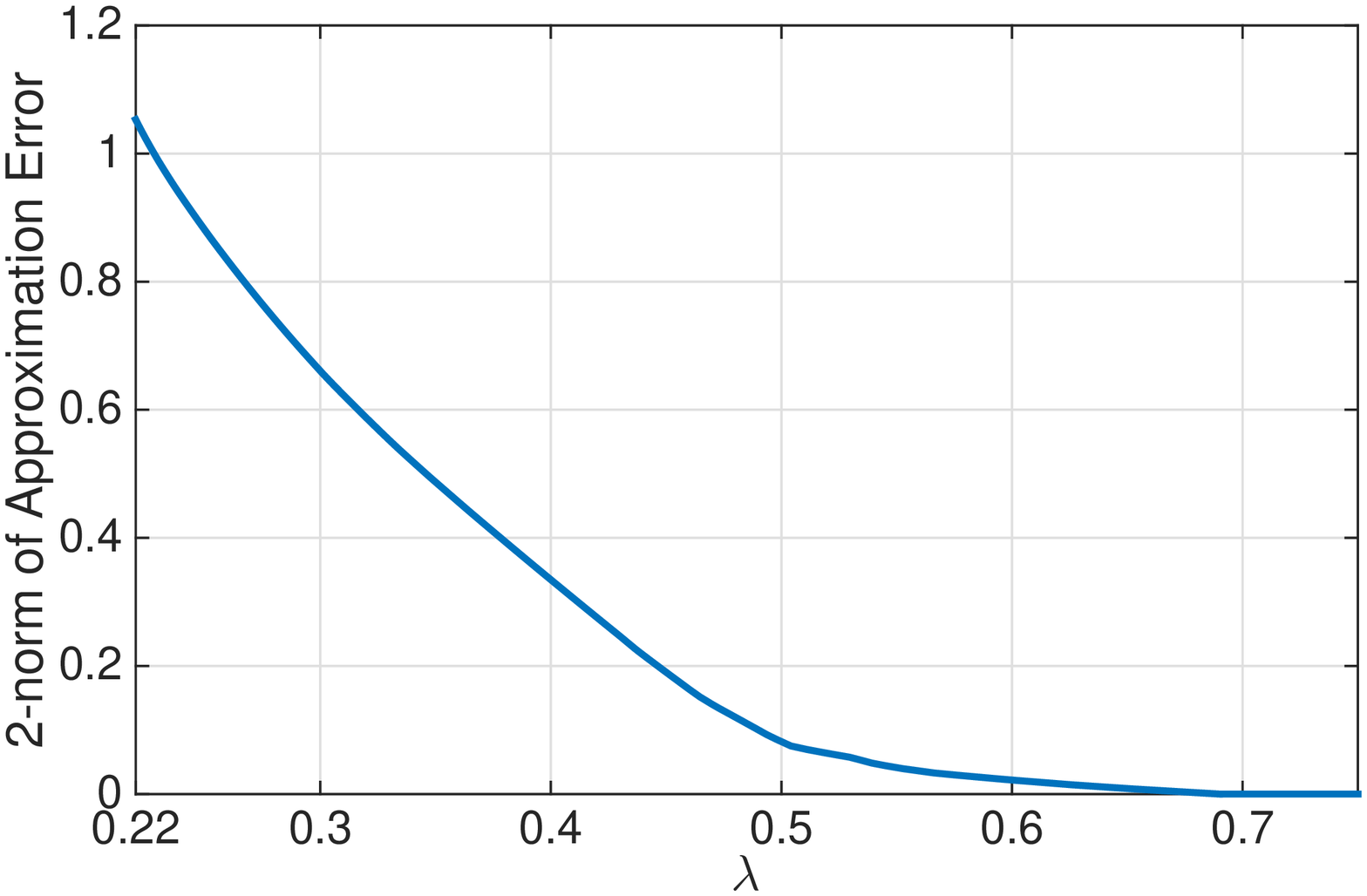}}
	\hspace{-0.6cm}
	\subfloat[Similarity degree]{\label{Similarity_fMRI_combine}
		\includegraphics[width=.5\columnwidth]{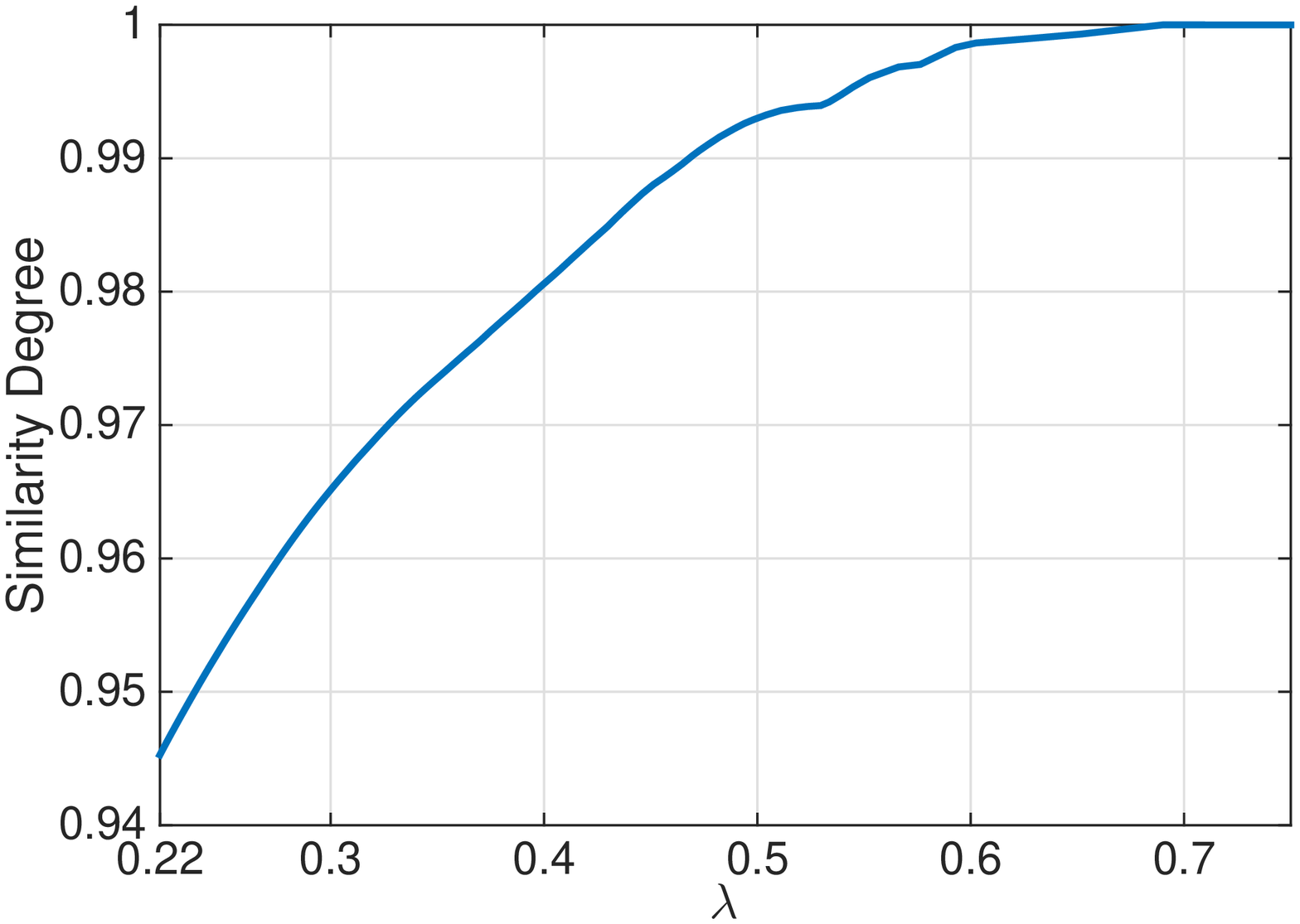}}

	\caption{ \footnotesize a) Number of edges in the sparsity graph of the closed-form approximate solution  whose corresponding entries are guaranteed to be equal to those in the sparsity graph of the optimal solution due to  Theorem~\ref{th2}. b) Number of nodes that belong to the components for which the corresponding submatrices of the optimal solution have explicit formulas. c) Number of edges in the sparsity graph of the optimal solution, compared to the number of mismatches. d) Minimum eigenvalues of the optimal and  closed-form approximate solutions. e) The 2-norm of the difference between the optimal and approximate solutions. f) The similarity degree between the optimal and  approximate solutions.}
\end{figure*}

Consider the problem of estimating the brain functional connectivity network based on a set of resting state functional MRI (fMRI) data  collected from 20 individual subjects~\cite{Vertes12}.  The data  for each subject correspond to disjoint brain activities and are correlated due to the underlying functional connectivity structure of the brain. In order to represent these dependencies, each disjoint region of the brain can be considered as a node and the correlation between two different regions can be resembled by an edge between the nodes. The data set for each subject consists of 134 samples of low frequency oscillations taken from 140 different cortical brain regions. We construct a normalized sample covariance matrix by combining the data sets of all 20  subjects (note that the data for each individual is limited and not informative enough, but the combined data provides rich information about the brain network). The goal is to use the GL to estimate the underlying functional connectivity network of different regions of the brain based on the obtained $140\times 140$ sample covariance matrix. We study the thresholded sample covariance matrix and the derived closed-form solution for different values of the regularization coefficient in order to analyze their accuracy.


Figure~\ref{Exact_edge_fMRI_combine} shows the number of edges in the sparsity graph of the thresholded sample covariance matrix that belong to those  connected components satisfying the conditions in Theorem~\ref{th2}. The formula derived in this paper is able to find the optimal values of the  entries of the solution corresponding to these edges. {It can be observed that if $\lambda$ is greater than 0.51, then almost half of the edges in the sparsity graph of the optimal solution can be found using the proposed explicit formula. This is due to the fact that the corresponding entries in the residue matrix belong to the acyclic components of its sparsity graph and satisfy the conditions of Theorem~\ref{th2}.}
Figure~\ref{Exact_node_fMRI_combine} depicts the number of nodes that belong to the components (with  sizes greater than 1) for which the corresponding submatrices of the solution of the GL have an explicit formula.   Note that those entries in the optimal solution that correspond to isolated nodes are trivially equal to 0. Therefore, in order to better reflect the significance of the derived solution, we have only considered the components with at least two nodes. It can be observed that if $\lambda$ is greater than 0.5, then the number of nodes belonging to the components with explicit formula is greater than the number of those nodes associated with inexact closed-form solutions. Figure~\ref{Nonzero_fMRI_combine} demonstrates the number of edges in the sparsity graph of the optimal solution, together with the number of mismatches in the edge sets of the sparsity graphs of the optimal and thresholded solutions. Notice that the number of  mismatches is less than $10\%$ when $\lambda$ is greater than 0.35 and  is almost 0 when $\lambda$ is greater than 0.5. 

Figure~\ref{MinEig_fMRI_combine} shows the minimum eigenvalues of the optimal and closed-form approximate solutions for different values of $\lambda$. The approximate solution is positive-definite when $\lambda$ is greater than 0.37. This implies that $\lambda_0$ in Corollary~\ref{cor:perturb} is equal to $0.37$. Figures~\ref{Normdiff_fMRI_combine} and~\ref{Similarity_fMRI_combine} depict the 2-norm of the approximation error (the difference between the optimal and closed-form approximate solutions) and the similarity degree between these two solutions, which is defined as
\begin{equation}\notag
\text{similarity degree}= \frac{\text{trace}(\tilde S^{\text{opt}}\times \tilde A)}{\| \tilde S^{\text{opt}}\|_F\times  \|\tilde A\|_F},
\end{equation}
where $\tilde S^{\text{opt}} = S^{\text{opt}}-I_d$ and $\tilde A = A-I_d$. Subtracting the identity matrix from $A$ and $S^{\text{opt}}$ is due to the observation that both matrices have diagonal entries close to 1 when the support graph is sparse. This leads to an artificially inflated similarity degree between $A$ and $S^{\text{opt}}$. Therefore, in order to have a better assessment of the similarity between the closed-form and optimal solutions, we measure the similarity between $A$ and $S^{\text{opt}}$ after softening the effect of their diagonal entries. The similarity degree of 1 means that the optimal and approximate solutions are exactly equal.

It can be observed that the approximation error is small and the similarity degree is high for a wide range of values of $\lambda$. For instance, if $\lambda$ is greater than 0.4, then the 2-norm of the approximation error is less than $0.37$ and the similarity degree is greater than $0.98$. For these values of $\lambda$, the number of edges in the sparsity graph of the optimal solution ranges from 200 to 0. In all of these cases,  the structure and  values of the optimal solution can be estimated efficiently, without solving the optimization problem numerically.

\subsection{Case Study on Transportation Networks}

\begin{figure*}
	\centering
	\subfloat[Number of nonzeros and mismatches]{\label{Nonzero_Region3_1Week_March6to12}
		\includegraphics[width=.45\columnwidth]{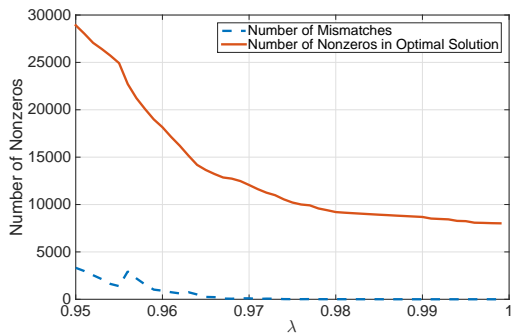}}
	\hspace{-0.6cm}
	\subfloat[Minimum eigenvalue]{\label{MinEig_Region3_1Week_March6to12}
		\includegraphics[width=.45\columnwidth]{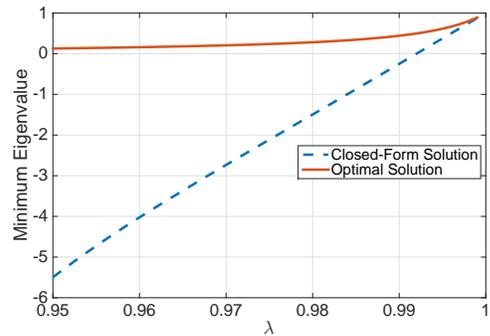}}
	
	\subfloat[2-norm of the approximation error]{\label{Normdiff_Region3_1Week_March6to12}
		\includegraphics[width=.45\columnwidth]{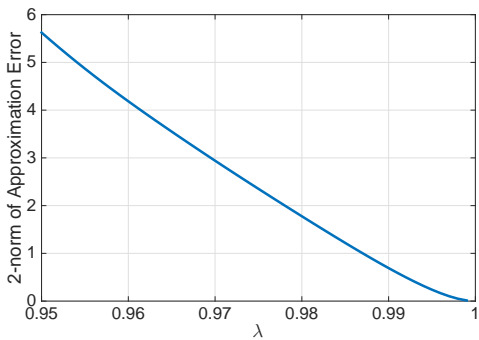}}
	\hspace{-0.6cm}
	\subfloat[Similarity degree]{\label{Similarity_Region3_1Week_March6to12}
		\includegraphics[width=.45\columnwidth]{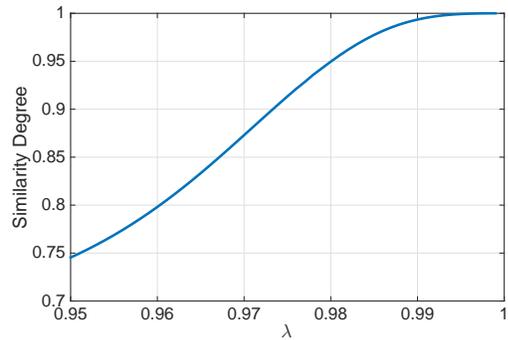}}

	\caption{ \footnotesize a) Number of edges in the sparsity graph of the optimal solution, compared to the number of mismatches. b) Minimum eigenvalues of the optimal and closed-form approximate solutions. c) The 2-norm of the difference between the  optimal and approximate solutions. d) The similarity degree between the optimal and approximate solutions.}
	\label{Trans}
\end{figure*}

In recent years, the problem of short- and long-term traffic flow prediction and control has attracted much attention in Intelligent Transportation Systems (ITSs)~\cite{Lino01}. 
Estimating the correlation between the traffic flows on different links of a transportation network is one of the crucial steps toward the traffic congestion control in the network; it can also serve as an initial block in different traffic forecasting methods. Substantial research has been devoted to extracting these dependencies and performing predictions based on the measured data (see \cite{Yin02, Rafegh17, Sun12} and the references therein). In this case study, the objective is to construct a sparse matrix representing the conditional covariance between the traffic flows of different links in the network. The data  is collected from the Caltrans Performance Measurement System (PeMS) database, which consists of traffic information of freeways on the a statewide scale across California~\cite{PeMS}. We consider the data  measured by the stations deployed in District 3 of California, which is collected and aggregated every 5 minutes from 1277 stations during March 6$^{\text{th}}$ to March 12$^{\text{th}}$ of the year 2017 (one-week interval). Due to the malfunctioning of some of the detectors, a non-negligible portion of the traffic flows was missing from the raw data set. Therefore, the following steps were taken before solving the GL problem in order to obtain a useful representation of the raw data:
\begin{itemize}
	\item Since 228 stations did not have sufficient number of measurements during the one-week period, they were removed from the sampled data.
	\item In a few stations, the detectors did not measure the traffic flow for some periods of time. For these data samples, we used a linear interpolation method  to estimate the missing values.
\end{itemize}
After performing the aforementioned data-cleaning steps, a $1049\times 1049$ normalized sample covariance matrix was constructed from the combined 2016 data samples (288 samples for each day of the week).  In Figure~\ref{Trans}, the accuracy of the thresholding technique and its corresponding closed-form approximate solution is compared to the optimal solution of the GL problem for different values of the regularization coefficient.

Since the number of entries in the upper triangular part of the sample covariance matrix is large (roughly 550,000 entries), we have only considered large values of $\lambda$ in order to obtain a sparse solution for the GL. Figure~\ref{Nonzero_Region3_1Week_March6to12} shows the number of edges in the sparsity graph of the optimal solution, compared to the number of mismatches between the edge sets of the sparsity graphs of the optimal and closed-form solutions. It can be observed that as $\lambda$ increases, the support graph of the optimal solution becomes sparser and the number of mismatches decreases. In particular, the number of mismatches is almost zero if $\lambda$ is chosen to be greater than $0.97$. Figure~\ref{MinEig_Region3_1Week_March6to12} depicts the minimum eigenvalues of the optimal and closed-form approximate solutions of the GL with respect to $\lambda$. The approximate solution becomes positive-definite if $\lambda$ is greater than $0.991$. Furthermore, Figures~\ref{Normdiff_Region3_1Week_March6to12} and~\ref{Similarity_Region3_1Week_March6to12} show that, for those values of $\lambda$ between $0.991$ and $0.999$, the 2-norm of the approximation error is between $0.5$ and $0.01$, and that the similarity degree is greater than $0.99$. For this range of $\lambda$, the number of edges in the sparsity graph of the optimal solution is  $7.82$ to $7.40$ times higher the number of nodes. 

{\subsection{Case Study on Large-Scale Data}
	
	In this case study, we evaluate the performance of the proposed closed-form solution on massive randomly generated data sets. Given $d$ (the dimension of each sample) and similar to~\cite{Hsieh14} and~\cite{yun2011coordinate}, a sparse inverse covariance matrix is generated for each test case according to the following procedure: first, a sparse matrix $U\in\mathbb{R}^{d\times d}$ is generated whose nonzero elements are randomly set to $+1$ or $-1$, with equal probability. Then, the inverse covariance matrix is set to $UU^{\top}+2I$. Depending on the test case, the number of nonzero elements in $U$ is controlled so that the resulted inverse covariance matrix has approximately $5d$ or $10d$ nonzero elements. $n = d/2$ number of i.i.d. samples are drawn from the corresponding multivariate Gaussian distribution in all experiments, except for the largest test case with $d = 80000$. This instance has more than $3.2$ billion variables and only $n = 20000$ samples are collected to solve the GL due to the memory limitations. Furthermore, the regularization coefficient is chosen such that the estimated solution has approximately the same number of nonzero elements as the ground truth.
	
	Table~\ref{Table} reports the runtime of the closed-form solution, compared to two state-of-the-art methods for solving the GL, namely QUIC~\cite{Hsieh14} and GLASSO~\cite{friedman2008sparse} algorithms, as well as elementary estimator~\cite{yang2014elementary}. The GLASSO is the most widely-used algorithm for the GL, while the QUIC algorithm is commonly regarded as the fastest available solver for this problem. The elementary estimator is recently proposed in lieu of the GL to remove its computational burden, while preserving its desired high-dimensional properties. We use the source codes for latest versions of QUIC and GLASSO in our simulations. In particular, we use the QUIC 1.1 (available in \url{http://bigdata.ices.utexas.edu/software/1035/}) which is implemented in $C\!+\!+$ with MATLAB interface. The GLASSO is downloaded from \url{http://statweb.stanford.edu/~tibs/glasso/} and is implemented in FORTRAN with MATLAB interface. We implemented the elementary estimator and the proposed closed-form solution in MATLAB using its sparse package. A time limit of 4 hours is considered in all experiments. Table~\ref{Table} has the following columns:
	
	\begin{itemize}
		\item $d$: The dimension of the samples.
		\item $m$: The number of nonzero elements in the true inverse covariance matrix.
		\item Closed-form: The runtime of the proposed method.
		\item QUIC-C and GLASSO-C: The runtime of the QUIC and GLASSO without initialization.
		\item QUIC-W and GLASSO-W: The runtime of the QUIC and GLASSO using the warm-start Algorithm~\ref{algo1}.
		\item Elem.: The runtime of the elementary estimator.
	\end{itemize}
	
	\begin{table}\small\centering
		\begin{tabular}{ c|c||c||c|c||c|c||c}
			$d$ & $m$ & Closed-Form & QUIC-C & QUIC-W & GLASSO-C & GLASSO-W & Elem. \\  
			\hline
			$2000$ & $9894$ & $0.1$ & $2.0$ & $1.4$ & $42.8$ & $13.5$ & $0.2$\\
			\hline
			$2000$ & $20022$ & $0.1$ & $3.0$ & $2.1$ & $43.8$ & $15.3$ & $0.2$\\
			\hline
			$4000$ & $20094$ & $0.5$ & $13.9$ & $7.5$ & $460.8$ & $135.1$ & $2.1$\\
			\hline
			$4000$ & $40382$ & $0.5$ & $21.5$ & $12.0$ & $467.6$ & $156.2$ & $2.9$\\
			\hline
			$8000$ & $40218$ & $2.5$ & $78.7$ & $49.3$ & $3675.1$ & $1011.2$ & $11.3$\\
			\hline
			$8000$ & $79890$ & $2.5$ & $111.7$ & $88.4$ & $3784.3$ & $1278.8$ & $22.2$\\
			\hline
			$12000$ & $60192$ & $7.8$ & $243.8$ & $153.1$ & $\star$ & $3233.0$ & $31.8$\\
			\hline
			$12000$ & $119676$ & $7.4$ & $333.6$ & $251.0$ & $\star$ & $3437.2$ & $70.2$\\
			\hline
			$16000$ & $80064$ & $17.1$ & $570.0$ & $322.8$ & $\star$ & $6545.0$ & $67.2$\\
			\hline
			$16000$ & $160094$ & $18.5$ & $787.4$ & $616.4$ & $\star$ & $9960.8$ & $174.8$\\
			\hline
			$20000$ & $99954$ & $39.4$ & $1266.5$ & $539.4$ & $\star$ & $\star$ & $107.8$\\
			\hline
			$20000$ & $200018$ & $37.4$ & $1683.8$ & $1392.5$ & $\star$ & $\star$ & $211.5$\\
			\hline
			$40000$ & $200290$ & $495.4$ & $\star$ & $\star$ & $\star$ & $\star$ & $\star$\\ 
			\hline
			$80000$ & $401798$ & $1450.4$ & $\star$ & $\star$ & $\star$ & $\star$ & $\star$\\ 
		\end{tabular}
		\caption{ \footnotesize {The runtime of different methods.}}\label{Table}
	\end{table}
	
	In all of the test cases, the resulted closed-form solution is positive-definite and hence, feasible.
	It can be seen that the proposed method significantly outperforms QUIC, GLASSO and elementary estimator in terms of its runtime. In particular, the presented method is on average $6$, $36$, and $951$ times faster than elementary, QUIC, and GLASSO methods, respectively, provided that they can obtain the solution within the predefined time limit. Furthermore, for the cases where the GL can be solved to optimality using QUIC, the relative optimality gap of the closed-form solution, i.e., $(f(A)-f^*)/f^*$, is $2.1\times 10^{-3}$ on average. For the cases with $d = 40000$ and $d = 80000$, none of these methods converge to a meaningful solution, while the proposed method can obtain an accurate solution in less than 30 minutes. On the other hand, the warm-start Algorithm~\ref{algo1} accompanied by QUIC and GLASSO yields up to $2.35$ and $4.45$ times speedups in their runtime, respectively. Moreover, the warm-start algorithm doubles the size of the instances that are solvable using the GLASSO.
	
	Table~\ref{Table2} compares the accuracy of the estimated inverse covariance matrix using different methods. This table includes the following columns:
	\begin{itemize}
		\item $\ell_F$: The Frobenius norm of the difference between the true and estimated inverse covariance matrices, normalized by the Frobenius norm of the true inverse covariance matrix.
		\item TPR and FPR: The true positive rate (TPR) and false positive rate (FPR) defined as
		\begin{align}
		& \text{TPR} = \frac{\left|(i,j): i\not=j, S_{ij}\not= 0, (\Sigma_*^{-1})_{ij}\not= 0\right|_0}{\left|(i,j): i\not=j, (\Sigma_*^{-1})_{ij}\not= 0\right|_0},\notag\\
		& \text{FPR} = \frac{\left|(i,j): i\not=j, S_{ij}\not= 0, (\Sigma_*^{-1})_{ij}= 0\right|_0}{\left|(i,j): i\not=j, (\Sigma_*^{-1})_{ij}= 0\right|_0},\notag
		\end{align} 
		where $S$ corresponds to the explicit formula, the optimal solution of the GL, or the elementary estimator.
	\end{itemize}
	
	It can be seen that, while the elementary estimator has slightly better estimation error, its TPR is significantly outperformed by the those of the GL and closed-form solutions. Furthermore, it can be seen that the closed-form estimator has almost the same accuracy as the optimal solution of the GL. The superiority of the proposed closed-form solution over the other methods becomes more evident in the larger instances, where it (almost) exactly recovers the true sparsity pattern of the inverse covariance matrix and results in small estimation error, while becoming the only viable method for estimating the inverse covariance matrix.}

{Finally, we show that the requirement $\lambda\geq\lambda_0$ in Theorem~\ref{cor:perturb} does not impose any restriction on the practicality of this theorem under the {finite-sampling} regime. In particular, we show that in practice, the lower bound $\lambda_0$ on $\lambda$ is {significantly smaller} than the theoretical value of $\lambda$ that is derived for the high-dimensional consistency of the GL. 
	To this goal, we compare $\lambda_0$ with the theoretical value of $\lambda$ introduced in the seminal paper~\cite{ravikumar2011high}. In particular,~\cite{ravikumar2011high} shows that the following value for $\lambda$ is sufficient to guarantee consistency
	\begin{equation}\label{lambda_t}
	\lambda = \frac{8}{\alpha}\sqrt{128(1+4\sigma^2)^2\max(\Sigma_*)_{ii}^2}\sqrt{\frac{\log d+\log 4}{n}},
	\end{equation}
	where $\alpha$ is the mutual incoherence parameter, $\sigma$ is the sub-Gaussian parameter of normalized random variables, and $(\Sigma_*)_{ii}$ is the $i^{\text{th}}$ diagonal element of the true covariance matrix. Figure~\ref{lambda} shows the values for $\lambda_0$, theoretical $\lambda$ defined as~\eqref{lambda_t}, and $\lambda$ used in our simulations with respect to the dimension of the problem. On average, $\lambda_0$ is 640 and 6 times smaller than the theoretical and used $\lambda$, respectively. Furthermore, Figure~\ref{density} shows the density (the number of nonzero elements, normalized by the total number of entries) of the thresholded sample covariance matrix when $\lambda$ is set to $\lambda_0$, compared to the density of the true inverse covariance matrix. Note that when $\lambda = \lambda_0$, the density of the thresholded sample covariance matrix is close to $0.3$ on average while the average density of the true inverse covariance matrix is less than $0.0009$. Based on these simulations, one can infer that $\lambda_0$ is an under-estimator for the values of the regularization coefficient that correctly promote sparsity in the estimated solution, and the requirement $\lambda\geq \lambda_0$ is extremely mild for large-scale instances of the GL.}

\begin{figure*}
	\centering
	\subfloat[Different values of $\lambda$]{\label{lambda}
		\includegraphics[width=.45\columnwidth]{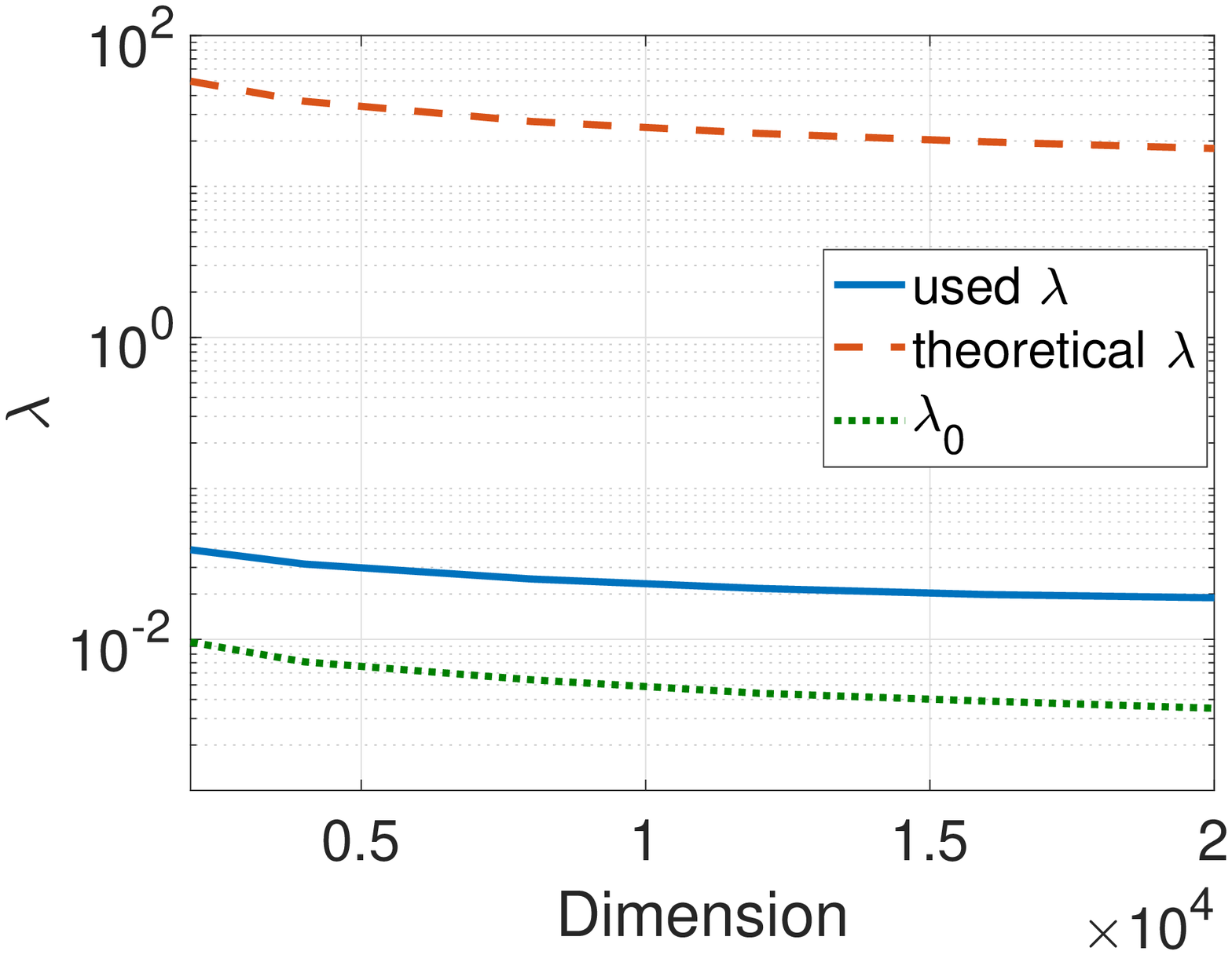}}
	\hspace{0.1cm}
	\subfloat[Density of the thresholded sample covariance matrix]{\label{density}
		\includegraphics[width=.45\columnwidth]{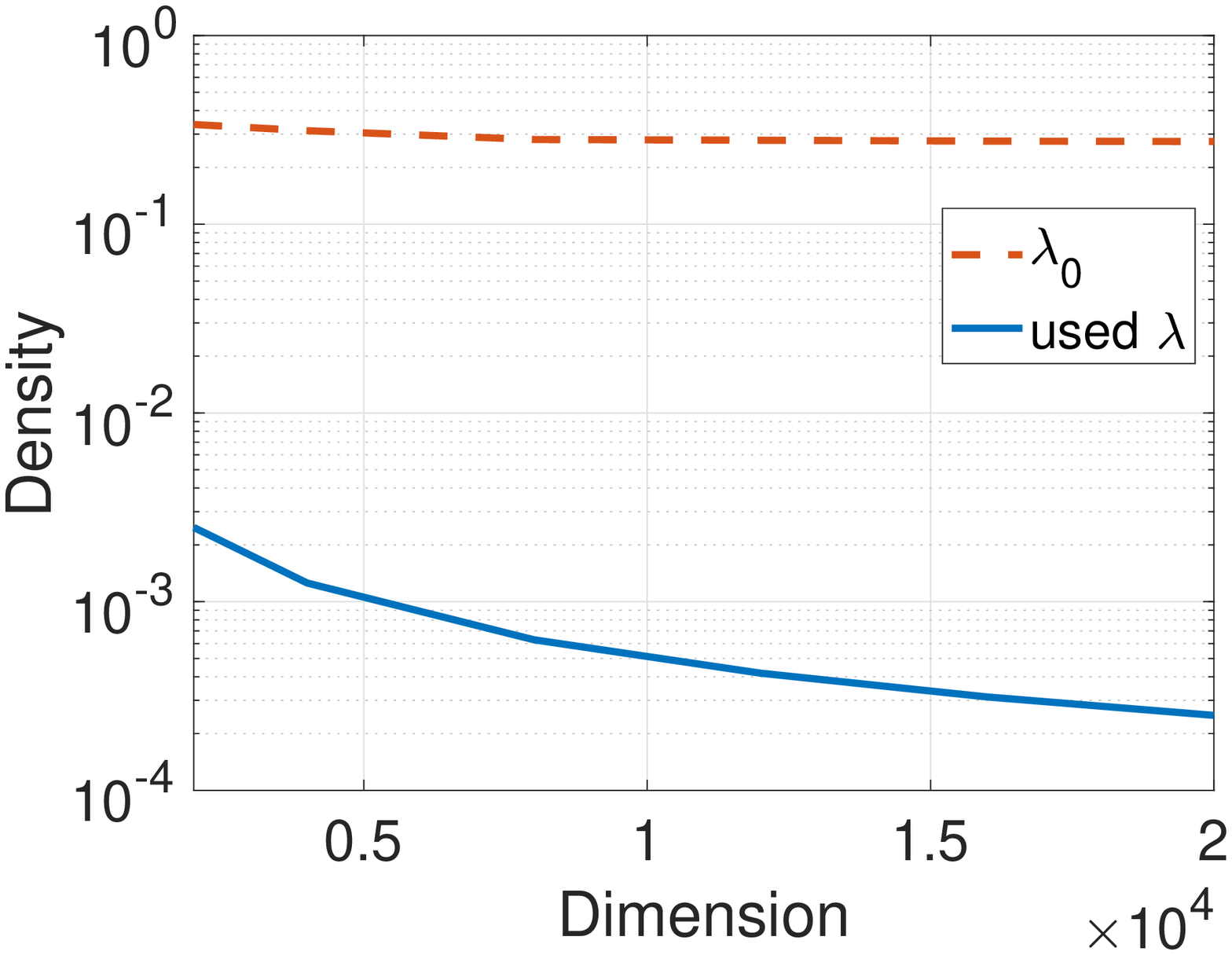}}
	\caption{ \footnotesize a) Values of $\lambda_0$, used $\lambda$, and theoretical $\lambda$, b) The density of the true inverse and thresholded sample covariance matrices.}
	\label{min_eigen}
\end{figure*}

\begin{table}\centering
	\begin{tabular}{ c|c||c|c|c||c|c|c||c|c|c}
		& & \multicolumn{3}{|c||}{Closed-Form} & \multicolumn{3}{|c||}{Graphical Lasso} & \multicolumn{3}{|c}{Elementary} \\
		\hline
		$d$ & $m$ & $\ell_F$ & TPR & FPR & $\ell_F$ & TPR & FPR & $\ell_F$ & TPR & FPR \\ 
		\hline
		$2000$ & $9894$ & $0.41$ & $0.71$ & $0.00$ & $0.41$ & $0.71$ & $0.00$ & $0.40$ & $0.63$ & $0.00$\\
		\hline
		$2000$ & $20022$ & $0.50$ & $0.59$ & $0.00$ & $0.65$ & $0.59$ & $0.00$ & $0.49$ & $0.34$ & $0.01$\\
		\hline
		$4000$ & $20094$ & $0.39$ & $0.83$ & $0.00$ & $0.38$ & $0.84$ & $0.00$ & $0.37$ & $0.76$ & $0.00$\\
		\hline
		$4000$ & $40382$ & $0.48$ & $0.74$ & $0.00$ & $0.48$ & $0.75$ & $0.00$ & $0.48$ & $0.54$ & $0.00$\\
		\hline
		$8000$ & $40218$ & $0.36$ & $0.92$ & $0.00$ & $0.35$ & $0.93$ & $0.00$ & $0.33$ & $0.87$ & $0.00$\\
		\hline
		$8000$ & $79890$ & $0.45$ & $0.87$ & $0.00$ & $0.44$ & $0.88$ & $0.00$ & $0.44$ & $0.71$ & $0.00$\\
		\hline
		$12000$ & $60192$ & $0.33$ & $0.96$ & $0.00$ & $0.32$ & $0.97$ & $0.00$ & $0.30$ & $0.93$ & $0.00$\\
		\hline
		$12000$ & $119676$ & $0.43$ & $0.93$ & $0.00$ & $0.41$ & $0.94$ & $0.00$ & $0.42$ & $0.81$ & $0.00$\\
		\hline
		$16000$ & $80064$ & $0.32$ & $0.97$ & $0.00$ & $0.30$ & $0.98$ & $0.00$ & $0.28$ & $0.96$ & $0.00$\\
		\hline
		$16000$ & $160094$ & $0.42$ & $0.95$ & $0.00$ & $0.40$ & $0.96$ & $0.00$ & $0.40$ & $0.86$ & $0.00$\\
		\hline
		$20000$ & $99954$ & $0.31$ & $0.99$ & $0.00$ & $0.30$ & $0.99$ & $0.00$ & $0.28$ & $0.96$ & $0.00$\\
		\hline
		$20000$ & $200018$ & $0.41$ & $0.96$ & $0.00$ & $0.39$ & $0.97$ & $0.00$ & $0.39$ & $0.89$ & $0.00$\\
		\hline
		$40000$ & $200290$ & $0.28$ & $1.00$ & $0.00$ & $\star$ & $\star$ & $\star$ & $\star$ & $\star$ & $\star$\\
		\hline
		$80000$ & $401798$ & $0.27$ & $1.00$ & $0.00$ & $\star$ & $\star$ & $\star$ & $\star$ & $\star$ & $\star$\\
		
	\end{tabular}
	\caption{ \footnotesize {The accuracy of different methods.}}\label{Table2}
\end{table}

\section{Conclusions}
\label{sec:con}

Graphical Lasso (GL) is a popular  method for finding the conditional independence between the entries of a random vector. This technique  aims at  learning the sparsity pattern of the  inverse covariance matrix from a limited number of samples, based on the regularization of a positive-definite matrix. Motivated by the computational complexity of solving the GL for large-scale problems, this paper provides conditions under which the GL behaves the same as the simple method of thresholding the sample covariance matrix. The conditions make direct use of the sample covariance matrix and are not based on the solution of the GL. 
More precisely, it is shown that the GL and thresholding techniques are equivalent if: (i) a certain matrix formed based on the sample covariance matrix is both sign-consistent and inverse-consistent, and (ii)  the gap between the largest thresholded and the smallest un-thresholded entries of the sample covariance matrix is not too small. Although the GL is believed to  be a difficult conic optimization problem, it is proved that it indeed has a closed-form solution in the case where the sparsity pattern of the solution is known to be acyclic. This result is then extended to general sparse graphs and an explicit formula is derived as an approximate solution of the GL, where the approximation error is also quantified in terms of the structure of the sparsity graph.  {The significant speedup and graceful scalability of the proposed explicit formula compared to other state-of-the-art methods is showcased on different real-world and randomly generated data sets. }

\vskip 0.2in
\bibliographystyle{IEEEtran}
\bibliography{reference_GL_VS_TH}

\appendix
\section*{Appendix}\label{Appendix}

In what follows, the omitted technical proofs will be presented. A number of lemmas are required for this purpose. 
\vspace{2mm}

{Before presenting the proof of Theorem~\ref{thm:tt1}, consider the \textit{normalized GL}, defined as
	\begin{equation}
	\label{eq_corr}
	\min_{S\in\mathbb S^d_+} -\log\det(S)+\mathrm{trace}(\tilde\Sigma S)+\sum_{i\not=j}\tilde{\lambda}_{ij}|S_{ij}|,
	\end{equation}
	where $\tilde{\Sigma}$ is the normalized sample covariance, i.e., $\tilde{\Sigma}_{ij} = \frac{\Sigma_{ij}}{\sqrt{\Sigma_{ii}\Sigma_{jj}}}$ for every $(i,j)\in\{1,2,...,d\}^2$ (also known as sample correlation matrix). Similarly, $\tilde{\lambda}_{ij}$ is defined as $\frac{\lambda}{\sqrt{\Sigma_{ii}\Sigma_{jj}}}$. Upon denoting the optimal solution of the normalized GL as $\tilde{S}$, we consider the relationship between $\tilde{S}$ and $S^{\mathrm{opt}}$. Recall that $D$ is defined as a matrix collecting the diagonal elements of $\Sigma$.
	\begin{lemma}\label{l26}
		We have $S^{\mathrm{opt}} = D^{-1/2} \tilde{S} D^{-1/2}$.
	\end{lemma}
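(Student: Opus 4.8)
The plan is to recognize that the \textit{normalized GL} in~\eqref{eq_corr} is precisely the original GL in~\eqref{eq1} written in the rescaled variable $S = D^{-1/2} S' D^{-1/2}$, up to an additive constant that is irrelevant to the minimization. Since $\Sigma$ has positive diagonal entries (a singular diagonal entry would make~\eqref{eq1} ill-posed), $D \succ 0$, so the linear map $S' \mapsto D^{-1/2} S' D^{-1/2}$ is a bijection of $\mathbb S^d$ onto itself that carries the positive-semidefinite cone onto itself and its interior onto its interior. It therefore suffices to compare the two objective functions term by term under this substitution.

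First I would handle the log-determinant term: $-\log\det\!\left(D^{-1/2} S' D^{-1/2}\right) = -\log\det(S') + \log\det(D)$, which reproduces the $-\log\det$ term of~\eqref{eq_corr} plus the constant $\log\det(D)$. Next, by the cyclic property of the trace, $\mathrm{trace}\!\left(\Sigma D^{-1/2} S' D^{-1/2}\right) = \mathrm{trace}\!\left(D^{-1/2}\Sigma D^{-1/2} S'\right) = \mathrm{trace}(\tilde\Sigma S')$, matching the data-fidelity term of~\eqref{eq_corr}. Finally, because $D$ is diagonal, $\left(D^{-1/2} S' D^{-1/2}\right)_{ij} = S'_{ij}/\sqrt{\Sigma_{ii}\Sigma_{jj}}$, so $\lambda\,\|D^{-1/2} S' D^{-1/2}\|_{1,\mathrm{off}} = \lambda \sum_{i\neq j} |S'_{ij}|/\sqrt{\Sigma_{ii}\Sigma_{jj}} = \sum_{i\neq j}\tilde\lambda_{ij}|S'_{ij}|$, which is exactly the regularizer in~\eqref{eq_corr}.

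Combining these three identities, the objective of~\eqref{eq1} evaluated at $S = D^{-1/2} S' D^{-1/2}$ equals the objective of~\eqref{eq_corr} evaluated at $S'$ plus the fixed constant $\log\det(D)$. Since $-\log\det(\cdot)$ is strictly convex on the positive-definite cone and the remaining terms are convex, each problem admits a unique minimizer, denoted $S^{\mathrm{opt}}$ and $\tilde S$ respectively. An additive constant does not change the argmin, and the bijection above identifies the two minimizers, giving $\tilde S = D^{1/2} S^{\mathrm{opt}} D^{1/2}$, equivalently $S^{\mathrm{opt}} = D^{-1/2}\tilde S D^{-1/2}$. There is no real obstacle here: the argument is elementary once one checks that $D$ is invertible and that the rescaling preserves membership in $\mathbb S^d_+$, both of which are immediate; the only mild care needed is in tracking the absolute-value factorization through the diagonal scaling for the $\ell_1$ term.
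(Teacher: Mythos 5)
Your proposal is correct and follows essentially the same route as the paper: both perform the change of variables $\tilde S = D^{1/2} S D^{1/2}$, use the cyclic property of the trace to turn $\mathrm{trace}(\Sigma S)$ into $\mathrm{trace}(\tilde\Sigma\tilde S)$, absorb the diagonal scaling into the penalty weights $\tilde\lambda_{ij}$, and observe that the objectives differ only by the constant $\log\det(D)$. The only difference is cosmetic — you spell out the "some algebra" the paper elides and add the (correct) remark on uniqueness of the minimizers.
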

	
	\begin{proof}
		Notice that the GL~\eqref{eq1} can be re-written as follows
		\begin{equation}\label{eq11}
		\min_{{S}\in\mathbb S^d_+} -\log\det({S})+\mathrm{trace}(\tilde{\Sigma}D^{1/2}{S}D^{1/2})+ \sum_{i\not=j}\lambda|{S}_{ij}|,
		\end{equation}
		where we have used the equality
		\begin{equation}\notag
		\mathrm{trace}({\Sigma}{S}) = \mathrm{trace}(D^{1/2}\tilde{\Sigma}D^{1/2}{S}) = \mathrm{trace}(\tilde{\Sigma}D^{1/2}{S}D^{1/2}).
		\end{equation}
		Upon defining 
		\begin{equation}\label{Stilde}
		\tilde{S} = D^{1/2} S D^{1/2}
		\end{equation}
		and following some algebra, one can verify that~\eqref{eq11} is equivalent to
		\begin{equation}
		\label{RGL4}
		\min_{\tilde{S}\in\mathbb S^d_+} -\log\det(\tilde{S})+\mathrm{trace}(\tilde{\Sigma} \tilde{S})+ \sum_{i\not=j}\tilde{\lambda}_{ij}|\tilde{S}_{ij}|+\log\det(D).
		\end{equation}
		Dropping the constant term in~\eqref{RGL4} gives rise to the normalized GL~\eqref{eq_corr}. Therefore, $S^{\mathrm{opt}} = D^{-1/2} \tilde{S} D^{-1/2}$ holds in light of~\ref{Stilde}. This completes the proof. 
\end{proof}}

{\noindent \textit{\bf Proof of Theorem~\ref{thm:tt1}}
	Note that, due to the Definition~\ref{residue} and Lemma~\ref{l26}, $\tilde{\Sigma}^{\mathrm{res}}$ and $\tilde{S}$ have the same sparsity pattern as ${\Sigma}^{\mathrm{res}}$ and ${S}^{\mathrm{opt}}$, respectively.
	Therefore, it suffices to show that the sparsity structures of $\tilde{\Sigma}^{\mathrm{res}}$ and $\tilde{S}$ are the same.  
	
	To verify this, we focus on the optimality conditions for optimization~\eqref{eq_corr}. Define $M$ as $I_d+\tilde{\Sigma}^{\mathrm{res}}$. Due to Condition~(1-i) and Lemma~\ref{lemma:ll1}, $M$ is inverse-consistent and has a unique inverse-consistent complement, which is denoted by $N$. First, will show that $(M+N)^{-1}$ is the optimal solution of~\eqref{eq_corr}. For an arbitrary pair $(i,j)\in\{1,...,d\}^2$, the KKT conditions, introduced in Lemma~\ref{expsol}, imply that one of the following cases holds:
	\begin{itemize}
		\item[1)] $i=j$: We have $(M+N)_{ij}= M_{ii}=\tilde{\Sigma}_{ii}$.
		\item[2)] $(i,j)\in \mathrm{supp}(\tilde{\Sigma}^{\mathrm{res}})$: In this case, we have
		\begin{equation}\notag
		(M+N)_{ij} = M_{ij} = \tilde{\Sigma}_{ij}-\tilde{\lambda}_{ij}\times \mathrm{sign}(\tilde{\Sigma}_{ij}).
		\end{equation}
		Note that since $|\Sigma_{ij}|>\lambda$, we have that $\mathrm{sign}(M_{ij}) = \mathrm{sign}(\tilde{\Sigma}_{ij})$. On the other hand, due to the sign-consistency of $M$, we have $\mathrm{sign}(M_{ij}) = -\mathrm{sign}\left(\left((M+N)^{-1}\right)_{ij}\right)$. This implies that
		\begin{equation}\notag
		(M+N)_{ij} = M_{ij} = \tilde{\Sigma}_{ij}+\tilde{\lambda}_{ij}\times \mathrm{sign}((M+N)^{-1}).
		\end{equation}
		\item[3)] $(i,j)\not\in \mathrm{supp}(\tilde{\Sigma}^{\mathrm{res}})$: One can verify that $(M+N)_{ij} = N_{ij}$. Therefore, due to Condition (1-iii), we have
		\begin{equation}\label{eq39}
		\begin{aligned}
		|(M+N)_{ij}|&\leq \beta\left(\mathrm{supp}(\tilde{\Sigma}^{\mathrm{res}}),\|\tilde{\Sigma}^{\mathrm{res}}\|_{\max}\right)\\
		&\leq\underset{
			\begin{subarray}{c}
			k\not=l\\
			(k,l)\not\in\mathrm{supp}(\Sigma^{\mathrm{res}})
			\end{subarray}
		} {\min}\frac{\lambda-|\Sigma_{kl}|}{\sqrt{\Sigma_{kk}\Sigma_{ll}}}\\
		&=\underset{
			\begin{subarray}{c}
			k\not=l\\
			(k,l)\not\in\mathrm{supp}(\Sigma^{\mathrm{res}})
			\end{subarray}
		} {\min}{\tilde\lambda_{kl}-|\tilde\Sigma_{kl}|}.
		\end{aligned}
		\end{equation}
		This leads to
		\begin{equation}\label{eq44}
		|(M+N)_{ij}-\tilde{\Sigma}_{ij}|\leq |(M+N)_{ij}|+|\tilde{\Sigma}_{ij}|\leq \underset{
			\begin{subarray}{c}
			k\not=l\\
			(k,l)\not\in\mathrm{supp}(\Sigma^{\mathrm{res}})
			\end{subarray}
		} {\min}\left({\tilde\lambda_{kl}-|\tilde\Sigma_{kl}|}\right)+|\tilde{\Sigma}_{ij}|\leq \tilde{\lambda}_{ij}.
		\end{equation}
	\end{itemize}
	Therefore, it can be concluded that $(M+N)^{-1}$ satisfies the KKT conditions for~\eqref{eq_corr}\footnote{The KKT conditions for the normalized GL are equivalent to~\eqref{optcon} after replacing $\lambda$ with $\tilde{\lambda}_{ij}$}. On the other hand, note that $\rm{supp}((M+N)^{-1})= \mathrm{supp}(\tilde{\Sigma}^{\mathrm{res}})$. This concludes the proof.~\hfill$\blacksquare$}

\vspace{2mm}

To proceed with the proof of Lemma~\ref{thm:tt4}, we need the following lemma.

\begin{lemma} \label{lemma:ll2}
	
	Consider a {matrix  $M\in\mathbb S^d$ with positive-definite completion}. Assume that 
	$\|M^{(c)}\|_1\leq \eta \|M-I_d\|_1$ and $\|M-I_d\|_1<\frac{1}{\eta+1}$, for some number $\eta$. The relation
	\begin{equation}\notag
	\|M^{(c)}\|_1\leq (1+\eta)^2\frac{\|M-I_d\|_1^2}{1-(\eta+1)\|M-I_d\|_1}
	\end{equation}
	holds.
\end{lemma}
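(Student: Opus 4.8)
The plan is to exploit the variational characterization of the inverse-consistent complement that emerges from the proof of Lemma~\ref{lemma:ll1}. Recall that $M^{(c)} = \Pi^{\mathrm{opt}}$, the optimal dual variable, and that $(M + M^{(c)})^{-1}$ is the unique optimal solution of the primal problem~\eqref{eq_p60}; equivalently, $M + M^{(c)}$ is the unique \textbf{maximum determinant completion} of $M$, whose inverse is supported on $\mathrm{supp}(M)$. First I would expand $(M + M^{(c)})^{-1}$ as a Neumann-type series. Write $M + M^{(c)} = I_d + R$, where $R = (M - I_d) + M^{(c)}$; note the two summands have disjoint supports (one inside $\mathrm{supp}(M)$, the other inside its complement), so $\|R\|_1 \le \|M - I_d\|_1 + \|M^{(c)}\|_1 \le (1+\eta)\|M - I_d\|_1 < 1$ by hypothesis. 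Hence $I_d + R$ is invertible with $(I_d + R)^{-1} = \sum_{t \ge 0} (-R)^t$, and $\|(I_d+R)^{-1} - I_d\|_1 \le \frac{\|R\|_1}{1 - \|R\|_1}$.

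The key structural fact I would use next is that $\mathrm{supp}\big((M + M^{(c)})^{-1}\big) \subseteq \mathrm{supp}(M)$, i.e. the inverse has \emph{zero} entries on every $(i,j) \in (\mathrm{supp}(M))^{(c)}$. Look at the $(i,j)$-th entry of the identity $(I_d + R)\,(I_d+R)^{-1} = I_d$ for a pair $(i,j) \in (\mathrm{supp}(M))^{(c)}$ with $i \ne j$. Writing $W := (I_d+R)^{-1}$ and $P := (I_d+R)^{-1} - I_d$, the $(i,j)$ entry of $R W$ must vanish; expanding $RW = R(I_d + P) = R + RP$ gives $R_{ij} = -(RP)_{ij}$. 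But on $(\mathrm{supp}(M))^{(c)}$ we have $R_{ij} = M^{(c)}_{ij}$ (since $M - I_d$ is supported on $\mathrm{supp}(M)$), so $M^{(c)}_{ij} = -(RP)_{ij}$ for every such pair. Therefore
\begin{equation}\notag
\|M^{(c)}\|_1 \;\le\; \|RP\|_1 \;\le\; \|R\|_1\,\|P\|_1 \;\le\; \|R\|_1 \cdot \frac{\|R\|_1}{1 - \|R\|_1} \;=\; \frac{\|R\|_1^2}{1 - \|R\|_1}.
\end{equation}
(Here I am using that $\|\cdot\|_1$, the induced $1$-norm, is submultiplicative, and that off-diagonal entries of $RP$ dominate $\|M^{(c)}\|_1$ since $M^{(c)}$ has zero diagonal — one only needs the bound on the relevant entries, which the full matrix norm certainly provides.)

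Finally I would substitute the bound $\|R\|_1 \le (1+\eta)\|M - I_d\|_1$. Since the map $x \mapsto \frac{x^2}{1-x}$ is increasing on $[0,1)$ and $(1+\eta)\|M - I_d\|_1 < 1$ by assumption, we get
\begin{equation}\notag
\|M^{(c)}\|_1 \;\le\; \frac{\big((1+\eta)\|M - I_d\|_1\big)^2}{1 - (1+\eta)\|M - I_d\|_1} \;=\; (1+\eta)^2 \frac{\|M - I_d\|_1^2}{1 - (\eta+1)\|M - I_d\|_1},
\end{equation}
which is exactly the claimed inequality. The main obstacle I anticipate is the bookkeeping around supports: making sure that $M - I_d$ and $M^{(c)}$ genuinely have disjoint supports (this is the content of $\mathrm{supp}(M^{(c)}) \subseteq (\mathrm{supp}(M))^{(c)}$ from Definition~\ref{def:dd1}) and that the ``zero off the support of $M$'' property of $(M+M^{(c)})^{-1}$ is applied to the correct set of index pairs; everything else is a routine Neumann-series estimate. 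A secondary subtlety is justifying that it suffices to control the entries of $RP$ on $(\mathrm{supp}(M))^{(c)}$ rather than all of them, but bounding by the full induced norm $\|RP\|_1$ sidesteps this at no real cost.
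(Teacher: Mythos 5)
Your proposal is correct and follows essentially the same route as the paper: both identify $M^{(c)}_{ij}$ on $(\mathrm{supp}(M))^{(c)}$ with the second-order tail of the Neumann series of $(I_d+R)^{-1}$ (your $-(RP)_{ij}$ equals the paper's $\bigl[(M-I_d+M^{(c)})^2\sum_{i\ge 0}(-R)^i\bigr]_{ij}$), then bound via submultiplicativity of the induced $1$-norm and monotonicity of $x\mapsto x^2/(1-x)$. The only cosmetic difference is that you extract the identity from $(I_d+R)W=I_d$ restricted to off-support entries rather than expanding the inverse directly.
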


\vspace{2mm}

\begin{proof}
	{Note that $M\in\mathbb S^d$ has a positive-definite completion and hence, is inverse-consistent due to Lemma~\ref{lemma:ll1}.} One can write
	\begin{equation}
	\begin{aligned}\notag
	\|(M-I_d)+M^{(c)}\|_1&\leq \|M-I_d\|_1+\|M^{(c)}\|_1\leq (\eta+1)\|M-I_d\|_1<1.
	\end{aligned}
	\end{equation}
	Therefore,
	\begin{equation}
	\begin{aligned}\notag
	(M+M^{(c)})^{-1}&=(I_d+(M-I_d+M^{(c)}))^{-1}+I_d- (M-I_d+M^{(c)})\\
	&+(M-I_d+M^{(c)})^2 \times \sum_{i=0}^{\infty} (-M+I_d-M^{(c)})^i.
	\end{aligned}
	\end{equation}
	Since $\mathrm{supp}((M+M^{(c)})^{-1})\subseteq\mathrm{supp}(M)$, 
	it can be concluded that   the  $(i,j)$ entries of $M^{(c)}$ and
	\begin{equation}\notag
	(M-I_d+M^{(c)})^2 \times \sum_{i=0}^{\infty} (-M+I_d-M^{(c)})^i
	\end{equation}
	are equal for every $(i,j)\in\mathrm{supp}(M^{(c)})$.
	Since  the $(i,j)$ entry of $M^{(c)}$ is zero if $(i,j)\not\in\mathrm{supp}(M^{(c)})$, we have 
	\begin{equation}\notag
	\|M^{(c)}\|_1\leq \left\|(M-I_d+M^{(c)})^2 \sum_{i=0}^{\infty} (M-I_d+M^{(c)})^i\right\|_1.
	\end{equation}
	Since 1-norm is sub-multiplicative, 
	the above inequality can be simplified as
	\begin{equation}
	\begin{aligned}\notag
	\|M^{(c)}\|_1&\leq (\|M-I_d\|_1+\|M^{(c)}\|_1)^2\times  \sum_{i=0}^{\infty} (\|M-I_d\|_1+\|M^{(c)}\|_1)^i\\
	&=\frac{ (\|M-I_d\|_1+\|M^{(c)}\|_1)^2 }{1-\|M-I_d\|_1-\|M^{(c)}\|_1}\\
	&\leq \frac{ (\|M-I_d\|_1+\eta\|M-I_d\|\|_1)^2 }{1-\|M-I_d\|_1-\eta\|M-I_d\|\|_1}\\
	&=(1+\eta)^2\frac{\|M-I_d\|_1^2}{1-(\eta+1)\|M-I_d\|_1}.
	\end{aligned}
	\end{equation}
	This completes the proof.\end{proof}

\vspace{2mm}

\noindent \textit{\bf Proof of Lemma~\ref{thm:tt4}}
Given an arbitrary graph $\mathcal G$, consider a matrix variable $M$ with 1's on the diagonal such that $\mathrm{supp}(M)\subseteq\mathcal G$.  The first objective is to find a  matrix in terms of $M$, denoted by the matrix function $N(M)$, satisfying the following properties
\begin{subequations}
	\begin{align}
	&\mathrm{supp}\left((M+N(M))^{-1}\right)\subseteq\mathcal G,\notag\\
	&\mathrm{supp}(N(M))\subseteq \mathcal G^{(c)}.\notag
	\end{align}
\end{subequations}
To this end, define the matrix function $A(M)$ as
\begin{equation}\notag
A(M)=(M+N(M))^{-1}.
\end{equation}
Observe that
\begin{itemize}
	\item As long as $A(M)$ exists and $\mathrm{supp}(A(M))\subseteq\mathcal G$, there is a continuously differentiable mapping from $A(M)$ to $M$  because $M$ can be found by setting those entries of $A(M)^{-1}$ corresponding to the edges of $\mathcal G^{(c)}$ to zero. Moreover, the Jacobian of this mapping has full rank at $M=I_d$. Due to the inverse function theorem, the mapping from $M$ to $A(M)$ exists and is continuously differentiable. 
	\item Similarly, as long as $A(M)$ exists and $\mathrm{supp}(A(M))\subseteq\mathcal G$, there is a continuously differentiable mapping from $A(M)$ to $N(M)$.
	\item If $M=I_d$, then $N(M)=0$.
\end{itemize}
It follows from the above properties that if $M$ is sufficiently small, the function $N(M)$ exists and satisfies the following properties: (i) $0=N(I_d)$, and (ii) $N(\cdot)$ is differentiable at $M=I_d$. This implies that there are sufficiently small nonzero numbers $\eta$ and $\alpha_0$ such that $\|N(M)\|_1\leq \eta \|M-I_d\|_1$ whenever $\|M\|_{\max}\leq \alpha_0$. Now, it follows from Lemma~\ref{lemma:ll2} that
\begin{equation}\notag
\|N(M)\|_1\leq (1+\eta)^2\frac{\|M-I_d\|_1^2}{1-(\eta+1)\|M-I_d\|_1},
\end{equation}
or
\begin{equation}\notag
\|N(M)\|_{\max}\leq \frac{(1+\eta)^2\times ( \text{deg}(\mathcal G))^2}{1-(\eta+1)\alpha_0\times \text{deg}(\mathcal G)} \|M\|_{\max}^2,
\end{equation}
if $\|M\|_{\max}\leq \alpha_0$. The inequality~\eqref{eq_p5} is satisfied for the number $\zeta$ defined as the maximum of 
\begin{equation}\notag
\frac{(1+\eta)^2\times ( \text{deg}(\mathcal G))^2}{1-(\eta+1)\alpha_0\times \text{deg}(\mathcal G)}
\end{equation}
and the finite number
\begin{equation}\notag
\max\left\{\frac{\beta(\mathcal G,\alpha)}{\alpha^2}\bigg| \alpha\in(\alpha_0,1)\right\}.
\end{equation}
This completes the proof.
~\hfill$\blacksquare$

\vspace{2mm}

{\noindent \textit{\bf Proof of Lemma~\ref{l_sign}} It can be easily verified that 
	\begin{equation}\notag
	(M+M^{(c)})^{-1} = I-(M+M^{(c)}-I)+(M+M^{(c)})^{-1}(M+M^{(c)}-I)^2.
	\end{equation}
	This implies that, for a given pair $(i,j)\in\mathcal{G}$, one can write
	\begin{equation}\label{sign5}
	\left((M+M^{(c)})^{-1}\right)_{ij} = -M_{ij}+\left((M+M^{(c)})^{-1}\right)_{i:}\left((M+M^{(c)}-I)^2\right)_{:j},
	\end{equation}
	where $\left((M+M^{(c)})^{-1}\right)_{i:}$ and $\left((M+M^{(c)}-I)^2\right)_{:j}$ are the $i^{\text{th}}$ row and $j^{\text{th}}$ column of $(M+M^{(c)})^{-1}$ and $(M+M^{(c)}-I)^2$, respectively. Based on~\eqref{sign5}, the $(i,j)$ entries of $M$ and $(M+M^{(c)})^{-1}$ have opposite signs if 
	\begin{equation}\label{sign}
	|M_{ij}|> \left|\left((M+M^{(c)})^{-1}\right)_{i:}\left((M+M^{(c)}-I)^2\right)_{:j}\right|.
	\end{equation}
	To streamline the presentation, $\|M\|_{\max}$ is redefined as $\max_{i,j}|M_{ij}|$ in the rest of the proof. One can write
	\begin{align}\label{sign2}
	\left\|(M\!\!+\!\!M^{(c)}\!-\!I)^2\right\|_{\max}\!\!&\leq\! \left\|(M\!-\!I)^2\right\|_{\max}\!\!+\!\left\|\left(M^{(c)}\right)^2\right\|_{\max}\!\!\!\!\!+\!\left\|M^{(c)}(M\!-\!I)\right\|_{\max}\!\!\!\!+\!\left\|(M\!-\!I)M^{(c)}\right\|_{\max}\nonumber\\
	&\leq \mathrm{deg}(\mathcal{G})\alpha^2+(d-\mathrm{deg}(\mathcal{G}))\zeta(\mathcal{G})^2\alpha^4+2\mathrm{deg}(\mathcal{G})\zeta(\mathcal{G})\alpha^3\nonumber\\
	&\leq 3\mathrm{deg}(\mathcal{G})\max\{\alpha^2,\zeta(\mathcal{G})\alpha^3\}+(d-\mathrm{deg}(\mathcal{G}))\zeta(\mathcal{G})^2\alpha^4\nonumber\\
	&\leq K\alpha^2,
	\end{align}
	for some $K$ that only depends on $\mathrm{deg}(\mathcal{G}), \zeta(\mathcal{G})$, and $d$. Furthermore, assume that 
	\begin{equation}\label{assum}
	\alpha\leq\frac{1}{2\mathrm{deg}(\mathcal{G})\sqrt{\zeta(\mathcal{G})}} = \alpha_0(\mathcal{G}).
	\end{equation}
	Note that
	\begin{equation}\notag
	(M+M^{(c)})^{-1} = I-(M+M^{(c)}-I)(M+M^{(c)})^{-1},
	\end{equation}
	which implies that
	\begin{equation}\label{sign6}
	\left\|(M+M^{(c)})^{-1}\right\|_{\max} = 1+\mathrm{deg}(\mathcal{G})\max\{\alpha,\zeta(\mathcal{G})\alpha^2\}\left\|(M+M^{(c)})^{-1}\right\|_{\max},
	\end{equation}
	where we have used the fact that $\mathrm{supp}((M+M^{(c)})^{-1})\subseteq\mathcal{G}$ and hence, its maximum degree is upper bounded by $\mathrm{deg}(\mathcal{G})$.~\eqref{sign6}, together with the assumption~\eqref{assum} implies that
	\begin{equation}\label{sign4}
	\left\|(M+M^{(c)})^{-1}\right\|_{\max}\leq\frac{1}{1-\mathrm{deg}(\mathcal{G})\max\{\alpha,\zeta(\mathcal{G})\alpha^2\}}\leq 2.
	\end{equation}
	Combining~\eqref{sign2} and~\eqref{sign4} with~\eqref{sign} completes the proof.~\hfill$\blacksquare$}

\vspace{2mm}

\noindent \textit{\bf Proof of Lemma~\ref{l_beta}}
Without loss of generality,   assume that $\mathcal{G}$ is a tree. Note that if there are disjoint components, the argument made in the sequel can be applied to each connected component of $\mathcal{G}$ separately. 
Let $d_{ij}$ denote the unique path between every two disparate nodes $i$ and $j$ in $\mathcal{G}$. Furthermore, define $\mathcal N(i)$ as the set of all neighbors of node $i$ in $\mathcal{G}$. Consider a {matrix $M$ with positive-definite completion} and with diagonal elements  equal to 1 such that  $\|M\|_{\max}\leq\alpha$ and  $\text{supp}(M) = \mathcal{G}$. Let $N$ be a matrix with the following entries
\begin{equation}\label{N}
N_{ij} = \left\{
\begin{array}{ll}
\prod_{(m,t)\in d_{ij}}M_{mt} & \text{if}\quad (i,j)\in(\mathrm{supp}(M))^{(c)},\\
0 & \text{otherwise}.
\end{array} 
\right.
\end{equation}
Moreover, define
\begin{equation}\label{matA}
A_{ij} = \left\{
\begin{array}{ll}
1+\sum_{m\in \mathcal N(i)}\frac{M_{mi}^2}{1-M_{mi}^2} & \text{if}\quad i=j,\\
\frac{-M_{ij}}{1-M_{ij}^2} & \text{if}\quad (i,j)\in \mathrm{supp}(M),\\
0 & \text{otherwise}.
\end{array} 
\right.
\end{equation}
The goal is to show that the  matrix $N$ is the unique inverse-consistent complement of $M$. First, note that $\text{supp}(N) = (\text{supp}(M))^{(c)}$ and $\text{supp}(M) = \text{supp}(A)$.
Next, it is desirable to prove that $(M+N)^{-1} = A$ or equivalently $(M+N)A = I$. Upon defining $T = (M+N)A$, one can write
\begin{equation}\notag
T_{ii} = \sum_{m=1}^{d}(M_{im}+\mathcal N_{im})A_{mi} = 1+\sum_{m\in \mathcal N(i)}\frac{M_{mi}^2}{1-M_{mi}^2}-\sum_{m\in \mathcal N(i)}\frac{M_{mi}^2}{1-M_{mi}^2} = 1.
\end{equation}
Moreover, for every pair of nodes $i$ and $j$, define $D_{ij} $ as $ \prod_{(k,t)\in d_{ij}}M_{kt}$ if $i\not=j$ and  as $1$ if $i=j$. 

Consider a pair of distinct nodes $i$ and $j$. Let $t$ denote the node adjacent to $j$ in $d_{ij}$ (note that we may have $t = i$). It can be verified that
\begin{align}\label{tij}
T_{ij} =& \sum_{m=1}^{d}(M_{im}+N_{im})A_{mj} = D_{ij}\left(1+\sum_{m\in\mathcal N(j)}\frac{M_{mj}^2}{1-M_{mj}^2}\right)-D_{it}\left(\frac{M_{tj}}{1-M_{tj}^2}\right)\nonumber \\
& -\underset{
	\begin{subarray}{c}
	m\in\mathcal N(j)\\
	m\not=t
	\end{subarray}
}{\sum}D_{im}\frac{M_{mj}}{1-M_{mj}^2}.
\end{align}
Furthermore, 
\begin{align}\label{decomposition}
& D_{ij} = D_{it} M_{tj},\nonumber\\
& D_{im} = D_{it} M_{tj} M_{jm},\qquad \forall \ m\in\mathcal N(j),\ m\not=t.
\end{align}
Plugging \eqref{decomposition} into \eqref{tij} yields that
\begin{equation}\notag
T_{ij} = D_{it}M_{tj}\left(\frac{1}{1-M_{tj}^2}+\underset{
	\begin{subarray}{c}
	m\in\mathcal N(j)\\
	m\not=t
	\end{subarray}
}{\sum}\frac{M_{mj}^2}{1-M_{mj}^2}\right)-D_{it}\left(\frac{M_{tj}}{1-M_{tj}^2}\right) - D_{it}M_{tj}\underset{
	\begin{subarray}{c}
	m\in\mathcal N(j)\\
	m\not=t
	\end{subarray}
}{\sum}\frac{M_{mj}^2}{1-M_{mj}^2} = 0.
\end{equation}
Hence, $T = I$. Finally, we need to show that $M+N\succ 0$. To this end, it suffices to prove that $A\succ 0$. 
Note that $A$ can be written as $I+\sum_{(i,j)\in\mathcal{G}}L^{(i,j)}$, where $L^{(i,j)}$ is defined as
\[  L^{(i,j)}_{rl} = \left\{
\begin{array}{ll}
\frac{M_{ij}^2}{1-M_{ij}^2} &  \text{if}\quad r=l = i\ \text{or}\ j,\\
\frac{-M_{ij}}{1-M_{ij}^2} & \text{if}\quad (r,l) = (i,j),\\
0 & \text{otherwise}.
\end{array} 
\right. \]
Consider the term $x^TAx$ for an arbitrary vector $x\in\mathbb{R}^d$. One can verify that
\begin{align}\label{A}
x^TAx =& \sum_{i=1}^{d}x_i^2\! +\!\!\!\!\sum_{(i,j)\in\mathcal{G}}x^TL^{(i,j)}x \nonumber \\
=& \sum_{i=1}^{d}x_i^2+\!\!\!\!\sum_{(i,j)\in\mathcal{G}}\left(\frac{M_{ij}^2}{1-M_{ij}^2}\right)x_i^2\!+\!\left(\frac{M_{ij}^2}{1-M_{ij}^2}\right)x_j^2\!-\! \left(\frac{2M_{ij}}{1-M_{ij}^2}\right)x_ix_j.
\end{align}
Without loss of generality, assume that the graph is a rooted tree with the root at node $d$. Assume that each edge $(i,j)$ defines a direction that is toward the root. Then, it follows from \eqref{A} that
\begin{align}
x^TAx = & x_d^2 + \sum_{(i,j)\in\mathcal{G}}x_i^2+\left(\frac{M_{ij}^2}{1-M_{ij}^2}\right)x_i^2+\left(\frac{M_{ij}^2}{1-M_{ij}^2}\right)x_j^2
- \left(\frac{2M_{ij}}{1-M_{ij}^2}\right)x_ix_j\nonumber\\
= & x_d^2 + \sum_{(i,j)\in\mathcal{G}}\left(\frac{1}{1-M_{ij}^2}\right)x_i^2+\left(\frac{M_{ij}^2}{1-M_{ij}^2}\right)x_j^2
- \left(\frac{2M_{ij}}{1-M_{ij}^2}\right)x_ix_j\nonumber\\
= & x_d^2+\sum_{(i,j)\in\mathcal{G}}\frac{(x_i-M_{ij}x_j)^2}{1-M_{ij}^2}\geq 0.\notag
\end{align}
Therefore, $M+N\succeq 0$ and subsequently $M+N\succ 0$ (because it is invertible). Hence, according to Definition \ref{def:dd1} and Lemma \ref{lemma:ll1}, the matrix $N$ is the unique inverse-consistent compliment of $M$. On the other hand, it follows from the definition of $N$ that $\|N\|_{\max}\leq \alpha^2$ and consequently $\beta(\mathcal{G},\alpha) \leq \alpha^2$. Now, suppose that $\mathcal{G}$ includes a path of length at least 2, e.g., the edges $(1,2)$ and $(2,3)$ belong to $\mathcal{G}$. By setting $M_{12} = M_{23} = \alpha$ and choosing sufficiently small values for those entries of $M$ corresponding to the remaining  edges in $\mathcal{G}$, the matrix $M$ becomes positive-definite {with a trivial positive-definite completion} and we obtain $\|N\|_{\max} = \alpha^2$. This completes the proof. ~\hfill$\blacksquare$

\vspace{2mm}

\noindent \textit{\bf Proof of Theorem \ref{thm4}} {To prove this theorem, first consider the following matrix
	\begin{equation}\label{S_normal}
	\hat{S}_{ij} = \left\{
	\begin{array}{ll}
	1+\underset{
		(i,m)\in\mathcal{E}^{\text{opt}}
	}{\sum}\frac{({\tilde\Sigma^{\mathrm{res}}_{im}})^2}{1-({\tilde\Sigma^{\mathrm{res}}_{im}})^2} & \text{if}\quad i=j,\\
	\frac{-\tilde\Sigma^{\mathrm{res}}_{ij}}{1-({\tilde\Sigma^{\mathrm{res}}_{ij}})^2} & \text{if}\quad(i,j)\in \mathcal{E}^{\text{opt}},\vspace{1mm}\\
	
	0 & \text{otherwise}.
	\end{array} 
	\right.
	\end{equation}
	In what follows, we will show that $\hat{S} = \tilde{S}$, where $\tilde{S}$ is the optimal solution for the normalized GL. This, together with Lemma~\ref{l26} implies that~\eqref{S_opt2} is indeed optimal for the GL.}

First, note that there exists a matrix $N$ such that $\tilde{S}^{-1} = M+N$, where $M$ is defined as 
\begin{equation}\label{M}
M_{ij} = \left\{
\begin{array}{ll}
\tilde\Sigma_{ij}+\tilde\lambda_{ij}\times \text{sign}(\tilde{S}_{ij}) & \text{if}\quad (i,j)\in\text{supp}(\tilde{S}),\\
1 & \text{if}\quad  i=j,\\
0 & \text{otherwise}.
\end{array} 
\right.
\end{equation}
Clearly, $\text{supp}(\tilde{S}) = \text{supp}(M)$. Furthermore, $M = I_d+\tilde T(\lambda)$, where $(i,j)^{\text{th}}$ entry of $\tilde T(\lambda)$ is equal to $\tilde{\Sigma}_{ij}+\tilde{\lambda}_{ij}\text{sign}(S^{\mathrm{opt}}_{ij})$ for every $(i,j)\in\text{supp}(S^{\mathrm{opt}})$ and it is equal to zero otherwise. Subsequently, $M = D^{-1/2}(D+T(\lambda))D^{-1/2}$ and hence, $D+T(\lambda)\succ 0$ implies $M\succ 0$. By combining $N = (\tilde{S})^{-1}-M$ with \eqref{M} and exploiting the optimality conditions in \eqref{optcon}, one can verify that $\text{supp}(N)\subseteq (\text{supp}(M))^{(c)}$ and $\text{supp}(\tilde{S})=\text{supp}\left((M+N)^{-1}\right)\subseteq \text{supp}(M)$. Therefore, according to Lemma \ref{lemma:ll1}, the matrix $N$ is the unique inverse-complement of $M$. Moreover, since $M$ is sign-consistent, the equation  $\text{sign}(M_{ij})=-\text{sign}(\tilde{S}_{ij})$ holds for every $(i,j)\in\text{supp}(\tilde{S})$. This leads to the relations $\text{sign}(\Sigma_{ij}) = -\text{sign}(\tilde{S}_{ij})$ and
\begin{subequations}
	\begin{align}
	& M_{ij} = \tilde\Sigma^{\text{res}}_{ij},\\
	& |\tilde\Sigma_{ij}|> \tilde\lambda_{ij},\label{Sigma}
	\end{align}
\end{subequations}
for every $(i,j)\in\text{supp}(\tilde{S})$. Part 1 of the theorem is an immediate consequence of \eqref{Sigma}. On the other hand, based on the argument made in  the proof of Lemma \ref{l_beta}, the matrix $N$ can be obtained as
\begin{equation}\label{N2}
N_{ij} = \left\{
\begin{array}{ll}
\prod_{(m,t)\in d_{ij}}M_{mt} & \text{if}\quad  d_{ij}\not=\emptyset \ \text{and}\ (i,j)\in\left(\text{supp}(M)\right)^{(c)},\\
0 & \text{otherwise},
\end{array} 
\right.
\end{equation}
where $d_{ij}$ denotes the unique path between nodes $i$ and $j$ in $\text{supp}(\tilde{S})$ if they belong to the same connected component in $\text{supp}(\tilde{S})$, and $d_{ij}$ is empty if there is no path between nodes $i$ and $j$. Similar to the proof of Lemma \ref{l_beta}, one can show that \eqref{S_opt2} is equal to $(M+N)^{-1}$. This completes the proof of the second part of the theorem. 
~\hfill$\blacksquare$

\vspace{2mm}

\noindent \textit{\bf Proof of Theorem \ref{th2}} Based on Lemmas  \ref{l_beta} and \ref{thm:tt2}, the conditions introduced in Theorem~\ref{thm:tt1} can be reduced to conditions (2-ii) and (2-iii) in Theorem \ref{th2} if $\text{supp}(\Sigma^{\text{res}})$ is acyclic and therefore, $\mathcal{E}^{\mathrm{opt}} = \mathcal{E}^{\mathrm{res}}$. Moreover, suppose that $M$ is set to $I_d+\tilde\Sigma^{\text{res}}$, and that the matrices $N$ and $A$ are defined as \eqref{N} and \eqref{matA}, respectively. {Similar to the proof of Theorem~\ref{thm:tt1}, it can be verified that~\eqref{S_normal} satisfies all the KKT conditions for the normalized GL~\eqref{eq_corr}. Therefore, due to Lemma~\ref{l26}, ~\eqref{S_opt2} is the unique solution of the GL. The details are omitted for brevity.~\hfill$\blacksquare$}

\vspace{2mm}

%

\noindent \textit{\bf Proof of Corollary \ref{cor2}} Given $\Sigma$ and $\lambda$, the matrix $\Sigma^{\text{res}}$ can be computed in $\mathcal{O}(d^2)$. Moreover, Condition (2-i) in Theorem \ref{th2} can be checked using the Depth-First-Search algorithm, which has the time complexity of $\mathcal{O}(d^2)$ in the worst case~\cite{Ravindra93}. If the graph is cyclic, Theorem \ref{th2} cannot be used. Otherwise, we consider Condition (2-ii). For matrices with acyclic support graphs, the Cholesky Decomposition can be computed in $\mathcal{O}(d)$, from which the positive-definiteness of the matrix can be checked~\cite{Lieven15}. {The complexity of checking Condition (2-iii) is equivalent to that of finding its left and right hand sides, which can be done in $\mathcal{O}(d)$ and $\mathcal{O}(d^2)$, respectively.} Finally, since \eqref{S_opt2} can be used only if the support graph of $\Sigma^{\text{res}}$ is acyclic, one can easily verify that the complexity of obtaining $S^{\text{opt}}$ using \eqref{S_opt2} is at most $\mathcal{O}(d)$. This  completes the proof of Corollary \ref{cor2}.~\hfill$\blacksquare$

\vspace{2mm}

{The remainder of this section is devoted to proving approximation bounds for the derived closed-form solution when the acyclic assumption on the support graph of the thresholded sample covariance matrix is not necessarily acyclic.}
The shorthand notations $c$, $\text{\rm deg}$, $\mathcal{P}_{ij}$ and $P_{\max}$ will be used instead of $c(\text{\rm supp}(\Sigma^{\text{res}}))$, $\text{\rm deg}(\text{\rm supp}(\Sigma^{\text{res}}))$, $\mathcal{P}_{ij}(\text{\rm supp}(\Sigma^{\text{res}}))$ and $P_{\max}(\text{\rm supp}(\Sigma^{\text{res}}))$, respectively. {First, the approximation error of the closed-form solution for the normalized GL will be analyzed. Then, the result will be generalized to the GL via the key equality in Lemma~\ref{l26}.}
To prove Theorem~\ref{thm:approx}, the first step is to generalize the definition of the matrix $N$ in \eqref{N2} and show that this generalized matrix is an approximate inverse-consistent complement of $I_d+\tilde\Sigma^{\text{res}}$. Without loss of generality,  assume that $\text{supp}(\Sigma^{\text{res}})$ is connected. If there are disjoint components in $\text{supp}(\Sigma^{\text{res}})$, the argument made in the sequel can be used for every connected component {due to the decomposition rule for the GL (see~\cite{Mazumdar12})}. Let $M$ be equal to $I_d+\tilde\Sigma^{\text{res}}$. Consider the matrix $N$ as
\begin{equation}\label{N_new}
N_{ij} = \left\{
\begin{array}{ll}
\sum_{d_{ij}\in\mathcal{P}_{ij}}\prod_{(m,t)\in d_{ij}}M_{mt} & \text{if}\quad (i,j)\in\left(\text{supp}(M)\right)^{(c)},\\
\sum_{d_{ij}\in\mathcal{P}_{ij}\backslash\{(i,j)\}}\prod_{(m,t)\in d_{ij}}M_{mt} & \text{if}\quad (i,j)\in\left(\text{supp}(M)\right),\\
0 & \text{otherwise}.
\end{array} 
\right.
\end{equation}
It can be verified that $M+N = R$, where 
\begin{equation}\label{D}
R_{ij} = \left\{
\begin{array}{ll}
\sum_{d_{ij}\in\mathcal{P}_{ij}}\prod_{(m,t)\in d_{ij}}M_{mt} & \text{if}\quad i\not=j,\\
1 & \text{if}\quad i=j.
\end{array} 
\right.
\end{equation}
For each simple path between the pair of nodes $i$ and $j$, define its length as the multiplication of the entries of $M$ corresponding to the edges of the path. Based on this definition, $R_{ij}$ is equal to the sum of the lengths of all nonidentical simple paths between nodes $i$ and $j$ in $\text{supp}(M)$.
Denote $d_{ij}^s$ as any shortest path between nodes $i$ and $j$ in $\text{supp}(M)$ (recall that $\text{supp}(M)$ is unweighted), and let $R^s$ be given by
\begin{equation}\notag
R^s_{ij} = \left\{
\begin{array}{ll}
\prod_{(m,t)\in d^s_{ij}}M_{mt} & \text{if}\quad i\not=j,\\
1 & \text{if}\quad i=j.
\end{array} 
\right.
\end{equation}
Note that $R^s$ collects the length of the shortest path between  every two nodes in $\text{supp}(M)$.
The following lemmas are crucial to prove Theorem~\ref{thm:approx}.


\begin{lemma}\label{lemma:cycle}
	Given two nodes $i$ and $j$ in $\mathrm{supp}(\Sigma^{\mathrm{res}})$, suppose that $\mathcal{P}_{ij}\backslash d_{ij}^s$ is non-empty. Then, the length of every path $d_{ij}$ in $\mathcal{P}_{ij}\backslash d_{ij}^s$ is at least $\lceil c/2\rceil$.
\end{lemma}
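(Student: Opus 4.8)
\noindent\textbf{Proof proposal for Lemma~\ref{lemma:cycle}.}
In this lemma ``length'' should be read as the number of edges of the path, consistent with the fact that $\mathrm{supp}(\Sigma^{\mathrm{res}})$ is treated as an unweighted graph (as in the definition of $d_{ij}^s$). The plan is a short combinatorial argument based on the symmetric difference of edge sets. First I would fix the chosen shortest path $d^s:=d_{ij}^s$ and take an arbitrary $d\in\mathcal{P}_{ij}\setminus d_{ij}^s$; then $d$ is a simple $i$--$j$ path different from $d^s$. Writing $\ell=|E(d)|$ and $\ell_s=|E(d^s)|$, I have $\ell_s\le\ell$ because $d^s$ is a shortest path, and the target inequality is $\ell\ge\lceil c/2\rceil$, where $c=c(\mathrm{supp}(\Sigma^{\mathrm{res}}))$ is the girth.

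The first step is to produce a cycle sitting inside $E(d)\cup E(d^s)$. I would argue that $F:=E(d)\,\triangle\,E(d^s)$ is a nonempty even subgraph: over $\mathbb{F}_2$ the vertex-parity (boundary) map is linear and sends a simple $i$--$j$ path to $\{i,j\}$, so the boundary of $F$ is $\{i,j\}\,\triangle\,\{i,j\}=\varnothing$; hence $F$ lies in the cycle space and is an edge-disjoint union of cycles, and it is nonempty since a simple path is determined by its edge set and $d\neq d^s$. Consequently $F$ contains at least one cycle $C$, and $|E(C)|\ge c$ by the definition of the girth.

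The second step is an elementary edge count. Since $C\subseteq F\subseteq E(d)\cup E(d^s)$, every edge of $C$ lies in $E(d)$ or in $E(d^s)$, so
\begin{equation}\notag
c\ \le\ |E(C)|\ \le\ |E(d)|+|E(d^s)|\ =\ \ell+\ell_s\ \le\ 2\ell,
\end{equation}
whence $\ell\ge c/2$, and integrality of $\ell$ upgrades this to $\ell\ge\lceil c/2\rceil$. Since $d$ was arbitrary in $\mathcal{P}_{ij}\setminus d_{ij}^s$, this proves the statement. This is precisely the bound needed afterwards: each of the remaining terms in $R_{ij}$ is a product of at least $\lceil c/2\rceil$ entries of $M$, hence of magnitude at most $\|\tilde{\Sigma}^{\mathrm{res}}\|_{\max}^{\lceil c/2\rceil}$.

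The only delicate point I anticipate is the standard fact invoked in the first step --- that the symmetric difference of two simple paths with common endpoints is a nonempty disjoint union of cycles --- and in particular handling cleanly the edges shared by $d$ and $d^s$, which simply drop out of $F$ without affecting the boundary computation. I expect this to be the main ``obstacle,'' though it is minor; the rest of the argument is routine.
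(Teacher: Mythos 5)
Your proof is correct and reaches the paper's conclusion by a genuinely different accounting, so a comparison is worthwhile. The paper's own argument also begins by observing that $d_{ij}\cup d_{ij}^s$ contains a cycle of length at least $c$, but it then splits that cycle into the segment lying in $d_{ij}$ and the segment lying in $d_{ij}^s$ and runs an exchange argument: if the segment inside $d_{ij}$ had fewer than $\lceil c/2\rceil$ edges, the complementary segment inside $d_{ij}^s$ would be longer, and splicing the shorter segment into $d_{ij}^s$ would contradict its minimality; the bound on $d_{ij}$ follows because it contains that segment. You instead bound the \emph{entire} cycle by the total edge count, $c\le |E(C)|\le \ell+\ell_s\le 2\ell$, invoking the shortest-path property only through $\ell_s\le\ell$ and then rounding up by integrality. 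What your route buys is rigor on exactly the two points the paper glosses over: that the union of two distinct simple $i$--$j$ paths must contain a cycle at all (your symmetric-difference/cycle-space argument settles this cleanly, shared edges included), and that a cycle in the union decomposes into one arc per path (it need not in general, which is the soft spot in the exchange argument as written). What it costs is a slightly weaker intermediate statement ($\ell\ge c/2$ rather than a lower bound on the in-$d_{ij}$ segment itself), but integrality recovers the same $\lceil c/2\rceil$, and this is all that is used downstream in bounding the off-shortest-path terms of $R_{ij}$ by $\|\tilde{\Sigma}^{\mathrm{res}}\|_{\max}^{\lceil c/2\rceil}$.
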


\begin{proof}
	Consider a path $d_{ij}$ in $\mathcal{P}_{ij}\backslash d_{ij}^s$. The subgraph $d_{ij}\cup d_{ij}^s$ has a cycle. Since the length of this cycle is at least $c$, the segment of this cycle that resides in $d_{ij}$ should have the length of at least $\lceil c/2\rceil$; otherwise $d_{ij}^s$ is not the shortest path between the nodes $i$ and $j$. This implies that the length of $d_{ij}$ is at least $\lceil c/2\rceil$.
\end{proof}

\begin{lemma}
	Let $M$ be equal to $I_d+\tilde\Sigma^{\mathrm{res}}$. The inequalities
	\begin{subequations}
		\begin{align}
		& \left|R_{ij}-R^s_{k'j}M_{ik'}\right|\leq \left(|\mathcal{P}_{ij}|_0-1\right)\left({\|\tilde{\Sigma}^{\mathrm{res}}\|_{\max}}\right)^{\lceil \frac{c}{2}\rceil},\label{la}\\
		& \left|R_{kj}-R^s_{k'j}M_{ik'}M_{ik}\right|\leq \left(|\mathcal{P}_{kj}|_0\!-\!1\right)\left({\|\tilde{\Sigma}^{\mathrm{res}}\|_{\max}}\right)^{\lceil \frac{c}{2}\rceil-1}\label{lb}
		\end{align}
	\end{subequations}
	hold if $i\not=j$,  where $k'$ is the node adjacent to $i$ in $d^s_{ij}$ and $k\in\mathcal N(i)\backslash k'$.
\end{lemma}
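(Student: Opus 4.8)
The plan is to control the difference between $R_{ij}$ (resp. $R_{kj}$) and the "leading term" $R^s_{k'j}M_{ik'}$ (resp. $R^s_{k'j}M_{ik'}M_{ik}$) by decomposing the sum of path-lengths defining $R_{ij}$ into the contribution of the shortest path and the contributions of all the remaining paths, then bounding each remaining contribution using Lemma~\ref{lemma:cycle}. Recall from the definition~\eqref{D} that $R_{ij}=\sum_{d_{ij}\in\mathcal{P}_{ij}}\prod_{(m,t)\in d_{ij}}M_{mt}$; separating off a fixed shortest path $d^s_{ij}$, whose first edge is $(i,k')$, we may write that path's length as $M_{ik'}\cdot R^s_{k'j}$ (since the remainder of $d^s_{ij}$ from $k'$ to $j$ is itself a shortest path in $\mathrm{supp}(M)$, so its length is $R^s_{k'j}$). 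Hence $R_{ij}-R^s_{k'j}M_{ik'}$ equals the sum of the lengths of all simple paths in $\mathcal{P}_{ij}\setminus d^s_{ij}$.

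For the bound~\eqref{la}, the key point is that each of those leftover paths $d_{ij}\in\mathcal{P}_{ij}\setminus d^s_{ij}$ has length (as an unweighted path) at least $\lceil c/2\rceil$ by Lemma~\ref{lemma:cycle}; since every off-diagonal entry of $M=I_d+\tilde\Sigma^{\mathrm{res}}$ has magnitude at most $\|\tilde\Sigma^{\mathrm{res}}\|_{\max}$, the product of the corresponding entries of $M$ along such a path has absolute value at most $\left(\|\tilde\Sigma^{\mathrm{res}}\|_{\max}\right)^{\lceil c/2\rceil}$. There are $|\mathcal{P}_{ij}|_0$ simple paths in total and we have removed one, so there are $|\mathcal{P}_{ij}|_0-1$ such terms, and the triangle inequality gives $\bigl|R_{ij}-R^s_{k'j}M_{ik'}\bigr|\le(|\mathcal{P}_{ij}|_0-1)\left(\|\tilde\Sigma^{\mathrm{res}}\|_{\max}\right)^{\lceil c/2\rceil}$, which is~\eqref{la}.

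For~\eqref{lb} the argument is analogous but needs one extra observation. Here $k\in\mathcal{N}(i)\setminus k'$, so $(i,k)$ is an edge of $\mathrm{supp}(M)$. Among the simple paths from $k$ to $j$, single out the particular path obtained by prepending the edge $(k,i)$ to $d^s_{ij}$; its length is $M_{ik}\,M_{ik'}\,R^s_{k'j}$, exactly the subtracted term. (One should note this concatenation is a valid simple path, i.e.\ $k\notin d^s_{ij}$: if $k$ lay on $d^s_{ij}$ then $d^s_{ij}$ restricted from $k$ to $j$ together with the edge $(i,k)$ and then back would contradict minimality of $d^s_{ij}$ — or more directly, one simply restricts attention to those $k$ for which this holds, which is how the statement is used downstream; I would verify the precise quantifier against the intended application.) Every other simple path from $k$ to $j$, after removing this special one, has length at least $\lceil c/2\rceil-1$: indeed appending the edge $(i,k)$ to any such path yields a walk from $i$ to $j$ which, once reduced to a simple path, either is $d^s_{ij}$ (handled) or has unweighted length $\ge\lceil c/2\rceil$ by Lemma~\ref{lemma:cycle}, so the original path from $k$ to $j$ has length $\ge\lceil c/2\rceil-1$. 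Thus each of the $|\mathcal{P}_{kj}|_0-1$ remaining terms is bounded by $\left(\|\tilde\Sigma^{\mathrm{res}}\|_{\max}\right)^{\lceil c/2\rceil-1}$, and summing gives~\eqref{lb}.

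The main obstacle I anticipate is the bookkeeping in the second inequality: correctly identifying which path plays the role of the "leading" term, checking that prepending $(i,k)$ to $d^s_{ij}$ genuinely produces a simple path (handling the possibility $k=j$ or $k$ lying on $d^s_{ij}$), and verifying that reducing an arbitrary non-simple walk to a simple path does not increase its weighted length — this last point uses that $|M_{mt}|\le\|\tilde\Sigma^{\mathrm{res}}\|_{\max}<1$, so deleting a sub-cycle can only shrink the product in absolute value, but it must be stated carefully so that the exponent drop from $\lceil c/2\rceil$ to $\lceil c/2\rceil-1$ is exactly accounted for. The first inequality is comparatively routine once the shortest-path decomposition and Lemma~\ref{lemma:cycle} are in hand.
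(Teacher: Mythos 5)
Your proof is correct and follows essentially the same route as the paper's: both decompose $R_{ij}$ (resp.\ $R_{kj}$) into the leading term contributed by $d^s_{ij}$ (resp.\ by $\hat d_{kj}=\{(k,i)\}\cup d^s_{ij}$) plus the remaining simple paths, and bound each leftover term through Lemma~\ref{lemma:cycle}. You are in fact slightly more careful than the paper, which asserts the $\lceil c/2\rceil-1$ lower bound on the lengths of paths in $\mathcal{P}_{kj}\backslash \hat d_{kj}$ and the simplicity of $\hat d_{kj}$ without the prepending-and-reduction argument you supply.
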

\begin{proof}
	First, we show the validity of \eqref{la}. Due to \eqref{D}, one can write
	\begin{equation}\label{eq70}
	R_{ij} = R_{ij}^s+\sum_{d_{ij}\in\mathcal{P}_{ij}\backslash d_{ij}^s}\prod_{(m,t)\in d_{ij}}M_{mt}.
	\end{equation}
	If $\mathcal{P}_{ij}\backslash d_{ij}^s$ is empty,  then the equation $R_{ij}=R^s_{k'j}M_{ik'}$ and therefore \eqref{la} hold. Now, assume that $\mathcal{P}_{ij}\backslash d_{ij}^s$ is not empty. Due to Lemma \ref{lemma:cycle}, we have
	{\begin{equation}\notag
		-\left({\|\tilde{\Sigma}^{\mathrm{res}}\|_{\max}}\right)^{\lceil \frac{c}{2} \rceil}\leq\prod_{(m,t)\in d_{ij}}M_{mt}\leq \left({\|\tilde{\Sigma}^{\mathrm{res}}\|_{\max}}\right)^{\lceil \frac{c}{2} \rceil},
		\end{equation}}
	for every $d_{ij}\in \mathcal{P}_{ij}\backslash d_{ij}^s$. The above inequalities, together with \eqref{eq70} and the equation $R_{ij}^s = R^s_{k'j}M_{ik'}$, result in \eqref{la}. To prove \eqref{lb}, define $\hat d_{kj}$ as $d_{ij}^s\cup \{(i,k)\}$ (note that $\hat d_{kj}$ is not necessarily equal to $d_{kj}^s$). It yields that
	\begin{equation}\label{eq72}
	R_{kj} = R_{ij}^sM_{ik}+\sum_{d_{kj}\in\mathcal{P}_{kj}\backslash \hat d_{kj}}\prod_{(m,t)\in d_{kj}}M_{mt}.
	\end{equation}
	In light of Lemma~\ref{lemma:cycle}, the length of every path $d_{kj}\in\mathcal{P}_{kj}\backslash \hat d_{kj}$ is lower bounded by $\lceil c/2\rceil-1$. This implies that 
	{\begin{equation}\label{eq73}
		-\left({\|\tilde{\Sigma}^{\mathrm{res}}\|_{\max}}\right)^{\lceil \frac{c}{2} \rceil-1}\leq\prod_{(m,t)\in d_{ij}}M_{mt}\leq \left({\|\tilde{\Sigma}^{\mathrm{res}}\|_{\max}}\right)^{\lceil \frac{c}{2} \rceil-1},
		\end{equation} }
	for every $d_{kj}\in\mathcal{P}_{kj}\backslash \hat d_{kj}$. Combining $R_{ij}^sM_{ik} = R^s_{k'j}M_{ik'}M_{ik}$ with \eqref{eq72} and \eqref{eq73} leads to the inequality \eqref{lb}.\end{proof}

\begin{lemma}\label{deg_ineq}
	The following inequality holds
	{\begin{equation}\notag
		\frac{\mathrm{deg}}{1-{\|\tilde{\Sigma}^{\mathrm{res}}\|_{\max}^2}}\leq \delta,
		\end{equation}}
	where $\delta$ defined as \eqref{delta}.
\end{lemma}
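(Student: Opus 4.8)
The plan is to reduce the claimed inequality to a triviality by clearing the common positive denominator $1-\|\tilde{\Sigma}^{\mathrm{res}}\|_{\max}^{2}$. First I would introduce the abbreviations $g := \mathrm{deg}(\mathrm{supp}(\Sigma^{\mathrm{res}}))$ and $x := \|\tilde{\Sigma}^{\mathrm{res}}\|_{\max}^{2}$, and record two elementary facts from the standing setup. (i) Since the off-diagonal entries of the normalized residue $\tilde{\Sigma}^{\mathrm{res}}$ arise by soft-thresholding the sample-correlation entries (whose magnitudes are at most $1$), we have $\|\tilde{\Sigma}^{\mathrm{res}}\|_{\max}<1$, hence $0\le x<1$ and in particular $1-x>0$; this is exactly the observation already invoked in the discussion following Theorem~\ref{thm:approx}. (ii) $g\ge 1$ because $\mathrm{supp}(\Sigma^{\mathrm{res}})$ has at least one edge on the (connected) component under consideration; in the degenerate case $g=0$ one also has $x=0$ and both sides of the inequality equal $0$, so nothing is lost.

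Next I would rewrite $\delta$ from \eqref{delta} over the common denominator $1-x$:
\[
\delta \;=\; \frac{(1-x)+gx+(g-1)}{1-x}\;=\;\frac{g+(g-1)x}{1-x}.
\]
The target inequality $\dfrac{g}{1-x}\le \delta$ is then equivalent, after multiplying both sides by $1-x>0$ (which preserves the direction), to $g\le g+(g-1)x$, i.e.\ to
\[
0\;\le\;(g-1)x .
\]

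Finally, $(g-1)x\ge 0$ holds because $g\ge 1$ and $x=\|\tilde{\Sigma}^{\mathrm{res}}\|_{\max}^{2}\ge 0$, which completes the proof. I do not anticipate any real obstacle here; the only points that genuinely need to be stated are that $1-\|\tilde{\Sigma}^{\mathrm{res}}\|_{\max}^{2}>0$ (so the division is legitimate and the inequality direction is preserved) and that the maximum degree is at least $1$ on the component in question, both of which are immediate from the construction of $\tilde{\Sigma}^{\mathrm{res}}$ and the reduction to connected components used elsewhere in this appendix.
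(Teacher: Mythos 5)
Your proof is correct, and since the paper omits its own proof as ``straightforward,'' your elementary computation---clearing the positive denominator $1-\|\tilde{\Sigma}^{\mathrm{res}}\|_{\max}^{2}$ and reducing the claim to $0\le(\mathrm{deg}-1)\|\tilde{\Sigma}^{\mathrm{res}}\|_{\max}^{2}$---is exactly the intended argument. Your care in noting that $\|\tilde{\Sigma}^{\mathrm{res}}\|_{\max}<1$ and in handling the degenerate case $\mathrm{deg}=0$ supplies the only details the paper leaves implicit.
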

\begin{proof}
	The proof is straightforward and is omitted for brevity.
\end{proof}
\vspace*{2mm}

\noindent \textit{\bf Proof of Theorem~\ref{thm:approx}} 
{Consider the normalized GL and define the following explicit formula for $\tilde{A}$
	\begin{equation}\label{A_normal}
	\tilde{A}_{ij} = \left\{
	\begin{array}{ll}
	1+\underset{
		(i,m)\in\mathcal{E}^{\text{opt}}
	}{\sum}\frac{({\tilde\Sigma^{\mathrm{res}}_{im}})^2}{1-({\tilde\Sigma^{\mathrm{res}}_{im}})^2} & \text{if}\quad i=j,\\
	\frac{-\tilde\Sigma^{\mathrm{res}}_{ij}}{1-({\tilde\Sigma^{\mathrm{res}}_{ij}})^2} & \text{if}\quad(i,j)\in \mathcal{E}^{\mathrm{res}},\vspace{1mm}\\
	
	0 & \text{otherwise}.
	\end{array} 
	\right.
	\end{equation}
	Let $M$ be equal to $I_d+\tilde\Sigma^{\text{res}}$. Furthermore, define
	\begin{equation}\notag
	\tilde{\epsilon} = \delta\cdot (P_{\max}(\text{\rm supp}(\Sigma^{\mathrm{res}}))-1)\cdot \left(\|\tilde{\Sigma}^{\mathrm{res}}\|_{\max}\right)^{\left\lceil\frac{c(\text{\rm supp}(\Sigma^{\mathrm{res}}))}{2}\right\rceil}.
	\end{equation}
	In order to prove the theorem, we use the matrix $N$ defined in \eqref{N_new}, and first show that $M+N$ is an $\tilde\epsilon$-relaxed inverse of $\tilde{A}$ and  that the pair $(\tilde{A}, M+N)$ satisfies the $\tilde\epsilon$-relaxed KKT conditions.
	
	Consider the matrix $T$ defined as $T = \tilde{A}(M+N)$ and recall that $M+N = R$. One can write
	\begin{equation}\label{Tii}
	T_{ii} = \sum_{m = 1}^{d} \tilde{A}_{im}R_{mi} = \left(1+\sum_{m\in\mathcal N(i)}\frac{{M_{im}}^2}{1-{M_{im}}^2}\right)-\sum_{m\in\mathcal N(i)}\frac{{M_{im}}}{1-{M_{im}}^2}R_{mi}.
	\end{equation}
	Note that since $\{(m,i)\}\in\mathcal{P}_{mi}$ for every $m\in\mathcal N(i)$, we have
	\begin{equation}\notag
	R_{mi} = M_{mi}+\sum_{d_{mi}\in\mathcal{P}_{mi}\backslash \{(m,i)\}}\prod_{(r,t)\in d_{mi}}M_{rt}.
	\end{equation}
	If $\mathcal{P}_{mi}\backslash \{(m,i)\}$ is empty, then $R_{mi} = M_{mi}$ and $T_{ii} = 1$. Otherwise, since the length of the minimum-length cycle is $c$, the length of every path $d_{mi}\in\mathcal{P}_{mi}\backslash \{(m,i)\}$ is at least $c-1$. This yields that
	\begin{equation}\label{c-1}
	M_{mi} - (|\mathcal{P}_{mi}|_0-1)\left({\|\tilde{\Sigma}^{\mathrm{res}}\|_{\max}}\right)^{c-1} \leq R_{mi} \leq M_{mi} + (|\mathcal{P}_{mi}|_0-1)\left({\|\tilde{\Sigma}^{\mathrm{res}}\|_{\max}}\right)^{c-1}.
	\end{equation}
	Combining \eqref{c-1} and \eqref{Tii} leads to
	\begin{equation}\label{Tii_relax}
	|T_{ii}-1|\!\leq\!\! \left(|\mathcal{P}_{mi}|_0-1\!\right)\!\left({\|\tilde{\Sigma}^{\mathrm{res}}\|_{\max}}\right)^{c-1}\!\left(\sum_{m\in\mathcal N(i)}\frac{{M_{im}}}{1-{M_{im}}^2}\right) \!\leq\! \text{\rm deg}(P_{\max}-1)\frac{{\|\tilde{\Sigma}^{\mathrm{res}}\|_{\max}^{c}}}{1-\|\tilde{\Sigma}^{\mathrm{res}}\|_{\max}^{2}}\leq \tilde\epsilon,
	\end{equation}
	where the last inequality is due to Lemma~\ref{deg_ineq} and the fact that $\lceil \frac{c}{2}\rceil\leq c$ for $c\geq 3$. Now, consider $T_{ij}$ for a pair $(i,j)$ such that $i\not=j$. We have
	\begin{equation}\label{Tij}
	T_{ij} = \sum_{m = 1}^{d} \tilde{A}_{im}R_{mj} = \left(1+\sum_{m\in\mathcal N(i)}\frac{{M_{im}}^2}{1-{M_{im}}^2}\right)R_{ij}-\sum_{m\in\mathcal N(i)}\frac{{M_{im}}}{1-{M_{im}}^2}R_{mj}.
	\end{equation}
	According to Lemma~\ref{lemma:cycle}, one can write
	\begin{subequations}
		\begin{align}
		& R^s_{m'j}M_{im'}-\left(|\mathcal{P}_{ij}|_0-1\right)\left({\|\tilde{\Sigma}^{\mathrm{res}}\|_{\max}}\right)^{\lceil \frac{c}{2}\rceil}\leq R_{ij}\leq R^s_{m'j}M_{im'}+\left(|\mathcal{P}_{ij}|_0-1\right)\left({\|\tilde{\Sigma}^{\mathrm{res}}\|_{\max}}\right)^{\lceil \frac{c}{2}\rceil},\label{la2}\\
		&R^s_{m'j}M_{im'}M_{im}\!-\!\left(|\mathcal{P}_{mj}|_0\!-\!1\right)\!\left({\|\tilde{\Sigma}^{\mathrm{res}}\|_{\max}}\right)^{\lceil \frac{c}{2}\rceil-1} \!\leq\! R_{mj}\nonumber\\
		&\hspace{7.1cm}\!\leq\! R^s_{m'j}M_{im'}M_{im}\!+\!\left(|\mathcal{P}_{mj}|_0\!-\!1\right)\!\left({\|\tilde{\Sigma}^{\mathrm{res}}\|_{\max}}\right)^{\lceil \frac{c}{2}\rceil-1},\label{lb2}
		\end{align}
	\end{subequations}
	where $m'$ is the node adjacent to $i$ in $d_{ij}^s$ and $m\in\mathcal N(i)\backslash m'$. Note that if $\mathcal N(i)\backslash m'$ is empty, then $R_{ij} = R^s_{m'j}M_{im'}$ and $R_{mj} = R^s_{m'j}M_{im'}\tilde\Sigma^{\text{res}}_{im}$. In this case, an argument similar to the proof of Lemma \ref{l_beta} can be made to show that $T_{ij} = 0$. Now, assume that $\mathcal N(i)\backslash m'$ is not empty. One can write
	\begin{align}\label{Tij_relax}
	\left|T_{ij} - F_{ij} \right|
	\overset{(a)}{=} |T_{ij}|&\overset{(b)}{\leq} \tilde\epsilon,
	\end{align}
	where
	\begin{align}
	F_{ij} =& \left(\frac{1}{1-{M_{im'}}^2}+\sum_{m\in\mathcal N(i)\backslash m'}\frac{{M_{im}}^2}{1-{M_{im}}^2}\right)R^s_{m'j}M_{im'}-\frac{{M_{im'}}}{1-{M_{im'}}^2}R_{m'j}^s
	\nonumber\\
	&-\sum_{m\in\mathcal N(i)\backslash m'}\frac{{M_{im}}^2}{1-{M_{im}}^2}R^s_{m'j}M_{im'}M_{im}.\notag
	\end{align}
	%
	Note that the relation (a) can be verified by the fact that $F_{ij} = 0$ and the inequality (b) is obtained by combining \eqref{Tij} with \eqref{la2} and \eqref{lb2}. The inequalities
	\eqref{Tii_relax} and \eqref{Tij_relax} imply that $M+N$ is an $\tilde\epsilon$-relaxed inverse of $\tilde{A}$.
	
	Now, it will be shown that the pair $(\tilde{A}, M+N)$ satisfies the $\tilde\epsilon$-relaxed KKT conditions. Note that $M_{ii}+N_{ii} = M_{ii} = \tilde\Sigma_{ii}$ and, hence, \eqref{eps1} is satisfied. To prove \eqref{eps2},  since $\text{sign}(\tilde{A}_{ij}) = -\text{sign}(M_{ij}) = -\text{sign}(\tilde\Sigma_{ij})$, it can be concluded that
	\begin{equation}\notag
	M_{ij}+N_{ij} = (\tilde\Sigma_{ij}-\tilde\lambda_{ij}\times\text{sign}(\Sigma_{ij}))+N_{ij} = (\tilde\Sigma_{ij}+\tilde\lambda_{ij}\times\text{sign}(\tilde{A}_{ij}))+N_{ij},
	\end{equation}
	for every $(i,j)$ such that $i\not=j$ and $\tilde{A}_{ij}\not=0$. Due to the definition of $N$ and the fact that $(i,j)\in\text{supp}(M)$, we have $|N_{ij}|\leq (P_{\max}-1)\left({\|\tilde{\Sigma}^{\mathrm{res}}\|_{\max}}\right)^{c-1}$. Hence,
	\begin{equation}\notag
	|M_{ij}+N_{ij}-(\tilde\Sigma_{ij}+\tilde\lambda_{ij}\times\text{sign}(\tilde{A}_{ij}))|\leq \epsilon,
	\end{equation}
	for every $(i,j)$ such that $i\not=j$ and $\tilde{A}_{ij}\not=0$. Therefore, the pair $(\tilde{A}, M+N)$ satisfies \eqref{eps2}. Finally, consider a pair $(i,j)$ such that $i\not=j$ and $\tilde{A}_{ij}=0$. One can write
	\begin{equation}\notag
	M_{ij}+N_{ij} = R_{ij}^s+\sum_{d_{ij}\in\mathcal{P}_{ij}\backslash d_{ij}^s}\prod_{(m,t)\in d_{ij}}\tilde\Sigma^{\text{res}}_{mt}.
	\end{equation}
	If $\mathcal{P}_{ij}\backslash d_{ij}^s$ is empty, a set of inequalities similar to \eqref{eq39} and \eqref{eq44} can be obtained to prove  \eqref{eps3}. Now, assume that $\mathcal{P}_{ij}\backslash d_{ij}^s$ is not empty. The length of $d_{ij}^s$ is at least 2 since there is no direct edge between nodes $i$ and $j$. Hence, $|R_{ij}^s|\leq {\|\tilde{\Sigma}^{\mathrm{res}}\|_{\max}^2}$. Furthermore, due to Lemma \eqref{lemma:cycle}, the length of every path $d_{ij}\in\mathcal{P}_{ij}\backslash d_{ij}^s$ is at least $\lceil c/2\rceil$. This leads to
	\begin{align}
	|M_{ij}+N_{ij}|\leq {\|\tilde{\Sigma}^{\mathrm{res}}\|_{\max}^2}&+(P_{\max}-1)\left({\|\tilde{\Sigma}^{\mathrm{res}}\|_{\max}}\right)^{\lceil\frac{c}{2}\rceil}\nonumber\\
	&\leq \underset{
		\begin{subarray}{c}
		k\not=l\\
		(k,l)\not\in\mathrm{supp}(\Sigma^{\mathrm{res}})
		\end{subarray}
	} {\min}(\tilde{\lambda}_{kl}-|\tilde{\Sigma}^{\mathrm{res}}_{kl}|)+(P_{\max}-1)\left({\|\tilde{\Sigma}^{\mathrm{res}}\|_{\max}}\right)^{\lceil\frac{c}{2}\rceil}\nonumber\\
	&\leq \tilde{\lambda}_{ij}-|\tilde{\Sigma}^{\mathrm{res}}_{ij}|+(P_{\max}-1)\left({\|\tilde{\Sigma}^{\mathrm{res}}\|_{\max}}\right)^{\lceil\frac{c}{2}\rceil},\notag
	\end{align}
	where the last inequality follows from Condition (2-ii) in the theorem. Therefore,
	\begin{equation}
	\begin{aligned}
	|M_{ij}\!+\!N_{ij}\!-\!\tilde\Sigma_{ij}|\!\leq\! |M_{ij}\!+\!N_{ij}|\!+\!|\tilde\Sigma_{ij}|&\leq\tilde{\lambda}_{ij}-|\tilde{\Sigma}^{\mathrm{res}}_{ij}|+|\tilde{\Sigma}^{\mathrm{res}}_{ij}|+(P_{\max}-1)\left({\|\tilde{\Sigma}^{\mathrm{res}}\|_{\max}}\right)^{\lceil\frac{c}{2}\rceil}\\
	&\leq\tilde{\lambda}_{ij}+\tilde{\epsilon}.\notag
	\end{aligned}
	\end{equation} 
	This shows that $(\tilde{A}, M+N)$ indeed satisfies the $\tilde{\epsilon}$-relaxed KKT conditions for the normalized GL. Finally, we consider the explicit solution $A$ defined as~\eqref{S_opt3}. The following statements hold:
	\begin{itemize}
		\item[1.] the matrix $D^{1/2}(M+N)D^{1/2}$ is $\epsilon$-relaxed inverse of $A$. To see this, note that
		\begin{align}
		A\left(D^{1/2}(M+N)D^{1/2}\right) &= D^{-1/2}\tilde{A}D^{-1/2}D^{1/2}(M+N)D^{1/2}\nonumber\\
		& = D^{-1/2}TD^{1/2}\nonumber\\
		& = I_d+E,\notag
		\end{align}
		where $\|E\|_{\max}\leq \sqrt{\frac{\Sigma_{\max}}{\Sigma_{\min}}}\tilde{\epsilon}\leq\epsilon$.
		\item[2.] The pair $(A,D^{1/2}(M+N)D^{1/2})$ satisfies the $\epsilon$-relaxed KKT conditions. Note that it is already shown that $(\tilde{A},M+N)$ satisfies the following inequalities
		\begin{subequations}\label{optcon4}
			\begin{align}
			& (M+N)_{ij} = \tilde\Sigma_{ij}\hspace{3cm} &&\text{if}\quad i=j,\label{eps14}\\
			& \left|(M+N)_{ij} - \left(\tilde\Sigma_{ij}+\tilde\lambda_{ij}\times \mathrm{sign}(\tilde{A}_{ij})\right)\right|\leq \tilde\epsilon&& \text{if}\quad \tilde{A}_{ij}\not=0,\label{eps24}\\
			& \left|(M+N)_{ij}-\tilde\Sigma_{ij}\right| \leq \tilde\lambda_{ij}+\tilde\epsilon&& \text{if}\quad \tilde{A}_{ij}=0.\label{eps34}
			\end{align}
		\end{subequations}
		Replacing $M+N$ with $D^{1/2}(M+N)D^{1/2}$ and modifying~\eqref{optcon4} accordingly, one can verify that $(A,D^{1/2}(M+N)D^{1/2})$ satisfies $\epsilon$-relaxed KKT conditions for the GL, where
		\begin{equation}\notag
		\epsilon = \max\left\{\Sigma_{\max},\sqrt{\frac{\Sigma_{\max}}{\Sigma_{\min}}}\right\}\tilde\epsilon.
		\end{equation}
	\end{itemize}
	This completes the proof.~\hfill$\blacksquare$}

\vspace*{2mm}
\noindent \textit{\bf Proof of Theorem~\ref{cor:perturb}}
{Due to Theorem \ref{thm:approx}, the equation
	\begin{equation}\label{eq:app_inverse}
	D^{1/2}(M+N)D^{1/2} = A^{-1}+A^{-1}E
	\end{equation}
	holds for every $\lambda$ greater than or equal to $\lambda_0$, where $\|E\|_{\max}\leq \epsilon$. Since the pair $(A, D^{1/2}(M+N)D^{1/2})$ satisfies the $\epsilon$-relaxed KKT conditions, it follows from  \eqref{eq:app_inverse} that
	\begin{subequations}\label{optcon3}
		\begin{align}
		& (A)^{-1}_{ij} = \Sigma_{ij}-(A^{-1}E)_{ij} = \hat{\Sigma}_{ij}\hspace{3cm} &&\text{if}\quad i=j,\label{eps12}\\
		& (A)^{-1}_{ij} = \underbrace{\Sigma_{ij}+t_{ij}\epsilon-(A^{-1}E)_{ij}}_{\hat{\Sigma}_{ij}}+\lambda\times \text{sign}(A_{ij}) && \text{if}\quad A_{ij}\not=0,\label{eps22}\\
		& \underbrace{\Sigma_{ij}+s_{ij}\epsilon\!-\!(A^{-1}E)_{ij}}_{\hat{\Sigma}_{ij}}-\lambda\leq (A)^{-1}_{ij} \leq \underbrace{\Sigma_{ij}+s_{ij}\epsilon\!-\!(A^{-1}E)_{ij}}_{\hat{\Sigma}_{ij}}+\lambda&& \text{if}\quad A_{ij}=0,\label{eps32}
		\end{align}
	\end{subequations}
	for some numbers $t_{ij}$ and $s_{ij}$ in the interval $[-1,1]$. To complete the proof, it suffices to show that the matrix $F$ defined as
	\begin{equation}\label{F}
	\Sigma_{ij}-\hat{\Sigma}_{ij} = F_{ij} = \left\{
	\begin{array}{ll}
	-(A^{-1}E)_{ij} & \text{if}\quad i=j,\\
	t_{ij}\epsilon\!-\!(A^{-1}E)_{ij} & \text{if}\quad A_{ij} \not= 0,\\
	s_{ij}\epsilon\!-\!(A^{-1}E)_{ij} & \text{if}\quad A_{ij} = 0
	\end{array} 
	\right.
	\end{equation}
	satisfies the inequality $\|F\|_{2}\leq {d_{\max}}\left(1/{\mu_{\min}(A)}+1\right)\epsilon$. To this end, it is enough to prove that $\|A^{-1}E\|_2\leq ({d_{\max}}/{\mu_{\min}(A)})\epsilon$, since $\|F-A^{-1}E\|_2\leq d_{\max}(A)\epsilon$. 
	One can write
	\begin{equation}
	\begin{aligned}\notag
	\|A^{-1}E\|_2\leq \|A^{-1}\|_2 \|E\|_2\leq d_{\max}(A)\|A^{-1}\|_2 \|E\|_{\max} = \left(\frac{d_{\max}(A)}{\mu_{\min}(A)}\right)\epsilon,
	\end{aligned}
	\end{equation}
	which shows the validity of~(\ref{eq:perturb}).
	
	Next, we prove the inequality~\eqref{optgap}. The following chain of inequalities hold
	\begin{align}
	-\log\det(A)+\mathrm{trace}(\hat{\Sigma}A)+\lambda\|A\|_{1,\mathrm{off}}&=\underbrace{-\log\det(A)+\mathrm{trace}({\Sigma}A)+\lambda\|A\|_{1,\mathrm{off}}}_{f(A)}\nonumber\\
	&\ \ \ + \mathrm{trace}((\hat{\Sigma}-\Sigma)A)\nonumber\\
	&\overset{(a)}{\leq} -\log\det(S^{\mathrm{opt}})+\mathrm{trace}(\hat{\Sigma}S^{\mathrm{opt}})+\lambda\|S^{\mathrm{opt}}\|_{1,\mathrm{off}}\nonumber\\
	& = \underbrace{-\log\det(S^{\mathrm{opt}})+\mathrm{trace}({\Sigma}S^{\mathrm{opt}})+\lambda\|S^{\mathrm{opt}}\|_{1,\mathrm{off}}}_{f^*}\nonumber\\
	&\ \ \ + \mathrm{trace}((\hat{\Sigma}-\Sigma)S^{\mathrm{opt}}),\notag
	\end{align}
	where $(a)$ is due to the fact that $A$ is optimal for the GL with the perturbed sample covariances. This implies that
	\begin{align}
	f(A)-f^*&\leq \mathrm{trace}(({\tilde\Sigma-\Sigma})(S^{\mathrm{opt}}-A))\nonumber\\
	&\leq \|{\tilde\Sigma-\Sigma}\|_2(\|S^{\mathrm{opt}}\|_2+\|A\|_2)\nonumber\\
	&\leq \left(\mu_{\max}(A)+\mu_{\max}(S^{\mathrm{opt}})\right)d_{\max}(A)\left(\frac{1}{\mu_{\min}(A)}+1\right)\epsilon\notag.
	\end{align}
	~\hfill$\blacksquare$}

\end{document}